\newenvironment{talign*}
 {\csname align*\endcsname}
 {\endalign}
\definecolor{mydarkblue}{rgb}{0,0.08,0.45}
\DeclarePairedDelimiterX{\infdivx}[2]{(}{)}{%
  #1\;\delimsize\|\;#2%
}
\def\Id{\mathrm{Id}}
\newcommand{\calH}{\mathcal{H}}
\newcommand{\calC}{\mathcal{C}}
\newcommand{\calN}{\mathcal{N}}
\newcommand{\calO}{\mathcal{O}}
\newcommand{\calL}{\mathcal{L}}
\newcommand{\calX}{\mathcal{X}}
\newcommand{\calY}{\mathcal{Y}}
\newcommand{\calW}{\mathcal{W}}
\newcommand{\x}{\mathbf{x}}
\renewcommand{\l}{{\ell}}
\newcommand{\R}{\mathbb{R}}
\newcommand{\Pb}{\mathbb{P}}
\newcommand{\Qb}{\mathbb{Q}}
\newcommand{\closer}[3]{{\kern-#1ex{#2}\kern-#3ex}}
\DeclareMathOperator{\diag}{diag}
\mathchardef\mhyphen="2D
\DeclareMathOperator{\E}{\mathbb{E}}
\newcommand\reallywidehat[1]{\arraycolsep=0pt\relax%
\begin{array}{c}
\stretchto{
  \scaleto{
    \scalerel*[\widthof{\ensuremath{#1}}]{\kern-.5pt\bigwedge\kern-.5pt}
    {\rule[-\textheight/2]{1ex}{\textheight}} %WIDTH-LIMITED BIG WEDGE
  }{\textheight} % 
}{0.5ex}\\           % THIS SQUEEZES THE WEDGE TO 0.5ex HEIGHT
#1\\                 % THIS STACKS THE WEDGE ATOP THE ARGUMENT
\rule{-1ex}{0ex}
\end{array}
}
\definecolor{azure}{rgb}{0.0, 0.5, 1.0}
\definecolor{airforceblue}{rgb}{0.36, 0.54, 0.66}
\definecolor{darkgreen}{rgb}{0.0, 0.2, 0.13}
\pgfplotsset{compat=1.14}
\definecolor{mediumgray}{gray}{0.7}
\definecolor{lightgray}{gray}{0.85}
\definecolor{lightlightgray}{gray}{0.9}
\definecolor{C1}{HTML}{1F77B4}
\definecolor{C2}{HTML}{FF7F0E}
\definecolor{C3}{HTML}{2CA02C}
\definecolor{C4}{HTML}{D62728}
\definecolor{C5}{HTML}{9467BD}
\colorlet{C1light}{C1!70!white}
\colorlet{C2light}{C2!70!white}
\colorlet{C3light}{C3!70!white}
\colorlet{C4light}{C4!70!white}
\colorlet{C5light}{C5!70!white}
\colorlet{C1lighter}{C1!50!white}
\colorlet{C2lighter}{C2!50!white}
\colorlet{C3lighter}{C3!50!white}
\colorlet{C4lighter}{C4!50!white}
\colorlet{C5lighter}{C5!50!white}
\colorlet{C1vlight}{C1!20!white}
\colorlet{C2vlight}{C2!20!white}
\colorlet{C3vlight}{C3!20!white}
\colorlet{C4vlight}{C4!20!white}
\colorlet{C5vlight}{C5!20!white}
\colorlet{linkcolor}{violet}
\newtheorem{cor}{Corollary}
\newtheorem{prop}{Proposition}
\crefname{enumi}{}{}
\crefname{enumii}{}{}
\newcommand*{\addFileDependency}[1]{% argument=file name and extension
\typeout{(#1)}% latexmk will find this if $recorder=0
% however, in that case, it will ignore #1 if it is a .aux or 
% .pdf file etc and it exists! If it doesn't exist, it will appear 
% in the list of dependents regardless)
%
% Write the following if you want it to appear in \listfiles 
% --- although not really necessary and latexmk doesn't use this
%
\@addtofilelist{#1}
%
% latexmk will find this message if #1 doesn't exist (yet)
\IfFileExists{#1}{}{\typeout{No file #1.}}
}\makeatother
\setlist{nolistsep,leftmargin=*}
\newcommand{\customlabel}[2]{%
   \protected@write \@auxout {}{\string \newlabel {#1}{{#2}{\thepage}{#2}{#1}{}} }%
   \hypertarget{#1}{}
}
\newtheorem{theorem}{Theorem}
\newtheorem{lemma}{Lemma}
\title{Conditional Bayesian Quadrature}
\author[1,*]{Zonghao Chen}
\author[1,*]{Masha Naslidnyk}
\author[2]{Arthur Gretton}
\author[3]{Fran\c{c}ois-Xavier Briol}
\affil[1]{%
    Department of Computer Science\\
    University College London, London, UK
}
\affil[2]{%
    Gatsby Computational Neuroscience Unit\\
    University College London, London, UK
    }
\affil[3]{%
    Department of Statistical Science\\
    University College London, London, UK
  }
\begin{document}
\maketitle
\begin{abstract}
We propose a novel approach for estimating conditional or parametric expectations in the setting where obtaining samples or evaluating integrands is costly.
Through the framework of probabilistic numerical methods (such as Bayesian quadrature), our novel approach allows to  incorporates prior information about the integrands especially the prior smoothness knowledge about the integrands and the conditional expectation.
As a result, our approach provides a way of quantifying uncertainty and leads to a fast convergence rate, which is confirmed both theoretically and empirically on challenging tasks in Bayesian sensitivity analysis, computational finance and decision making under uncertainty.

\end{abstract}

\section{Introduction}\label{sec:introduction}
This paper considers the computational challenge of estimating certain intractable expectations which arise in machine learning, statistics, and beyond. Given a function $f:\calX \times \Theta \rightarrow \R$, we are interested in estimating \emph{conditional expectations} (sometimes also called parametric expectations) $I: \Theta \rightarrow \R$ uniformly over the parameter space $\Theta$, where:
\vspace{-5pt}
\begin{align*}
    I(\theta) = \E_{X \sim \mathbb{P}_\theta}[f(X,\theta)]=\int_\calX  f(x, \theta) \mathbb{P}_\theta(\mathrm{d} x), 
\end{align*}
and $\{\mathbb{P}_\theta\}_{\theta \in \Theta}$ is a family of distributions on the integration domain $\calX$. We will assume that $I(\theta)$ is sufficiently smooth in $\theta$ so that $I(\theta),I(\theta')$ are similar given close enough parameters $\theta,\theta'$, but that $I$ is not available in closed-form and must be approximated through samples and function evaluations. 

The computational challenge of approximating conditional expectations arises in many fields. It must be tackled when calculating tail probabilities in rare-event simulation \citep{Tang2013}, and when computing moment generating, characteristic, or cumulative distribution functions \citep{Giles2015,Krumscheid2018}. It also arises when computing the conditional value at risk or various valuations of options \citep{longstaff2001valuing,alfonsi2022many}, for Bayesian sensitivity analysis \citep{Lopes2011,Kallioinen2021}, or even more broadly for scientific sensitivity analysis; see for example Sobol indices \citep{Sobol2001}. Conditional expectations $I(\theta)$ are also often computed as an intermediate quantity. 
For example, given $\phi:\R \rightarrow \R$ and some probability distribution $\mathbb{Q}$ on $\Theta$, we are often interested in the \emph{nested expectation} given by $\mathbb{E}_{\theta \sim \mathbb{Q}}[\phi(I(\theta))]$ \citep{Hong2009,Rainforth2018}. This problems comes about when computing the expected information gain in Bayesian experimental design \citep{Chaloner1995}, and for computing the expected value of partial perfect information in health economics~\citep{heath2017review}.

Methods for computing $I(\theta)$ generally select $T$ parameter values $\theta_1,\cdots,\theta_T \in \Theta$, then simulate $N$ realisations from each corresponding probability distribution $\mathbb{P}_{\theta_1}, \cdots, \mathbb{P}_{\theta_T}$ at which they evaluate the integrand $f$, leading to a total of $N T$ evaluations. 
The usual approach is to use classical Monte Carlo methods to estimate $I(\theta_1), \cdots, I(\theta_T)$, but in many applications we are also interested in estimating either $I(\theta)$ for a fixed $\theta \notin \{\theta_1,\cdots,\theta_T\}$, or  $I(\theta)$ uniformly over $\theta \in \Theta$. 
As a result, a second step combining the estimates of $I(\theta_1), \cdots, I(\theta_T)$ is often required to complete the task. 

The most straightforward approach to estimating conditional expectation is importance sampling \citep{Glynn1989,Madras1999,Tang2013,Demange-Chryst2022}, where $I(\theta)$ is estimated by weighting function evaluations to account for the fact that the samples were not obtained from $\mathbb{P}_\theta$ but from the importance distributions $\mathbb{P}_{\theta_1}, \cdots, \mathbb{P}_{\theta_T}$. 
Unfortunately, this approach is only applicable when $f$ does not depend on $\theta$ (otherwise new expensive function evaluations are needed), and it is usually difficult to identify  importance distributions that can lead to an accurate estimator for small $N$ and $T$. 
Alternatively, least-squares Monte Carlo  \citep{longstaff2001valuing,alfonsi2022many} first estimates $I(\theta_1),\cdots, I(\theta_T)$ through Monte Carlo, then estimates $I(\theta)$ through linear or polynomial regression based on these estimates. These methods are therefore dependent on the accuracy of the Monte Carlo estimators and the regression method. 

Overall and in addition, there are two main limitations which all of these existing methods suffer from. Firstly, they are very sample-intensive; i.e. they require a relatively large number of function evaluations (i.e. $N$ and $T$) to reach a given level of accuracy, which makes them infeasible if sampling or evaluating the integrand is expensive. Secondly, obtaining a finite-sample quantification of uncertainty for $I(\theta)$ is often infeasible. This is a significant limitation for challenging integration problems, for which we would ideally like to know how accurate our estimator is.

To tackle these limitations, we propose a novel algorithm called \emph{conditional Bayesian quadrature} (CBQ). The name comes from the fact that our approach extends the Bayesian quadrature algorithm~\citep{Diaconis1988,OHagan1991BayesHermiteQ,Rasmussen2003,fx_quadrature} to the computation of conditional expectations. As such, CBQ falls in the line of work on probabilistic numerical methods \citep{hennig2015probabilistic,Cockayne2017BPNM,Oates2019Modern,Hennig2022}.
Our algorithm is based on a hierarchical Bayesian model consisting of two-stages of Gaussian process regression, and leads to a univariate Gaussian posterior distribution on $I(\theta)$ whose mean and variance are parametrised by $\theta$. See \Cref{fig:illustration} for an illustration.

\begin{figure}[t]
    \centering
    \includegraphics[width=260pt]{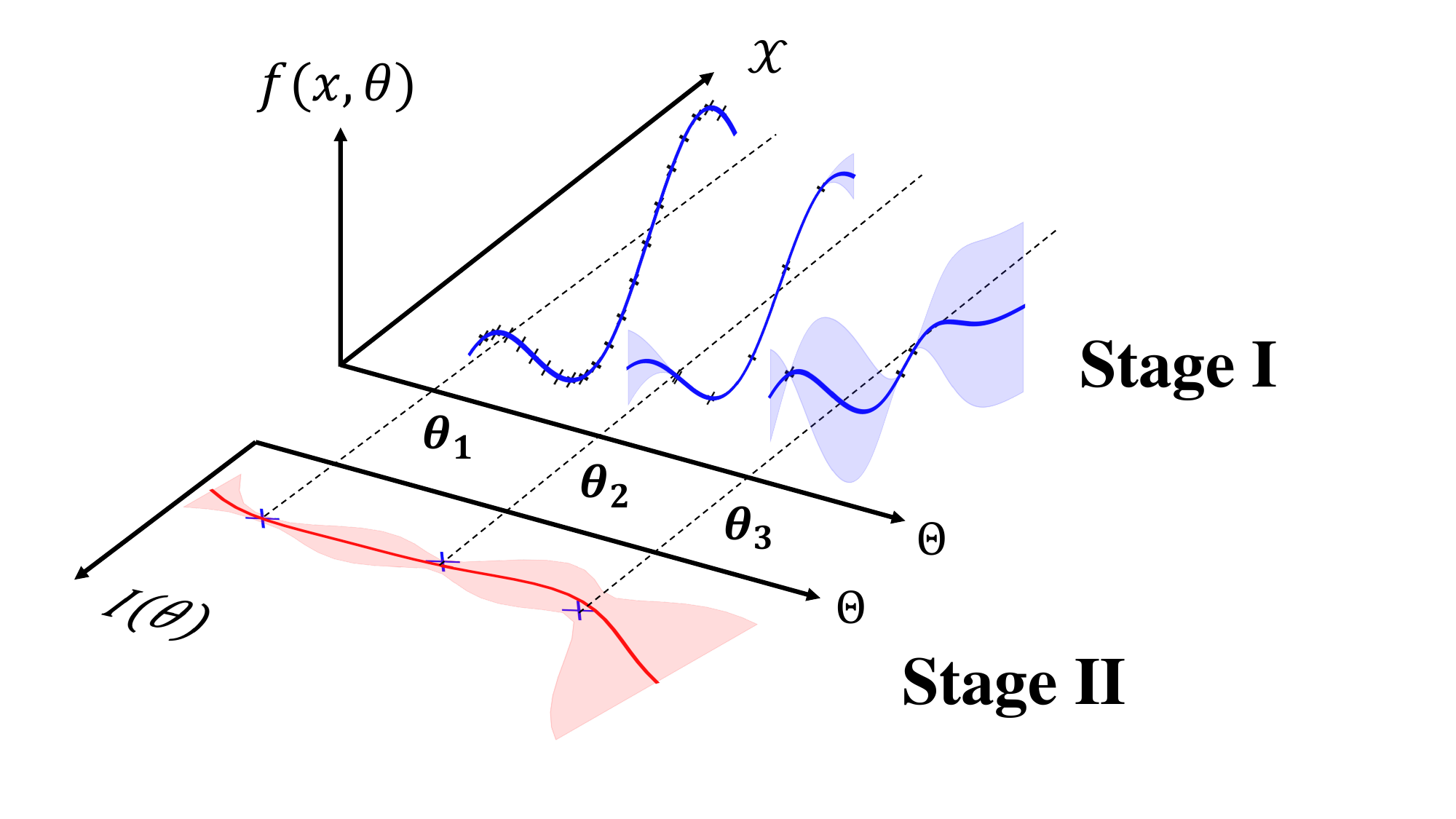}
    \caption{Illustration for \textit{conditional Bayesian quadrature} (CBQ) in \Cref{sec:cbq}. The first stage gives a GP posterior of $f(x,\theta)$ for each $\theta \in \{\theta_1, \cdots, \theta_T\}$, which are then integrated to give \textcolor{blue}{$\hat{I}_{\text{BQ}}(\theta_1), \cdots, \hat{I}_{\text{BQ}}(\theta_T)$}. The second stage then combines all BQ estimates from the first stage to give a GP posterior of $I(\theta)$: \textcolor{red}{$\hat{I}_{\text{CBQ}}(\theta)$}.
    All shared areas represent Bayesian quantification of uncertainty.}
    \label{fig:illustration}
\end{figure}

This approach allows us to mitigate the two main limitations of existing methods. Firstly, we show both theoretically and empirically that our method converges rapidly to the true value and is hence more sample efficient than baselines. This result holds under mild smoothness conditions on $f$ and $I(\theta)$ whenever the dimension of $\calX$ and $\Theta$ is not too large. As a result, a desired accuracy can be reached with smaller $N$ and $T$, and the method will therefore be preferable for expensive problems. Secondly, the fact that we have an entire posterior distribution on $I(\theta)$ allows us to provide finite-sample Bayesian quantification of uncertainty. 

The remainder of the paper is structured as follows: In \Cref{sec:background}, we 
review existing methods for computing conditional expectations and Bayesian quadrature. 
In \Cref{sec:cbq}, we formalise our  novel \textit{conditional Bayesian quadrature} algorithm.  
In \Cref{sec:theory}, we establish the theoretical convergence  of our method.
In \Cref{sec:experiments}, we provide empirical results and compare with baseline methods on challenging tasks in Bayesian sensitivity analysis, computational finance and decision making under uncertainty.

\section{Background}\label{sec:background}

 We aim to compute the conditional expectation $I(\theta) = \mathbb{E}_{X \sim \mathbb{P}_\theta}[f(X,\theta)]$, where we assume that  $\calX \subseteq \R^d$, $\Theta \subseteq \R^p$, and $f(\cdot,\theta)$ is in  $\mathcal{L}^2(\mathbb{P}_\theta) :=\{ h:\calX \rightarrow \R : \mathbb{E}_{X \sim \mathbb{P}_\theta}[h(X)^2]<\infty\}$, the space of square-integrable functions with respect to $\mathbb{P}_\theta$ for all $\theta \in \Theta$. The latter is a minimal assumption which ensures that Monte Carlo estimators satisfy the central limit theorem. 
Our observations and corresponding functional evaluations are:
\vspace{-5pt}
\begin{align*}
\begin{aligned}
\theta_{1:T} &:= [\theta_1, \cdots, \theta_T]^\top \in \Theta^T, \\  x^t_{1:N} &:= [x^t_1, \cdots, x^t_N]^\top \in \calX^N, \\
 f(x^t_{1:N}, \theta_t) &:= [f(x^t_1,\theta_t), \cdots, f(x^t_N,\theta_t)]^\top \in \R^N,
\end{aligned}
\end{align*}
for all $t \in \{1,\cdots,T\}$, where we use square brackets to indicate vectors. This could straightforwardly be extended to allow a different number of samples $N_t$ per parameter value $\theta_t$, but we do not consider this case in order to simplify notations throughout. In this section, we will review existing methods for computing conditional expectations and the core ingredient for our method: Bayesian quadrature.

\subsection{Existing Methods for Computing Conditional Expectations}\label{sec:cond_exp}

Existing methods fall into two categories: sampling-based methods and regression-based methods. Throughout, we will assume that $x_{1:N}^t \sim \mathbb{P}_{\theta_t}$ for all $t \in \{1,\cdots,T\}$.

\vspace{-2mm}
\paragraph{Sampling-based Methods} 
We can construct a \emph{Monte Carlo} (MC) estimator \citep{Robert2004} for $I(\theta_t)$ through $\hat{I}_{\text{MC}}(\theta_t) := \frac{1}{N} \sum_{i=1}^N f(x_i^t,\theta_t)$. Unfortunately, we cannot estimate $I(\theta)$ for $\theta \notin \{\theta_{1}, \cdots,\theta_T \}$, and we can only use $N$ rather than $N T$ points to estimate each $I(\theta_t)$, making MC inappropriate for our task. A more suitable alternative is \emph{importance sampling} (IS).
Assume $\mathbb{P}_\theta$ has a Lebesgue density $p_\theta:\calX \rightarrow \R$ which has full support on $\calX$ for all $\theta \in \Theta$, and the integrand does not depend on $\theta$ (i.e. $f(x,\theta) = f(x)$). 
Then the IS estimator is able to make use of all $N T$ samples and can estimate $I(\theta)$ for any parameter $\theta \in \Theta$: $\hat{I}_{\text{IS}}(\theta) := \frac{1}{T} \sum_{t=1}^T \sum_{i=1}^N p_{\theta}(x_i^t)/p_{\theta_t}(x_i^t) f(x_i^t)$. 
The choice of importance distributions  $\mathbb{P}_{\theta_1},\cdots,\mathbb{P}_{\theta_T}$ has been studied in \cite{Glynn1989,Madras1999,Tang2013}, but alternatives beyond this parametric family of distributions could also be used \citep{Demange-Chryst2022}.

\paragraph{Regression-based Methods}
The main regression-based method is least-squares Monte Carlo (LSMC) \citep{longstaff2001valuing}, which is a two-stage approach. Stage 1 consists of computing MC estimators  $\hat{I}_{\text{MC}}(\theta_1), \cdots, \hat{I}_{\text{MC}}(\theta_T)$, whilst stage 2 consists of estimating $I(\theta)$ through linear or polynomial regression based on the estimates from stage 1. 
Other non-parametric regression method could be used though; for kernel ridge regression \citep{Han2009,Hu2020}, we will refer to the algorithm as kernelised least-squares Monte Carlo (KLSMC). Note that KLSMC is identical to standard estimators for conditional kernel mean embeddings based on vector-valued kernel ridge regression and can be recognised as a generalisation of the kernel mean shrinkage estimators of \cite{muandet2016kernelmeanshrinkage,chau2021deconditional}. 

Clearly, both the performance and computational cost of these estimators will depend on the regression method. 
LSMC costs $\calO(TN + p^3)$ with $p$ being the order of polynomial, whereas KLSMC costs $\calO(TN + T^3)$. 
On the other hand, KLSMC is more flexible and will outperform LSMC when $I(\theta)$ cannot be approximated well by a low-order polynomial.

\paragraph{Other Related Work} Alternative approaches for estimating $I(\theta)$ are based on multi-task or meta- learning \citep{xi2018bayesian,gessner2020active,Sun2021,Sun2023}. 
This line of research tends to assume that several related expectations need to be computed, and the relationship between these expectations is encoded through a vector-valued RKHS, or that they are independent draws from a set of tasks. 
Notably, they do not explicitly encode properties of the mapping $\theta \mapsto I(\theta)$, and will therefore be sub-optimal for our setting.
Multilevel Monte Carlo methods are also popular in estimating expensive expectations, by combining samples from multiple levels of resolution~\citep{Giles2015}. However, they are not able to estimate new integrals $I(\theta^\ast)$ or $I(\theta)$ uniformly over $\theta \in \Theta$.

\subsection{Bayesian Quadrature}\label{sec:bayesian_quadrature}
In this section, we present Bayesian quadrature, the foundational component of our approach. Consider the expectation $I = \mathbb{E}_{X \sim \mathbb{P}} [f(X)]$ of some function $f:\calX \rightarrow \mathbb{R}$, where we emphasise that neither $f$ nor $\mathbb{P}$ depend on $\theta$ in this subsection. In Bayesian quadrature (BQ) \citep{Diaconis1988,OHagan1991BayesHermiteQ,Rasmussen2003,fx_quadrature}, we begin by positing a Gaussian process (GP) prior on $f$. We will denote this prior $\mathcal{GP}(m_{\calX},k_{\calX})$, where $m_\calX:\calX \rightarrow \mathbb{R}$ is the mean function and $k_{\calX}:\calX \times \calX \rightarrow \mathbb{R}$ is the covariance (or reproducing kernel) function. These two functions fully characterise the distribution, and can be used to encode prior knowledge about smoothness, periodicity, or sparsity of $f$.
Once a GP prior has been selected, we condition on noiseless function evaluations $f(x_{1:N}) = [f(x_1),\cdots,f(x_N)]^\top$ for $x_{1:N} \in \calX^N$. This leads to a posterior GP on $f$, which induces a univariate Gaussian posterior distribution $\mathcal{N}\big(\hat{I}_\mathrm{BQ},\sigma^2_\mathrm{BQ}\big)$ on $I$, where:
\begin{align*}
\begin{aligned}
    \hat{I}_\mathrm{BQ} & = \mathbb{E}_{X \sim \mathbb{P}}[m_{\calX}(X)] + \mu(x_{1:N})^\top \big(k_{\calX}(x_{1:N}, x_{1:N}) + \\ &\qquad \lambda_{\calX} \Id_N \big)^{-1} \big(f(x_{1:N})-m_{\calX}(x_{1:N}) \big), \\
    \sigma^2_\mathrm{BQ} &= \mathbb{E}_{X,X'\sim \mathbb{P}}\left[k_{\calX}(X,X')\right] - \mu(x_{1:N})^\top \big(k_{\calX}(x_{1:N}, x_{1:N}) \\ &\qquad + \lambda_{\calX} \Id_N \big)^{-1} \mu(x_{1:N}).
\end{aligned}
\end{align*}
Here, $\Id_N$ is an $N$-dimensional identity matrix and $\lambda_{\calX} \geq 0$ is a regularisation parameter, often called ``nugget'' or ``jitter'', which, although not essential from a statistical viewpoint, is often used to ensure the matrix can be numerically inverted \citep{Ababou1994,Andrianakis2012}.

The function  $\mu(x) = \mathbb{E}_{X \sim \mathbb{P}}[k_\calX(X,x)]$ is known as the kernel mean embedding~\citep{muandet2017kernel} of the distribution $\mathbb{P}$ and $\mathbb{E}_{X,X'\sim \mathbb{P}}\left[k_{\calX}(X,X')\right]$ is known as the initial error. 
To implement BQ, we need the kernel mean embedding and the initial error to be available in closed-form, which is a rather strong requirement and does not hold for all pairs of kernel and distribution.
Fortunately, there are multiple solutions for this problem; see  Table 1 in~\citep{fx_quadrature}, ~\citep{Nishiyama2016}, the \texttt{ProbNum} package~\citep{Wenger2021}, or Stein reproducing kernels \citep{anastasiou2023stein}. A discussion is provided in \Cref{appendix:tractable_kernel_means}.

The posterior mean $\hat{I}_\mathrm{BQ}$ provides a point estimate for $I$ whilst the posterior variance $\sigma^2_\mathrm{BQ}$ gives a notion of uncertainty for $I$ which arises due to having only observed $f$ at $N$ points.
For BQ to be well-calibrated and the posterior variance $\sigma^2_\mathrm{BQ}$ to be meaningful, we need to select the GP prior and all associated hyperparameters carefully; this is usually achieved through empirical Bayes~\citep{casella1985emp_bayes}. 
A detailed discussion on hyperparameter selection is provided in \Cref{appendix:hyperparameter_selection}. 
It is noteworthy that BQ does not impose restrictions on how $x_{1:N}$ is selected, and as such does not require independent realisations from $\mathbb{P}$. 
In fact, a number of active learning approaches have proven popular, see \cite{Gunter2014,gessner2020active}. 

The convergence rate of the BQ estimator has been  studied extensively \citep{fx_quadrature,Kanagawa2019,wynne2021convergence} and is particularly fast for low- to mid-dimensional smooth integrands. 
This has to be contrasted with the computational cost, which is inherited from GP regression and is $\calO(N^3)$. 
For this reason, BQ has principally been applied to problems where sampling or evaluating the integrand is very expensive and usually only a small number of samples are available (i.e. small $N$).
Examples range from differential equation solvers \citep{Kersting2016}, neural ensemble search \citep{Hamid2023}, variational inference \citep{Acerbi2018} and simulator-based inference \citep{Bharti2023} to applications in computer graphics \citep{Marques2013,xi2018bayesian}, cardiac modelling \citep{Oates2017heart} and tsunami modelling \citep{li2022multilevel}. 
For cheaper problems, \cite{Jagadeeswaran2018,Karvonen2017symmetric,Karvonen2019} propose BQ methods where the computational cost is much lower, but these are applicable only with specific point sets $x_{1:N}$ and distributions $\mathbb{P}$. \cite{Hayakawa2023} also studies Nystr\"om-type of approximations, whilst \cite{Adachi2022} studies parallelisation techniques. Finally, several alternatives with linear cost in $N$ have also been proposed using tree-based \citep{Zhu2020} or neural-network \citep{Ott2023} models, but these tend to require approximate inference methods such as Laplace approximations or Markov chain Monte Carlo.

\begin{figure*}[t] % 't' to position the figure at the top of the page
\centering
\begin{subfigure}[b]{0.38\linewidth}
    \centering
\begin{tikzpicture}[scale=0.3, every node/.style={scale=0.8}]
  \node[latent] (x) at (0, 0) {$x_i^t$};
  \node[latent, right=0.7cm of x] (fx) {$f(x_i^t, \theta_t)$};
  \node[latent, above=0.5cm of fx] (theta) {$\theta_t$};
  \node[latent, right=0.7cm of fx] (IBQ) {$\hat{I}_{\text{BQ}}(\theta_t)$};
\node[latent, right=0.7cm of IBQ] (ICBQ) {$\hat{I}_{\mathrm{CBQ}}(\theta)$};
\node[latent, above=0.5cm of ICBQ] (thetanew) {$\theta$};
  % Edges
  \edge[->,>=stealth] {x} {fx};
  \edge[->,>=stealth] {theta} {fx};
  \edge[->,>=stealth] {fx} {IBQ};
  \edge[->,>=stealth] {IBQ} {ICBQ};
  \edge[->,>=stealth] {thetanew} {ICBQ};
  \edge[->,>=stealth] {theta} {ICBQ};
  
  % Plate
  % \plate [inner xsep=0.7cm, inner ysep=0.4cm, xshift=0.0cm, yshift=0.2cm, color=gray, label={[label distance=-0.7cm, yshift=0.0cm, xshift=0.4cm]above left:$i=1:N$}] {N} {(x)(fx)} {};
    \plate [inner xsep=0.8cm, inner ysep=0.5cm, xshift=0.0cm, yshift=0.2cm, color=black, rounded corners=10pt, label={[label distance=-0.7cm, yshift=0.0cm, xshift=0.4cm]above left:$i=1:N$}] {N} {(x)(fx)} {};

  \plate [inner xsep=1.4cm, inner ysep=0.7cm, xshift=-0.1cm, yshift=0.0cm, color=black, rounded corners=10pt, label={[label distance=-0.8cm, yshift=0.0cm, xshift=0.0cm]above left:$t=1:T$}] {T} {(x)(fx)(theta)(IBQ)}{};
\end{tikzpicture}
    % \caption{Directed acyclic graph representation of CBQ. Circle nodes indicate random variables and large rectangles correspond to  independent replications over indices.}
    % \label{fig:DAG} 
  \end{subfigure}
  \hspace{40pt}
\begin{subfigure}{0.45\textwidth}
\centering
\includegraphics[width=\linewidth]{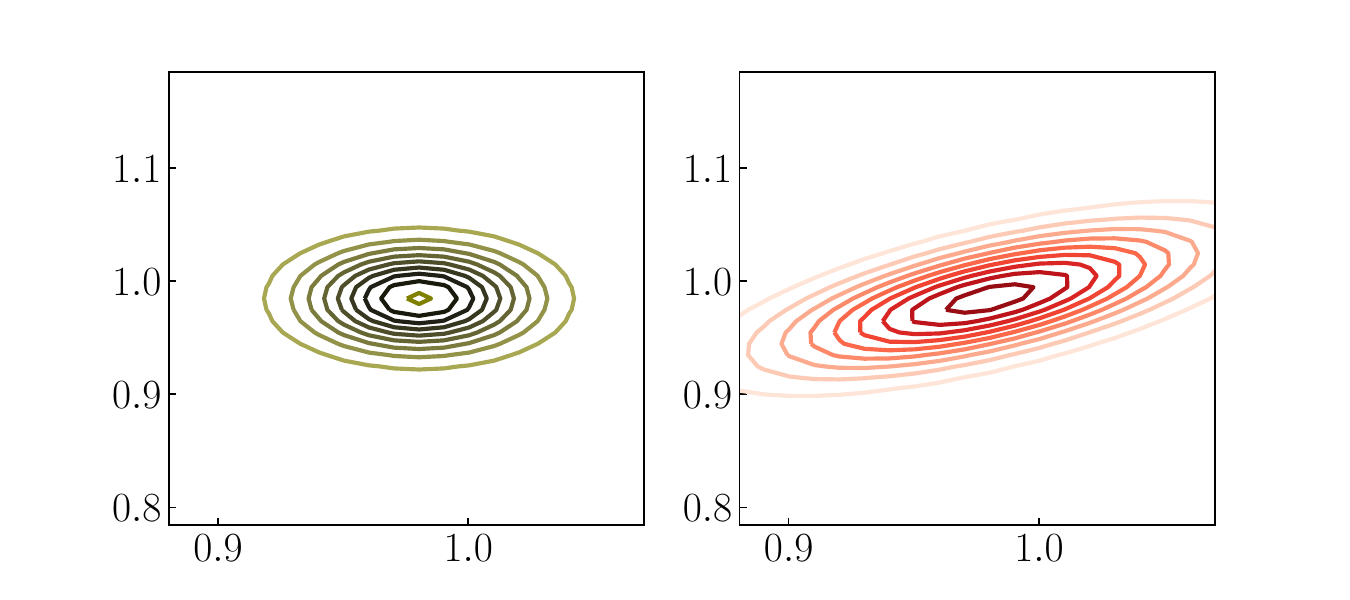}
\vspace{-20pt}
\end{subfigure}
    
\caption{\emph{Illustration of CBQ.} \textbf{Left:} Directed acyclic graph representation. Circle nodes indicate random variables and rectangles correspond to  independent replications over indices. \textbf{Right:} BQ and CBQ posteriors on $I(\theta_{1:2})=[I(\theta_1), I(\theta_2)]^\top$ for $\theta_1 \approx \theta_2$. Unlike BQ, the CBQ posterior accounts for the relation between the two quantities.}
\label{fig:DAG_and_CBQ_bivariate_posterior}
\vspace{-15pt}
\end{figure*}

\section{Methodology}\label{sec:cbq}

\vspace{-2mm}
\emph{Conditional Bayesian quadrature} (CBQ) provides a Bayesian hierarchical model for $I(\theta^*)$ for any $\theta^* \in \Theta$, and the posterior mean of this hierarchical model is called the CBQ estimator. The algorithm falls into the realm of regression-based methods and can therefore be expressed in two stages:
\begin{itemize}[topsep=0pt,leftmargin=*]
    \item \textbf{Stage 1:}  Compute $\hat{I}_\mathrm{BQ}(\theta_{1:T}), \sigma^2_\mathrm{BQ}(\theta_{1:T})$ to obtain the BQ posterior mean and variance on $I(\theta_1),\ldots,I(\theta_T)$. 
    \item \textbf{Stage 2:} Perform GP regression over $I(\theta)$ using the outputs of stage 1. The posterior mean $\hat{I}_\mathrm{CBQ}(\theta)$ is the CBQ estimator for $I(\theta)$, and the variance $k_{\mathrm{CBQ}}(\theta,\theta)$  quantifies uncertainty. 
\end{itemize}
An illustrative figure is provided in \Cref{fig:illustration}.
This two-stage algorithm can also be summarised using the directed acyclic graph in \Cref{fig:DAG_and_CBQ_bivariate_posterior}, where the first stage corresponds to the part of the model inside the largest plate, and the second stage corresponds to the remainder of the graph. 
The CBQ posterior mean and covariance 
are given by
\begin{align*}
\begin{aligned}
    & \hat{I}_{\mathrm{CBQ}}(\theta)  := m_\Theta(\theta)+k_\Theta(\theta, \theta_{1:T}) \big(k_\Theta(\theta_{1:T}, \theta_{1:T}) \\ & \quad + \mathrm{diag}(\lambda_{\Theta}+ \sigma^2_\mathrm{BQ}(\theta_{1:T}))\big)^{-1} (\hat{I}_\mathrm{BQ}\left(\theta_{1:T}) - m_\Theta(\theta_{1:T})\right), \\
    & k_{\mathrm{CBQ}}(\theta,\theta')  := k_{\Theta}(\theta,\theta') - k_\Theta(\theta,\theta_{1:T}) \big( k_{\Theta}(\theta_{1:T}, \theta_{1:T}) \\ &\quad + \mathrm{diag}(\lambda_{\Theta}+ \sigma^2_\mathrm{BQ}(\theta_{1:T})) \big)^{-1} k_\Theta(\theta_{1:T},\theta')
\end{aligned}
\end{align*}
where the observations $\{x_{1:N}^t,f(x_{1:N}^t,\theta_t)\}_{t=1}^T$ enters implicitly through $\hat{I}_\mathrm{BQ}(\theta_{1:T})$.
The terms $\hat{I}_\mathrm{BQ}(\theta_t)$ and $\sigma^2_\mathrm{BQ}(\theta_t)$ are the BQ posterior mean and variance for $I(\theta_t)$, $\mathrm{diag}(\lambda_{\Theta}+ \sigma^2_\mathrm{BQ}(\theta_{1:T})))$ is the diagonal matrix with vector $\lambda_{\Theta}+ \sigma^2_\mathrm{BQ}(\theta_{1:T}))$ on the diagonal and where $\lambda_{\Theta} \geq 0$ acts as a regulariser. We also have    $m_\Theta:\Theta \rightarrow \R$ and $k_{\Theta}:\Theta \times \Theta \rightarrow \R$ which are the prior mean and covariance for the stage 2 GP. 
Similarly to BQ, the ``quadrature" terminology is justified since  $\hat{I}_\mathrm{CBQ}(\theta) := \sum_{t=1}^T \sum_{i=1}^N w_{i,t}^{\mathrm{CBQ}} f(x_i^t,\theta_t)$ for some weights $w_{i,t}^{\mathrm{CBQ}} \in \R$ when $m_\Theta(\theta)=0$.

The first stage corresponds to the BQ procedure highlighted in \Cref{sec:bayesian_quadrature}: we model $f(\cdot,\theta_t)$ with independent $\text{GP}(m^t_{\calX},k^t_{\calX})$ priors, condition on observations $f(x^t_{1:N},\theta_t)$, and consider the posterior distribution on $I(\theta_t)$ for all $t \in \{1,\ldots,T\}$. We therefore require access to closed-form expressions for each of the $T$ kernel mean embeddings and initial errors (see discussion in \Cref{appendix:tractable_kernel_means} on the pairs of kernel and distribution that have a closed form kernel mean embedding). 
Note that at this stage, we do not share any samples across the estimators of $I(\theta_1), \ldots, I(\theta_T)$.
% ,Park2020}. 

In the second stage, we place a $\text{GP}(m_\Theta,k_\Theta)$ prior on $I:\Theta \rightarrow \R$, and assume $\hat{I}_\mathrm{BQ}(\theta_t)$ are noisy evaluations of $I(\theta_t)$: $\hat{I}_\mathrm{BQ}(\theta_t) = I(\theta_t) +\varepsilon_t$, where the noise terms $\varepsilon_t$ are independent zero-mean Gaussian noise with variance $\sigma^2_\mathrm{BQ}(\theta_t)$ for all $t \in \{1, \dots, T\}$. 
Note that $\hat{I}_\mathrm{BQ}(\theta_t)$ is a deterministic function of independent samples $\theta_t, x^t_1, \cdots, x^t_N$ across $t = 1, \cdots, T$, so $\hat{I}_\mathrm{BQ}(\theta_1), \ldots, \hat{I}_\mathrm{BQ}(\theta_T)$ are also independent. 
As the variance $\epsilon_t$ is input-dependent, this corresponds to heteroscedastic GP regression \citep{Le2005}. 
We now briefly comment on the choice of prior and likelihood in this second stage:
\begin{itemize}[topsep=0pt,leftmargin=*]
    \item The $\text{GP}(m_\Theta,k_\Theta)$ prior can be used to encode prior knowledge about how the expectation $I(\theta)$ varies with the parameter $\theta$. Typically, the stronger this prior information, the faster the CBQ estimator's convergence rate will be; this statement will be made formal in \Cref{sec:theory}.

    \item The likelihood for the heteroscedastic GP is directly inherited from the BQ posteriors in the first stage: the posterior on $I(\theta_t)$ is a univariate normal with mean $\hat{I}_\mathrm{BQ}(\theta_{t})$ and variance $\sigma^2_\mathrm{BQ}(\theta_{t})$. As expected, when the number of samples $N$ grows, the BQ variance $\sigma^2_\mathrm{BQ}(\theta_t)$ will decrease, indicating that we are more certain about $I(\theta_t)$. This is then directly taken into account in stage 2. Note that  heteroscedasticity has previously been shown to be common in practice for LSMC \citep{Fabozzi2017}.
\end{itemize}

%%%%%%%%%%%%%%%%%%%%%%%%%%%%%%

%%%%%%%%%%%%%%%%%%%%%%%%%%%%

CBQ is closely related to  LSMC and KLSMC as it simply corresponds to different choices for the two stages. 
The main difference is in stage 1, where we use BQ rather than MC. This is where we expect the greatest gains for our approach due to the fast convergence rate of BQ estimators (this will be confirmed in \Cref{sec:theory}). For stage 2, we use heteroscedastic GP regression rather than polynomial or kernel ridge regression. As such, the second stage of KLSMC and CBQ is identical up to a minor difference in the way in which the Gram matrix $k_{\Theta}(\theta_{1:T}, \theta_{1:T})$ is regularised before inversion. 
Finally, one significant advantage of CBQ over LSMC and KLSMC is that it is a fully Bayesian model, meaning that we obtain a posterior distribution on $I(\theta)$ for any $\theta \in \Theta$.

The total computational cost of our approach is $\calO(T N^3 + T^3)$ due to the need to compute $T$ BQ estimators in the first stage and heteroscedastic GP regression in the second stage. 
Approximate GP approaches such as~\cite{titsias2009variational} could \emph{not} be used to reduce the cost because they introduce an additional layer of approximation which will slow down the convergence rate of CBQ. 
The cost of CBQ is higher than the cost of $\calO(TN+p^3)$ or $\calO(TN+T^3)$ of LSMC and KLSMC respectively, but as we will see in \Cref{sec:experiments}, the higher computational cost of CBQ will be offset competitive by faster convergence (derived in \Cref{thm:convergence}) and is more competitive compared to baseline methods (see 
\Cref{sec:experiments}).  
Additionally in many applications (such as the SIR model in \Cref{sec:experiments}), the cost of evaluating the integrand will be much larger than the cost of estimation methods, so data-efficient method like CBQ will be more efficient overall. 

Interestingly, CBQ also provides us with a joint Gaussian posterior on the expectation at $\theta^\ast_1, \ldots, \theta^\ast_{T_{\text{Test}}} \in \Theta$ which has mean vector $\hat{I}_{\mathrm{CBQ}}(\theta^\ast_{1:T_{\text{Test}}})$ and covariance matrix $k_{\mathrm{CBQ}}(\theta^\ast_{1:T_{\text{Test}}},\theta^\ast_{1:T_{\text{Test}}})$. This can be computed at an  $\calO(T^2 T_{\text{test}})$ cost, and is illustrated in the right plot of~\Cref{fig:DAG_and_CBQ_bivariate_posterior} on a synthetic example from \Cref{sec:experiments}; as observed, CBQ takes into account of covariances between test points in that the integral value will be similar for similar parameter values, whereas standard BQ treats each integral value independently.

A natural alternative would be to place a GP prior directly on $(x,\theta) \mapsto f(x,\theta)$ and condition on all 
$N \times T$ observations. 
The implied distribution on $I(\theta_1), \ldots, I(\theta_T)$ would also be a multivariate Gaussian distribution. 
This approach coincides with the multi-output Bayesian quadrature (MOBQ) approach of \cite{xi2018bayesian} where multiple integrals are considered simultaneously. 
However, the computational cost of MOBQ is $\calO(N^3 T^3)$, due to fitting a GP on $N T$ observations, and quickly becomes intractable as $N$ or $T$ grow. 
A further comparison of BQ and MOBQ can be found in~\Cref{appendix:cbq_mobq}.
The same holds true if $f$ does not depend on $\theta$, in which case the task reduces to the conditional mean process studied in Proposition 3.2 of \cite{chau2021deconditional}, and when $T=1$, we recover standard Bayesian quadrature. 

\paragraph{Hyperparameters}
The hyperparameter selection for CBQ boils down to the choice of GP interpolation hyperparameters at stage 1 and the choice of GP regression hyperparameters at stage 2. 
To simplify this choice, we renormalise all our function values before performing GP regression and interpolation. 
This is done by first subtracting the empirical mean and then dividing by the empirical standard deviation. 
The choice of covariance functions $k_\calX$ and $k_\Theta$ is made on a case-by-case basis in order to both encode properties we expect the target functions to have, but also to ensure that the corresponding kernel mean is available in closed-form (see \Cref{appendix:tractable_kernel_means}). 
Once this is done, we typically still need to make a choice of hyperparameters for both kernel: lengthscales $l_\calX$, $\l_\Theta$ and amplitudes $A_\calX, A_\Theta$. 
We also need to select the regularizer $\lambda_\calX, \lambda_\Theta$. 
$\lambda_\calX$ is fixed to be $0$ as suggested by \Cref{thm:convergence}, and the rest of the hyperparameters are selected through empirical Bayes, which consists of maximising the log-marginal likelihood.
For more details on hyperparameter selection, please refer to \Cref{appendix:hyperparameter_selection}.
\section{Theoretical Results}\label{sec:theory}

Our main theoretical result in \Cref{thm:convergence} below guarantees that CBQ is able to recover the true value of $I(\theta)$ when $N$ and $T$ grow. The result of this theorem depends on the smoothness of the problem. We will say a function has smoothness $s$ if it is in the Sobolev space $\calW^{s, 2}$ of functions with at least $s$ (weak) derivatives that are square Lebesgue-integrable \citep{adams2003sobolev}. For a multi-index $\alpha = (\alpha_1, \dots \alpha_p) \in \mathbb{N}^p$, by $D_\theta^\alpha f$ we denote the $|\alpha|=\sum_{i=1}^d \alpha_i$ order weak derivative of a function $f$ on $\Theta$. Similarly, we will say a kernel has smoothness $s$ whenever its corresponding RKHS is a space of functions of smoothness $s$. This is for example the case of the Mat\'ern$-\nu$ kernel in dimension $d$ whenever $s= \nu +d/2$, defined as $k_\nu(x,y) = \frac{\eta}{\Gamma(\nu)2^{\nu - 1}} (\frac{\sqrt{2 \nu}}{l} \| x - y \|_2 )^\nu K_\nu(\frac{\sqrt{2 \nu}}{l} \| x-y \|_2)$ where $K_\nu$ is the modified Bessel function of the second kind and $\eta,l >0$ are hyperparameters.

% \vspace{-2pt}
\begin{theorem}\label{thm:convergence}
Let $x \mapsto f(x, \theta)$ be a function of smoothness $s_f > d/2$, and $\theta \mapsto f(x, \theta)$ be a function of smoothness $s_I > p/2$ such that $\sup_{\theta \in \Theta} \max_{|\alpha|<s_I} \| D_\theta^\alpha f(\cdot, \theta) \|_{\calW^{s_I, 2}(\calX)}<\infty$. Suppose the following assumptions hold:
\vspace{-2pt}
\begin{enumerate}[itemsep=0.1pt,topsep=0pt,leftmargin=*]
\item [A1] The domains $\calX \subset \R^d$ and $\Theta\subset \R^p$ are open, convex, and bounded.
~\label{as:domains}
\item [A2] The parameters and samples satisfy: 
$\theta_{1:T} \sim \mathbb{Q}$, and $x_{1:N}^t \sim \Pb_{\theta_t}$ for all $t \in \{1,\ldots, T\}$. 
~\label{as:pars_and_samples}
\item [A3] $\mathbb{Q}$ has density $q$ such that $\inf_{\theta \in \Theta} q(\theta)>0$ and $\sup_{\theta \in \Theta} q(\theta) < \infty$, and $\Pb_\theta$ has density $p_\theta$ such that $\theta \mapsto p_\theta(x)$ is of smoothness $s_I > p/2$, and for any $\theta \in \Theta$, it holds that $\inf_{\theta \in \Theta, x \in \calX} p_{\theta}(x)>0$ and $\sup_{\theta \in \Theta}\max_{|\alpha|\leq s} \|D_\theta^\alpha p_\theta(x)\|_{\calL^\infty(\calX)}<\infty$.
~\label{as:densities}
\item [A4] The kernels $k_\calX$ and $k_\Theta$ are 
%Mat\'ern 
of smoothness $s_\calX \in (d/2, s_f]$ and $s_\Theta \in (p/2, s_I]$ respectively.~\label{as:kernels} 
\item [A5] The regularisers satisfy $\lambda_{\calX}=0$ and $\lambda_{\Theta} = \calO(T^{\frac{1}{2}})$.
\end{enumerate}
\vspace{-2pt}
Then,  we have that for any $\delta \in (0, 1)$ there is an $N_0>0$ such that for any $N \geq N_0$ with probability at least $1-\delta$ it holds that
\begin{align*}
    \left\| \hat I_\mathrm{CBQ} - I \right\|_{\calL^2(\Theta)}
    \leq  C_0(\delta) N^{-\frac{s_\calX}{d} + \varepsilon} + C_1(\delta) T^{-\frac{1}{4}}  ,
\end{align*}
% \begin{align*}
%         \| \hat I_\mathrm{CBQ}(\theta) - I(\theta) \|_{\calL^2(\Theta)} \leq  C_1(\delta) T^{-\frac{r}{2r+1}} + C_2(\delta) T^{-\frac{r+1}{2r+1}} N^{-\frac{2s_\calX}{d} + \varepsilon} ,
% \end{align*}
%
for any arbitrarily small $\varepsilon>0$, and the constants $C_0(\delta)=\calO(1/\delta)$ and $C_1(\delta)=\calO(\log(1/\delta))$ are independent of $N, T, \varepsilon$.
\end{theorem}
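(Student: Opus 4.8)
The plan is to bound the error by introducing an \emph{oracle} second-stage estimator $\tilde I_\mathrm{CBQ}$, defined exactly as $\hat I_\mathrm{CBQ}$ but with the noisy first-stage outputs $\hat I_\mathrm{BQ}(\theta_{1:T})$ replaced by the exact values $I(\theta_{1:T})$. A triangle inequality then gives
\begin{align*}
\big\| \hat I_\mathrm{CBQ} - I \big\|_{\calL^2(\Theta)} \leq \underbrace{\big\| \hat I_\mathrm{CBQ} - \tilde I_\mathrm{CBQ} \big\|_{\calL^2(\Theta)}}_{\text{(stage-1 propagation)}} + \underbrace{\big\| \tilde I_\mathrm{CBQ} - I \big\|_{\calL^2(\Theta)}}_{\text{(stage-2 regression)}},
\end{align*}
and I would control the two terms separately, since the first carries all the dependence on $N$ and the second essentially all the dependence on $T$. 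Throughout, assumption A3 (density $q$ bounded above and below) makes the $\calL^2(\Theta)$ norm equivalent whether taken with respect to Lebesgue measure or to $\mathbb{Q}$, letting me pass freely between the population norm in the statement and the $\mathbb{Q}$-weighted norm that is natural for the regression.

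For the \textbf{stage-1 propagation term}, the first step is a uniform-in-$\theta$ bound on the BQ error. Because $x\mapsto f(x,\theta)$ has smoothness $s_f \geq s_\calX$, the integrand lies in the RKHS of $k_\calX$, so BQ is well-specified and the posterior-mean error $|\hat I_\mathrm{BQ}(\theta_t) - I(\theta_t)|$ is bounded by the worst-case quadrature error, which a scattered-data/sampling inequality \citep{Kanagawa2019,wynne2021convergence} controls by a power of the fill distance $h_t$ of $x^t_{1:N}$ in $\calX$. Since $x^t_{1:N}\sim \Pb_{\theta_t}$ with density bounded below (A3), a probabilistic fill-distance estimate gives $h_t = \calO((\log N / N)^{1/d})$ simultaneously for all $t$ with high probability, yielding $\max_t |\hat I_\mathrm{BQ}(\theta_t) - I(\theta_t)| = \calO(N^{-s_\calX/d+\varepsilon})$ and likewise $\max_t \sigma^2_\mathrm{BQ}(\theta_t) = \calO(N^{-2 s_\calX/d + \varepsilon})$; the condition $\sup_\theta \max_{|\alpha|<s_I}\|D_\theta^\alpha f(\cdot,\theta)\|_{\calW^{s_I,2}(\calX)}<\infty$ together with A3 is exactly what makes the implied constants uniform in $\theta$. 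The second step propagates this through the linear stage-2 map: writing $e := \hat I_\mathrm{BQ}(\theta_{1:T}) - I(\theta_{1:T})$, $K_\Theta := k_\Theta(\theta_{1:T},\theta_{1:T})$ and $D := \mathrm{diag}(\lambda_\Theta+\sigma^2_\mathrm{BQ}(\theta_{1:T}))\succeq \lambda_\Theta \Id$, the difference $\hat I_\mathrm{CBQ}-\tilde I_\mathrm{CBQ}$ equals $k_\Theta(\cdot,\theta_{1:T})(K_\Theta+D)^{-1}e$, so bounding its $\calL^2(\Theta)$ norm by its RKHS norm and using $(K_\Theta+D)^{-1}\preceq \lambda_\Theta^{-1}\Id$ turns the $\ell^2$ size of $e$ and the growth $\lambda_\Theta=\calO(T^{1/2})$ into a $\calO(T^{1/4}N^{-s_\calX/d})$ bound; for $N$ large relative to $T$ (which is permitted, as $N_0$ may depend on $T$) this polynomial-in-$T$ factor is absorbed into the arbitrarily small exponent $\varepsilon$.

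For the \textbf{stage-2 regression term}, $\tilde I_\mathrm{CBQ}$ is a kernel ridge regression of the exact targets $I(\theta_{1:T})$ with kernel $k_\Theta$ and ridge $D$; after normalising the Gram matrix by $T$ this corresponds to an effective regularisation $\nu \asymp \lambda_\Theta/T = \calO(T^{-1/2})$. Since $s_\Theta\leq s_I$, the target $I$ lies in the RKHS $\calH_\Theta$ of $k_\Theta$, so the problem is well-specified and a standard KRR bias--variance decomposition applies: the approximation (bias) term scales as $\sqrt{\nu}\,\|I\|_{\calH_\Theta}=\calO(T^{-1/4})$, while the estimation (variance) term, driven by the first-stage variances $\sigma^2_\mathrm{BQ}(\theta_t)$ and the effective dimension of $k_\Theta$ at level $\nu$, is of comparable or smaller order and folds into the $N$-dependent term. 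The choice $\lambda_\Theta=\calO(T^{1/2})$ in A5 is precisely the value that balances this stage-2 bias (which grows with $\lambda_\Theta$) against the propagation term (which shrinks with $\lambda_\Theta$). Transferring the empirical fit at $\theta_{1:T}$ to the population $\calL^2(\Theta)$ error uses a concentration argument for the random design $\theta_{1:T}\sim\mathbb{Q}$, again relying on A3.

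The main obstacle is the \textbf{stage-1 analysis under random design}: the fast BQ rates are usually stated for quasi-uniform or deterministic points, whereas here $x^t_{1:N}$ are i.i.d.\ from the $\theta$-dependent measure $\Pb_{\theta_t}$, so I must (i) obtain a high-probability fill-distance bound uniformly over the continuum $\{\Pb_\theta\}_{\theta\in\Theta}$, which is where the smoothness of $\theta\mapsto p_\theta$ and the uniform lower bound $\inf_{\theta,x}p_\theta(x)>0$ in A3 are needed, and (ii) keep every implied constant uniform in $\theta$ so that a union bound over $t=1,\dots,T$, and the final passage to all of $\Theta$, costs only the $\varepsilon$ and $\log$ factors. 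A secondary difficulty is that the two stages are coupled, since the ridge $D$ depends on the first-stage outputs; the oracle decomposition is what decouples them. Finally, to obtain $C_0(\delta)=\calO(1/\delta)$ and $C_1(\delta)=\calO(\log(1/\delta))$ I would apply Markov's inequality to the heavier-tailed propagation term and an exponential (sub-Gaussian) concentration bound to the regression term.
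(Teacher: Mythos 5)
There is a genuine gap, and it sits exactly where your propagation bound loses a factor of $T^{1/4}$. Writing $e = \hat I_\mathrm{BQ}(\theta_{1:T}) - I(\theta_{1:T})$ and bounding $\|k_\Theta(\cdot,\theta_{1:T})(K_\Theta+D)^{-1}e\|_{\calH_\Theta}$ via $\|e\|_2 \leq \sqrt{T}\max_t|e_t|$ and $(K_\Theta+D)^{-1}\preceq \lambda_\Theta^{-1}\Id$ treats the stage-1 errors as an unstructured vector, and the resulting $T^{1/4}N^{-s_\calX/d+\varepsilon}$ cannot be reduced to $C_0(\delta)N^{-s_\calX/d+\varepsilon}$ with $C_0$ independent of $T$ unless $N_0$ is allowed to grow like $T^{1/(4(\varepsilon-\varepsilon'))}$, i.e.\ to depend on both $T$ and $\varepsilon$. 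That proves a strictly weaker statement: in particular it says nothing in the balanced regime $N^{-s_\calX/d}\asymp T^{-1/4}$ (where $N$ is only polynomial in $T$), which is precisely the regime in which the theorem's two terms are meant to trade off, and it contradicts the theorem's requirement that the constants (and, implicitly, the threshold $N_0$, which in the paper comes only from a fill-distance condition on $\calX$) be uniform in $T$ and $\varepsilon$. The paper avoids this accumulation by a structurally different argument: it rewrites CBQ as a \emph{noisy importance-weighted kernel ridge regression} and bounds the stage-1 error not entrywise but as a function of $\theta$, namely $\|I_\mathrm{BQ}-I\|_{\calH_\Theta} = \calO(N^{-s_\calX/d+\varepsilon}/\delta)$, by differentiating the BQ error in $\theta$ and applying the interpolation bound of \cite{wynne2021convergence} to each weak derivative $D^\alpha_\theta$. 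This is the actual role of the assumptions $\sup_\theta\max_{|\alpha|<s_I}\|D^\alpha_\theta f(\cdot,\theta)\|_{\calW^{s_I,2}(\calX)}<\infty$ and the $\theta$-smoothness of $p_\theta$ in A3 (your proposal reads them only as making stage-1 constants uniform in $\theta$, which misses their purpose). Once the error is controlled in $\calH_\Theta$, the operator-theoretic stage-2 analysis (extending \citet[Theorem 4]{gogolashvili2023importance}) passes it through with constants independent of $T$, with no $\sqrt{T}$ factor ever appearing.

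A secondary issue: your oracle estimator $\tilde I_\mathrm{CBQ}$ retains the ridge $D=\mathrm{diag}(\lambda_\Theta+\sigma^2_\mathrm{BQ}(\theta_{1:T}))$, which depends on the stage-1 samples $x^t_{1:N}$, so the decomposition does not in fact decouple the stages and the regression term is not a ``standard'' KRR problem with deterministic regularisation. This is patchable (with high probability $\lambda_\Theta\Id\preceq D\preceq(\lambda_\Theta+\calO(N^{-2s_\calX/d+\varepsilon}))\Id$, so a perturbation argument reduces to constant ridge), but it needs to be done; the paper instead absorbs this randomness into zero-mean noise on the importance weights and controls it with a Bernstein inequality for Hilbert-space-valued random variables, exploiting the i.i.d.\ structure of the pairs $(\theta_t, x^t_{1:N})$.
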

To prove the result, we represent the CBQ estimator as a \emph{noisy importance-weighted kernel ridge regression} (NIW-KRR) estimator. Then, we extend convergence results for the \emph{noise-free} IW-KRR estimator established in~\citet[Theorem 4]{gogolashvili2023importance} to bound Stage 2 error in terms of the error in Stage 1, which in turn we bound via results on the convergence of GP interpolation from \cite{wynne2021convergence}. See~\Cref{appendix:convergence_rate} for the detailed proof.

We now briefly discuss our assumptions. Many of these were simplified to improve readability, in which case we highlight possible generalisations. A1 is used to guarantee the points eventually cover the domain, and could straightforwardly be generalised to any open and bounded domain with Lipschitz boundary satisfying an interior cone condition; see \cite{kanagawa2020convergence,wynne2021convergence}. A2 ensures  $\theta_{1:T}$ and $x_{1:N}^t$ cover $\calX$ and $\Theta$ sufficiently fast in probability as $N$ and $T$ grow.  The assumption on the point sets could also be straightforwardly generalised to active learning designs or grids following existing work on BQ convergence \citep{Kanagawa2019,kanagawa2020convergence,wynne2021convergence}. A3 ensures that the points will fill $\calX$. A4 guarantees that our first and second stage GPs have the right level of regularity for the problem, although the range of smoothness values could be significantly extended following the approach of \cite{kanagawa2020convergence}. For simplicity, we also implicitly assume that the kernel hyperparameters (such as lengthscales and amplitudes) are known, but this could be extended to estimation in bounded sets; see \citep{Teckentrup2020}. Finally, A5 requires $\lambda_{\calX}=0$, but this could be relaxed at the cost of slowing down convergence (see \Cref{appendix:convergence_rate}). In contrast, growing $\lambda_{\Theta}>0$ in $T$ is natural since we work in a bounded domain and we expect the conditioning of the Gram matrix to become worse as $T \rightarrow \infty$.

We are now ready to discuss the implications of the theorem.
Firstly, the result is expressed in probability to account for randomness in $\theta_{1:T}$ and $x_{1:N}^t$, and provides a rate of $\calO(T^{-1/4}+ N^{- s_\calX/d + \varepsilon})$. We can see that growing $N$ will only help up to some extent (as the second terms approaches zero fast), but that growing $T$ is essential to ensure convergence. This is intuitive since we cannot expect to approximate $I(\theta)$ uniformly simply by increasing $N$ at some fixed points in $\Theta$. 
Despite this, we will see in \Cref{sec:experiments} that increasing $N$ will be essential to improving performance in practice. The rate in $N$ will typically be very fast for smooth targets, but is significantly slowed down for large $d$, demonstrating that our method is mostly suitable for low- to mid-dimensional problems, a common feature shared by Bayesian quadrature based algorithms~\citep{fx_quadrature, frazier2018bayesian}.
There have been some attempts to scale BQ/CBQ to high dimensions; for example in section 5.4 of \cite{fx_quadrature} where the integrand can be decomposed into a sum of low-dimensional functions, however, this is only possible in limited settings when the integrand has certain forms of sparsity. 

\begin{figure*}[t]
\vspace{-10pt}
    \centering
    \begin{minipage}{\textwidth}
    \centering
    \includegraphics[width=320pt]{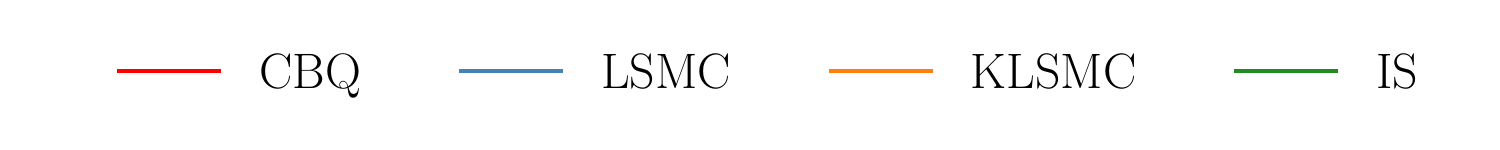}
    \vspace{-7pt}
    \end{minipage}
    
    \centering
    \begin{subfigure}{0.33\textwidth}
        \centering
        \hspace{-10pt}
        \includegraphics[width=1.0\linewidth]{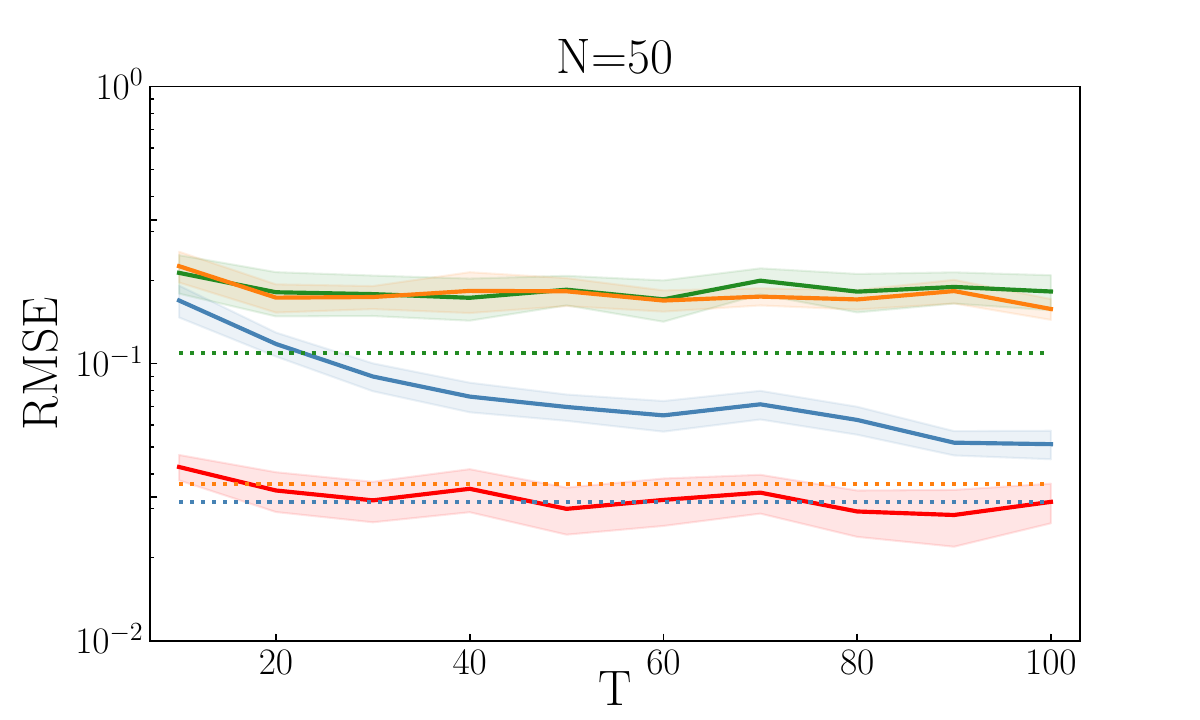}
        % \caption{RMSE with fixed $N$}
        \label{fig:bayes_sensitivity_1}
    \end{subfigure}%
    \hfill % Add horizontal space between the subfigures
    % Second plot
    \begin{subfigure}{0.33\textwidth}
        \centering
        \hspace{-10pt}
        \includegraphics[width=1.0\linewidth]{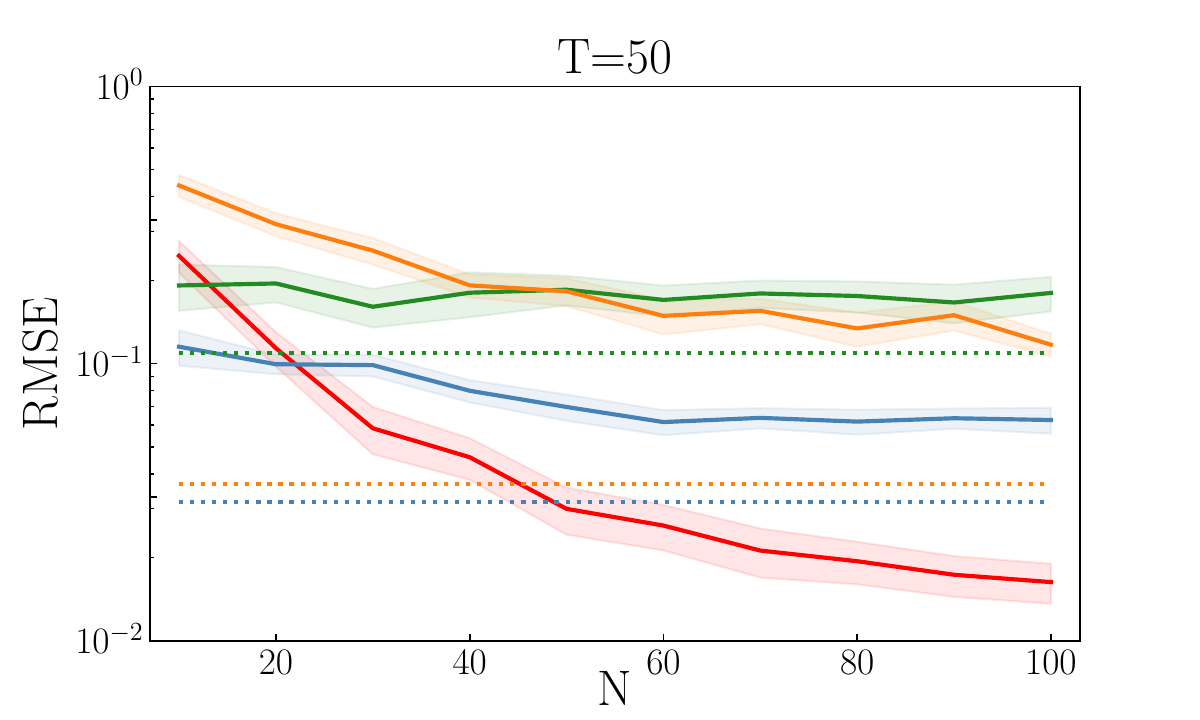}
        % \caption{RMSE with fixed $T$}
        \label{fig:bayes_sensitivity_2}
    \end{subfigure}%
    \hfill % Add horizontal space between the subfigures
    % Third plot
    \begin{subfigure}{0.33\textwidth}
        \centering
        \hspace{-10pt}
        \includegraphics[width=1.0\linewidth]{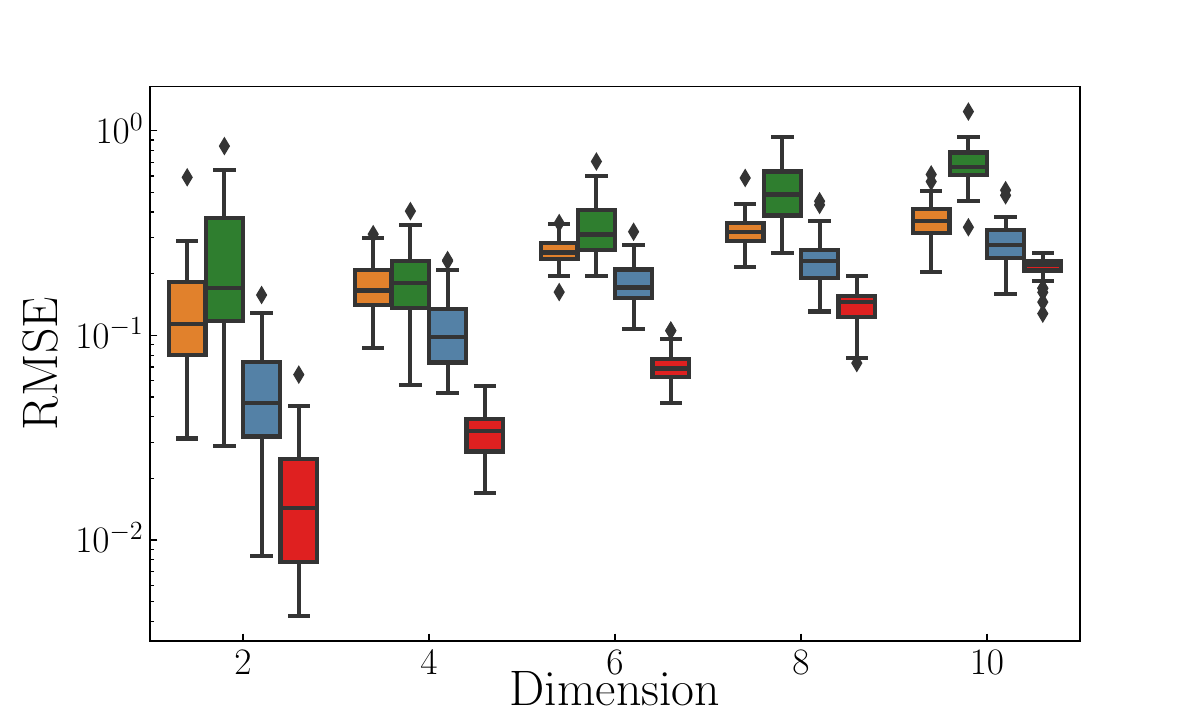}
        % \caption{RMSE with increasing $D$}
        \label{fig:bayes_sensitivity_3}
    \end{subfigure}
    \vspace{-3pt}
    \caption{\emph{Bayesian sensitivity analysis for linear models.} \textbf{Left:} RMSE of all methods when $d=2$ and $N=50$. \textbf{Middle:} RMSE of all methods when $d=2$ and $T=50$. \textbf{Right:} RMSE of all methods when $N=T=100$.}
    \label{fig:bayes_sensitivity}
\end{figure*}

Although the bound is dominated by a term $\calO(T^{-1/4})$ in $T$, the proof can be extended to provide a more general result with rate up to $\calO(T^{-1/3})$ under an additional ``source condition'' which requires stronger regularity from $f$; this is further discussed in~\Cref{appendix:convergence_rate}. 
The latter rate is minimax optimal for any nonparametric regression-based method \citep{Stone1982}. Compared to baselines, we note that we cannot expect a similar result for IS since IS does not apply when $f$ depends on $\theta$. 
For LSMC, we also cannot guarantee consistency of the algorithm when $I(\theta)$ is not a polynomial (unless $p \rightarrow \infty$; see \cite{stentoft2004convergence}). 
Although we are not aware of any such result, we expect KLSMC to have the same rate in $T$ as CBQ, and for CBQ to be significantly faster than KLSMC in $N$. This is due to the second stage of KLSMC being essentially the same as that for CBQ, and KLSMC using MC rather than BQ in the first stage: by~\cite{novak1988deterministic}, the convergence rate of BQ, $N^{-s_\mathcal{X}/d}$, is faster than that of MC, $N^{-1/2}$, in the case where the function $x \to f(x, \theta)$ is of smoothness at least $s_\mathcal{X} > d/2$.

\section{Experiments}\label{sec:experiments}

We will now evaluate the empirical performance of CBQ against baselines including IS, LSMC and KLSMC. 
For the first three experiments, we focus on the case where $f$ does not depend on $\theta$ (i.e. $f(x, \theta) = f(x)$), and for the fourth experiment we focus on the case where $f$ depends on both $x$ and $\theta$. 
All methods use $\theta_{1:T} \sim \mathbb{Q}$ ($\mathbb{Q}$ is specified individually for each experiment) and $x_{1:N}^t \sim \mathbb{P}_{\theta_t}$ to ensure a fair comparison, and we therefore use $\mathbb{P}_{\theta_1}, \ldots, \mathbb{P}_{\theta_T}$ as our importance distributions in IS. 
For experiments on nested expectations, we use standard Monte Carlo for the outer expectation and use CBQ along with all baseline methods to compute conditional expectation for the inner expectation.

Detailed descriptions of hyperparameter selection for CBQ and all baseline methods can be found in \Cref{appendix:practical_considerations}. 
Detailed experimental settings can be found in \Cref{appendix:bayes_sensitivity} to \Cref{appendix:decision} along with detailed checklists on whether the assumptions of \Cref{thm:convergence} can be satisfied in each experiment.
We also provide additional experiments in \Cref{appendix:experiments}. 
\Cref{appendix:cbq_mobq} includes experiments which show MOBQ obtains similar performance to CBQ, but with a computational cost which is between $10$ and $100$ times larger.
\Cref{appendix:QMC} includes experiments with quasi-Monte Carlo points~\citep{ hickernell1998generalized}, which demonstrates that CBQ is not limited to independent samples.  
\Cref{appendix:ablation} includes ablation studies on various kernels $k_\calX$ and $k_\Theta$.
\Cref{appendix:calibration} demonstrates the calibration of CBQ uncertainty.
The code to reproduce all the results in this section is available at the following GitHub repository
\href{https://github.com/hudsonchen/CBQ}{\texttt{https://github.com/hudsonchen/cbq}}. 

\paragraph{Synthetic Experiment: Bayesian Sensitivity Analysis for Linear Models.}
The prior and likelihood in a Bayesian analysis often depend on hyperparameters, and determining the sensitivity of the posterior to these is critical for assessing robustness~\citep{oakley2004probabilistic,Kallioinen2021}. One way to do this is to study how posterior expectations of interest depend on these hyperparameters, a task usually requiring the computation of conditional expectations. We consider this problem in the context of Bayesian linear regression with a zero-mean Gaussian prior with covariance $\theta \Id_d$ where $\Id_d$ is identity matrix and $\theta \in (1, 3)^d$. Using a Gaussian likelihood, we can obtain a conjugate Gaussian posterior $\mathbb{P}_{\theta}$ on the regression weights. We can then analyse sensitivity by computing the conditional expectation $I(\theta)$ of some quantity of interest $f$. For example, if  $f(x)=x^\top x$, then $I(\theta)$ is the second moment of the posterior, whereas if $f(x) = x^\top y^\ast$ for some new observation $y^\ast$, then $I(\theta)$ is the predictive mean. In these simple settings, $I(\theta)$ can be computed analytically, making this a good synthetic example for benchmarking.

\begin{figure}[h]
    \centering
    \includegraphics[width=0.33\textwidth]{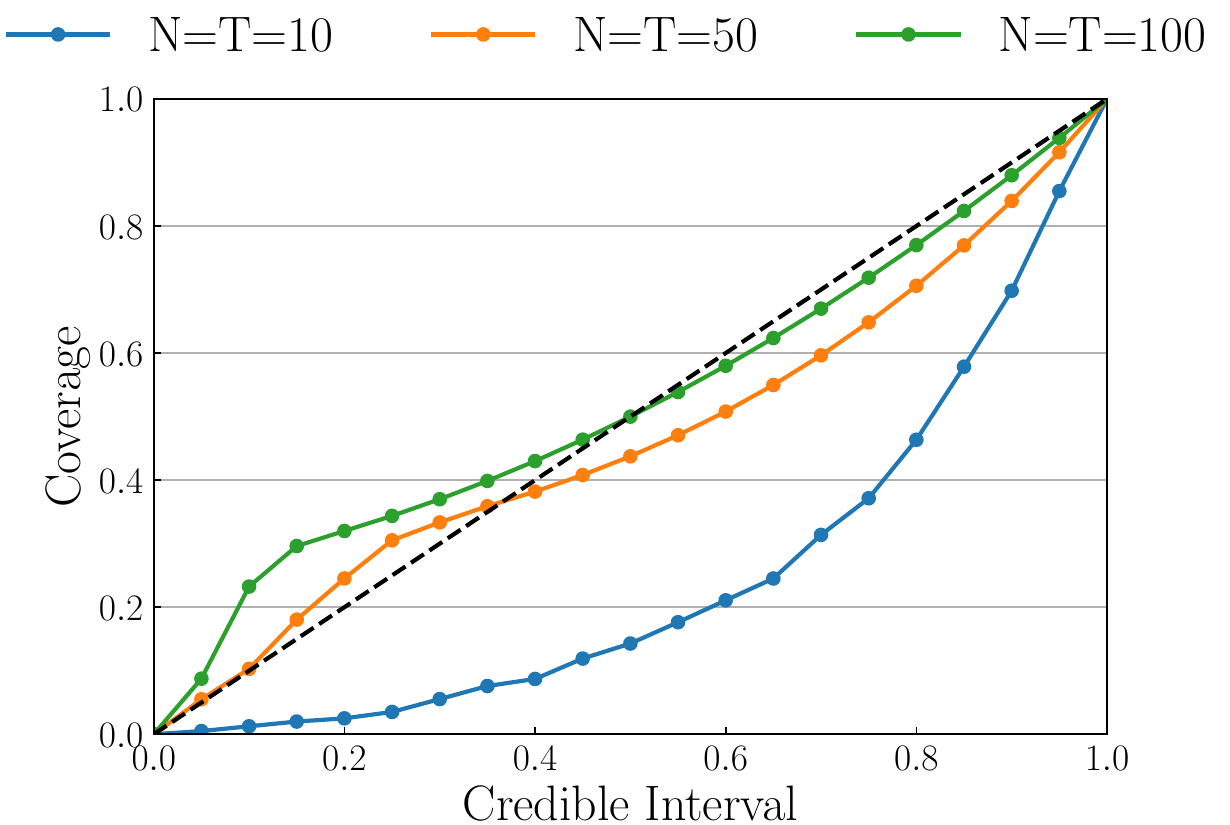}
    \vspace{-5pt}
    \caption{\emph{Bayesian linear model sensitivity analysis in $d=2$}.}
    \label{fig:calib_bayes}
\end{figure}

Our results in \Cref{fig:bayes_sensitivity} are for the second moment, whilst the results for the predictive mean are in \Cref{appendix:bayes_sensitivity}.  
We measure performance in terms of root mean squared error (RMSE) and use $ \mathbb{Q} = \operatorname{Unif}(1, 3)^d$.  For CBQ, $k_\calX$ is chosen to be a Gaussian kernel so that the kernel mean embedding $\mu$ has a closed form, and $k_\Theta$ is a Mat\'ern-3/2 kernel.
\Cref{fig:bayes_sensitivity} shows the performance of CBQ against baselines with varying $N$, $T$ and $d$. LSMC performs well for this problem, and this can be explained by the fact that $I(\theta)$ is a polynomial in $\theta$.
Despite this, the left and middle plots show that CBQ consistently outperforms all competitors. Specifically, its rate of convergence is initially much faster in $N$ than in $T$, which confirms the intuition from \Cref{thm:convergence}. The dotted lines also give the performance of baselines under a very large number of samples $N=T=1000$, and we see that CBQ is either comparable or better than these even when it has access only to much smaller $N$ and $T$. In the right-most panel, we see that the baselines gradually catch up with CBQ as $d$ grows, which is again expected since the rate in \Cref{thm:convergence} is $O(N^{-2 s_{\calX}/d+\varepsilon})$ in $N$. 
Additional experimental results demonstrating these are consistent conclusions for different values of $N, T$ can be found in \Cref{appendix:bayes_sensitivity}.

Our last plot is in \Cref{fig:calib_bayes} and studies the calibration of the CBQ posterior. The coverage is the $\%$ of times a credible interval contains $I(\theta)$ under repetitions of the experiment. The black diagonal line represents perfect calibration, whilst any curve lying above or below the black line indicates underconfidence or overconfidence respectively. 
We observe that when $N$ and $T$ are as small as $10$, CBQ is overconfident. When $N$ and $T$ increase, CBQ becomes underconfident, meaning that our posterior variance is more inflated than needed from a frequentist viewpoint.
Calibration plots for the rest of the experiments can be found in \Cref{appendix:experiments} and demonstrate similar results. 
It is generally preferable to be under-confident than overconfident, and CBQ does a good job most of the time. 
We expect that overconfidence in small $N$ and $T$ can be explained by a poor performance of empirical Bayes, and therefore caution users to not overly rely on the reported uncertainty in this regime.

%%%%%%%%%%%%%%%%%%%%%%%%%%%%%%%%%%%%%%%%%%%%%%%%%%%%%%

\begin{figure*}[t]
\vspace{-10pt}
    \begin{minipage}{0.99\textwidth}
    \centering
    \includegraphics[width=480pt]{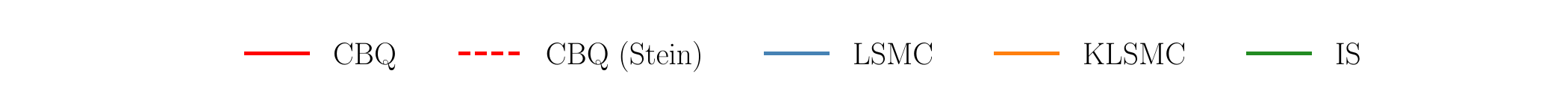}
    \vspace{-10pt}
    \end{minipage}
    
    \centering
    
    \begin{subfigure}{0.33\textwidth}
        \centering
        \includegraphics[width=1.0\linewidth]{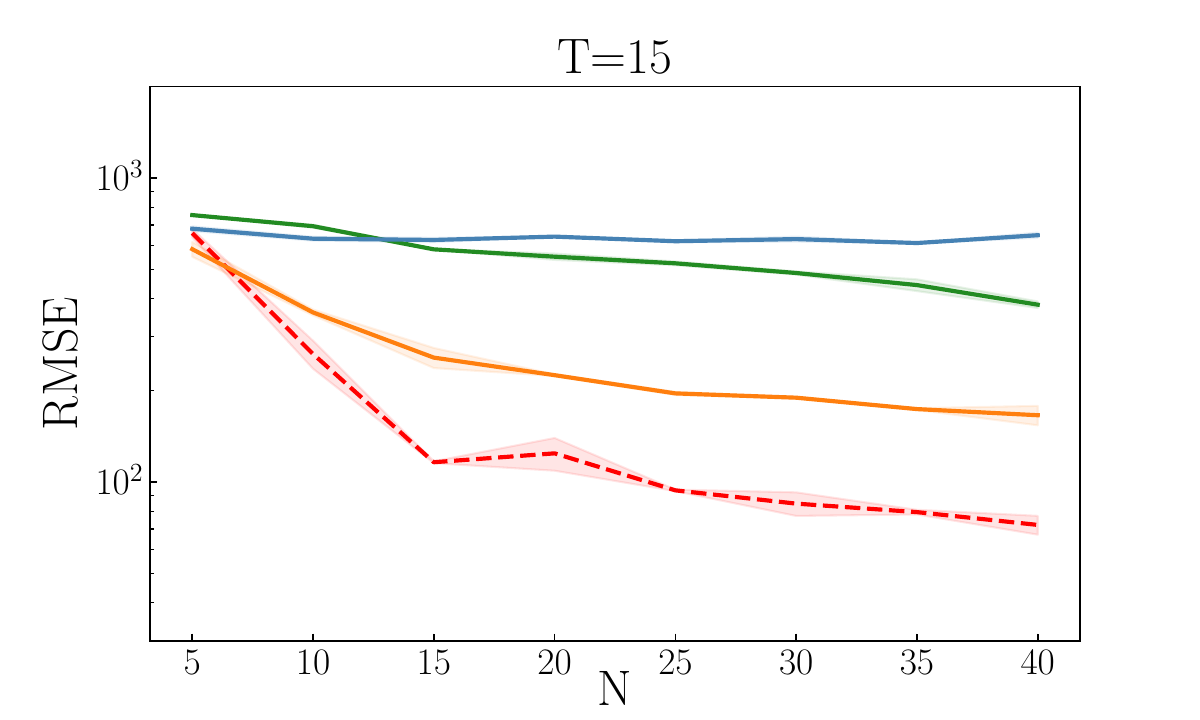}
        \label{fig:sir}
    \end{subfigure}%
    \hfill 
    \begin{subfigure}{0.33\textwidth}
        \centering
        \includegraphics[width=1.0\linewidth]{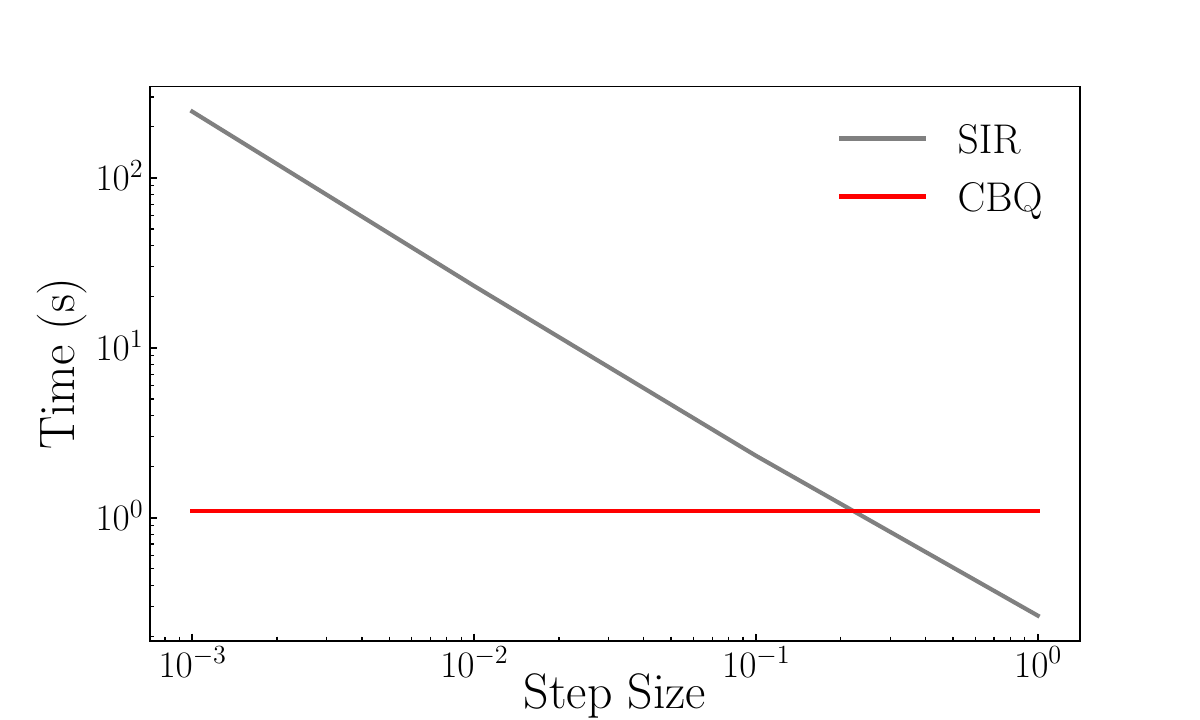}
        \label{fig:sir_time}
    \end{subfigure}
        \hfill
    \begin{subfigure}{0.33\textwidth}
        \centering
        \includegraphics[width=1.0\linewidth]{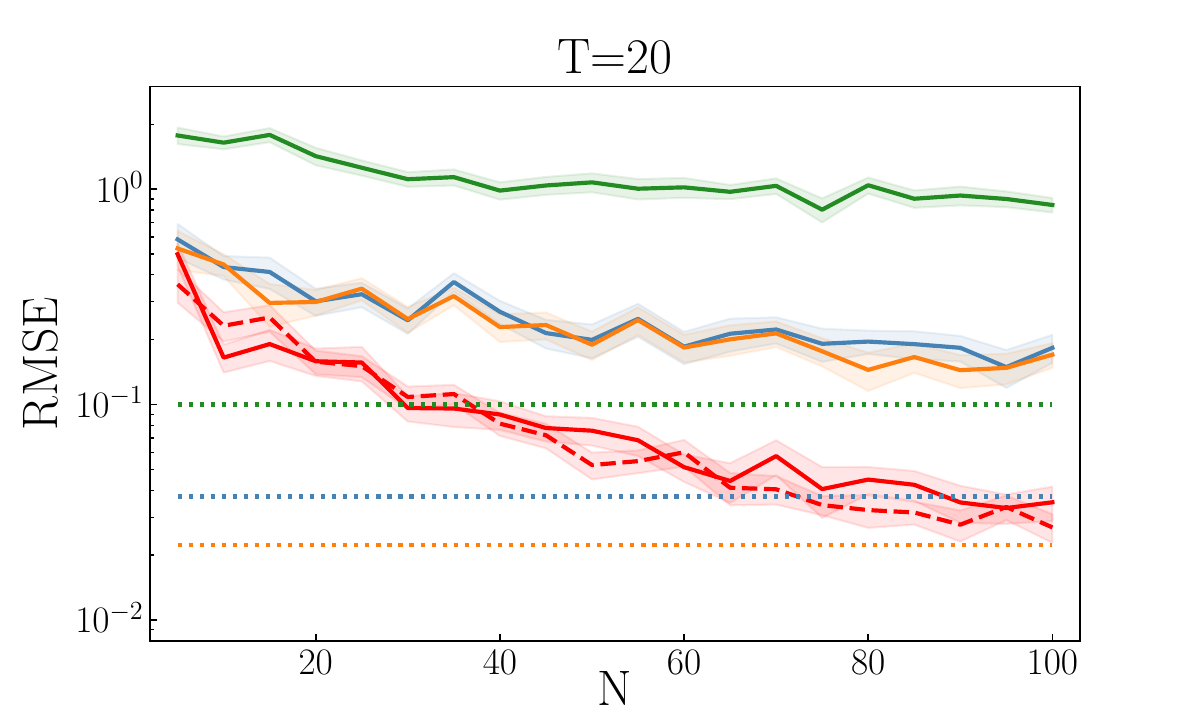}
        \label{fig:finance}
    \end{subfigure}%
    \vspace{-15pt}
    \caption{\emph{Bayesian sensitivity analysis for SIR Model $\&$ Option pricing in mathematical finance.} \textbf{Left:} RMSE of all methods for the SIR example with $T=15$. \textbf{Middle:} The computational cost (in wall clock time) for CBQ ($T=15, N=40$) and for obtaining one single numerical solution from SIR under different discretization step sizes. In practice, the process of obtaining samples from SIR equations is repeated $NT$ times.
    \textbf{Right:} RMSE of all methods for the finance example with $T=20$.
    }
    \label{fig:finance_sir}
\end{figure*}

\paragraph{Bayesian Sensitivity Analysis for the Susceptible-Infectious-Recovered (SIR) Model.}
The SIR model is commonly used to simulate the dynamics of infectious diseases through a population~\citep{kermack1927sir}. In this model, the dynamics are governed by a system of differential equations parametrised by a positive infection rate and a recovery rate (see \Cref{appendix:sir}). 
The accuracy of the numerical solution to this system typically hinges on the step size.
While smaller step sizes yield more accurate solutions, they are also associated with a much higher computational cost. 
For example, using a step size of $0.1$ days for simulating a $150$-day period would require a computation time of $3$ seconds for generating a single sample, which is more costly than running CBQ on $N=40, T=15$ samples.
The cost would become even larger as the step size gets smaller, as depicted in the middle panel of \Cref{fig:finance_sir}. Consequently, when performing Bayesian sensitivity for SIR, there is clear necessity for more data-efficient algorithms such as CBQ.

We perform a sensitivity analysis for the parameter $\theta$ of our $\operatorname{Gamma}(\theta, 10)$ prior on the infection rate $x$. The parameter $\theta$ represents the initial belief of the infection rate deduced from the study of the virus in the laboratory at the beginning of the outbreak.
% , and $\xi$ represents the amount of uncertainty (we fix $\xi=10$ for simplicity). 
We are interested in the expected peak number of infected individuals: $f(x)= \max_r N^r_I(x)$, where $N^r_I(x)$ is the solution to the SIR equations and represents the number of infections at day $r$. 
It is important to study the sensitivity of $I(\theta)$ to the shape parameter $\theta$. 
The total population is set to be $10^6$ and 
$\mathbb{Q} = \operatorname{Unif}\left(2,9\right)$ and $\mathbb{P}_{\theta_t} = \operatorname{Gamma}(\theta_t, 10)$. 
We use a Monte Carlo estimator with $5000$ samples as the pseudo ground truth and evaluate the RMSE across all methods. 
For CBQ, we employ a Stein kernel for $k_\calX$, with the Mat\'ern-3/2 as the base kernel, and $k_\Theta$ is selected to be a Mat\'ern-3/2 kernel.

We can see in the left panel of \Cref{fig:finance_sir} that CBQ clearly outperforms baselines including IS, LSMC and KLSMC in terms of RMSE.
Although the CBQ estimator exhibits a higher computational cost compared to baselines, we have demonstrated in the middle panel of \Cref{fig:finance_sir} that, due to the increased computational expense of obtaining samples with smaller step size, using CBQ is ultimately more efficient overall within the same period of time.
Additional experimental results demonstrating these are consistent conclusions for different values of $T$ can be found in \Cref{appendix:sir}.

%%%%%%%%%%%%%%%%%%%%%%%%%%%%%%%%%%%

\paragraph{Option Pricing in Mathematical Finance.}
Financial institutions are often interested in computing the expected loss of their portfolios if a shock were to occur in the economy, which itself requires the computation of conditional expectations (it is in fact in this context that LSMC and KLSMC was first proposed). This is typically a challenging computational problem since simulating from the stock of interest often requires the numerical solution of stochastic differential equations over a long time horizon (see \cite{achdou2005computational}), making data-efficient methods such as CBQ particularly desirable.

Our next experiment is representative of this class of problems, but has been chosen to have a closed-form expected loss and to be amenable to cheap simulation of the stock to enable extensive benchmarking. We consider a butterfly call option whose price $S(\tau)$ at time $\tau \in [0,\infty)$ follows the Black-Scholes formula; see \Cref{appendix:black_scholes} for full details. The payoff at time $\tau$ can be expressed as
$\psi(S({\tau}))=\max (S(\tau)-K_1, 0) + \max (S(\tau)-K_2, 0) - 2\max (S(\tau) - (K_1+K_2)/2, 0)$ for two fixed constants $K_1,K_2\geq 0$.
We follow the set-up in \cite{alfonsi2021multilevel, alfonsi2022many} assuming that a shock occurs at time $\eta$ when the price is $S(\eta)=\theta \in (0,\infty)$, and this shock multiplies the price by $1 + s$ for some $s\geq 0$. As a result, the expected loss of the option is $\calL = \E_{\theta \sim \mathbb{Q}}
[ \max ( I(\theta), 0)]$, where $I(\theta) =  \int_{0}^\infty f(x) \mathbb{P}_\theta(dx)$, $x=S(\zeta)$ is the price at the time $\zeta$ at which the option matures, $f(x) = \psi(x)-\psi((1+s)x)$, and $\mathbb{P}_\theta$ and $\mathbb{Q}$ are two log-normal distributions induced from the Black-Scholes model. 

Results are presented in the right-most panel of \Cref{fig:finance_sir}. We take $K_1 = 50, K_2 = 150, \eta=1, s = 0.2$ and $\zeta=2$. 
For CBQ, $k_\Theta$ is selected to be a Mat\'ern-3/2 kernel and $k_\calX$ is either a Stein kernel with Mat\'ern-3/2 as base kernel or a logarithmic Gaussian kernel (see \Cref{appendix:black_scholes}) in which case $k_{\calX}$ is too smooth to satisfy the assumption of our theorem. 

As expected, CBQ exhibits much faster convergence in $N$ than IS, LSMC or KLSMC, and outperforms these baselines even when they are given a substantial sample size of $N=T=1000$ (see dotted lines). We can also see that CBQ with the log-Gaussian  kernel or with Stein kernel have similar performance, despite the log-Gaussian kernel not satisfying the smoothness assumptions of our theory. Additional experiments in \Cref{appendix:black_scholes} show that these results are consistent for different values of $T$.

\begin{figure}[h]
\vspace{-10pt}
  \centering
  \hspace{-20pt}
  \includegraphics[width=0.8\linewidth]{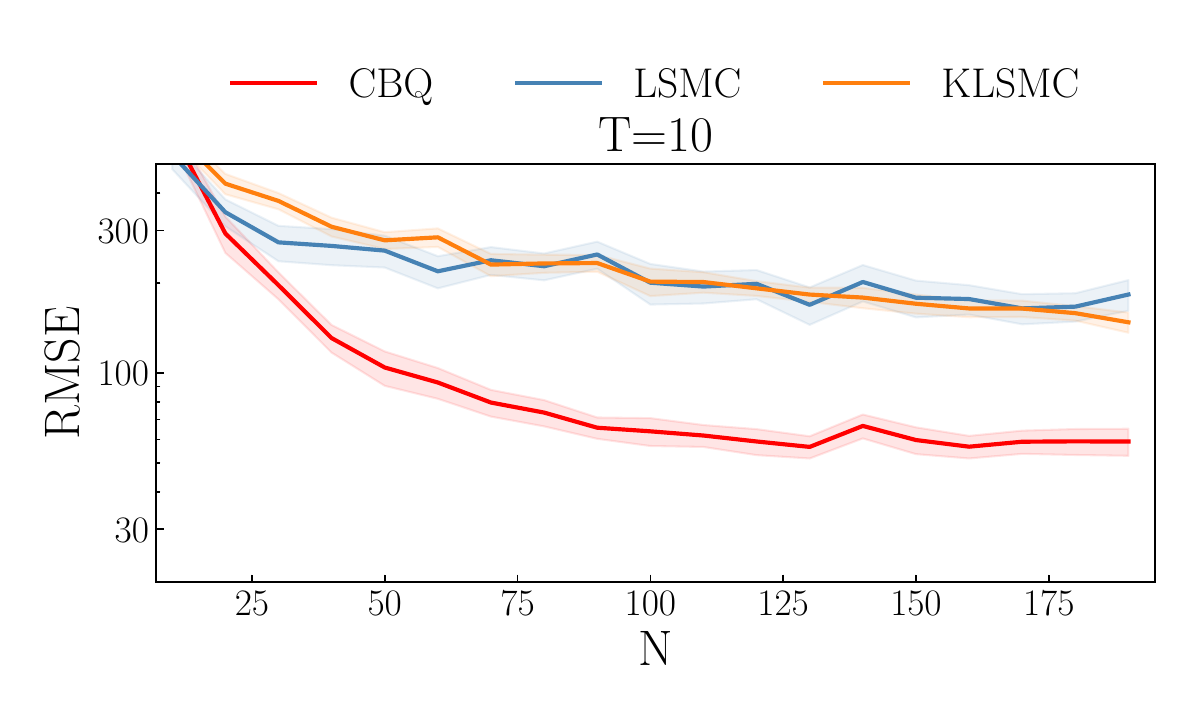}
  \vspace{-10pt}
  \caption{\emph{Uncertainty decision making in health economics.} We study RMSE for different estimators of EVPPI.}
  \label{fig:decision}
  \vspace{-15pt}
\end{figure}

\paragraph{Uncertainty Decision Making in Health Economics.} 
In the medical world, it is important to trade-off the costs and benefits of conducting additional experiments on patients.
One important measure in this context is the expected value of partial perfect information (EVPPI), which quantifies the expected gain from conducting experiments to obtain precise knowledge of some unknown variables \citep{brennan2007calculating}. 
The EVPPI can be expressed as $\E_{\theta \sim \mathbb{Q}}[\max_c I_c(\theta) ] - \max_c \E_{\theta \sim \mathbb{Q}}[I_c(\theta)]$ where $f_c$ represents a measure of patient outcome (such as quality-adjusted life-years) under treatment $c$ among a set of potential treatments $\calC$, $\theta$ denotes the additional variables we could measure, and $I_c(\theta) = \int_{\calX} f_c(x, \theta) \mathbb{P}_\theta(dx)$ denotes the expected patient outcome given our measurement of $\theta$. We highlight that for these applications $N$ and $T$ are often small due to the very high monetary cost and complexity of collecting patient data in real world.

We study the potential use of CBQ for this problem using the synthetic problem of \cite{Giles2019}, where $\mathbb{P}_{\theta}$ and $\mathbb{Q}$ are Gaussians (see \Cref{appendix:decision}). We compute EVPPI with $f_1(x, \theta)=10^4 (\theta_1 x_5 x_6 + x_7 x_8 x_{9})-(x_1 + x_2 x_3 x_4)$ and $f_2(x, \theta) = 10^4 (\theta_2 x_{13} x_{14} + x_{15} x_{16} x_{17})-(x_{10} + x_{11} x_{12} x_4)$. 
The exact practical meanings of $x$ and $\theta$ can be found in \Cref{appendix:decision}.
We draw $10^6$ samples from the joint distribution to generate a pseudo ground truth, and evaluate the RMSE across different method.
Note that IS is no longer applicable here because $f$ depends on both $x$ and $\theta$, so we only compare against KLSMC and LSMC. 
For CBQ, $k_\calX$ is a Mat\'ern-3/2 kernel and $k_\Theta$ is also a Mat\'ern-3/2 kernel. 
In \Cref{fig:decision}, we can see that CBQ consistently outperforms baselines with much smaller RMSE. The results are also consistent with different values of $T$; see \Cref{appendix:decision}.

\section{Conclusions}
We propose CBQ, a novel algorithm which is tailored for the computation of conditional expectations in the setting where obtaining samples or evaluating functions is costly. 
We show both theoretically and empirically that CBQ exhibits a fast convergence rate, and provides the additional benefit of Bayesian quantification of uncertainty. 
Looking forward, we believe further gains in accuracy could be obtained by developing active learning schemes to $N$, $T$, and the location of $\theta_{1:T}$ and $x_{1:N}^t$ for all $t$ in an adaptive manner. 
Additionally, CBQ could be extended for nested expectation problems by using a second level of BQ based on the output of second stage heteroscedastic GP, potentially leading to a further increase in accuracy.

\paragraph{Ackowledgments} The authors would like to thank Motonobu Kanagawa for some helpful discussions and pointers to the literature. ZC, MN and FXB acknowledge support from the Engineering and Physical Sciences Research Council (ESPRC) through grants [EP/S021566/1] and [EP/Y022300/1].  AG was partly supported by the Gatsby Charitable Foundation.

\clearpage

\bibliographystyle{plainnat}
\bibliography{main}

\begin{thebibliography}{91}
\providecommand{\natexlab}[1]{#1}
\providecommand{\url}[1]{\texttt{#1}}
\expandafter\ifx\csname urlstyle\endcsname\relax
  \providecommand{\doi}[1]{doi: #1}\else
  \providecommand{\doi}{doi: \begingroup \urlstyle{rm}\Url}\fi

\bibitem[Ababou et~al.(1994)Ababou, Bagtzoglou, and Wood]{Ababou1994}
Rachid Ababou, Amvrossios~C Bagtzoglou, and Eric~F Wood.
\newblock On the condition number of covariance matrices in kriging,
  estimation, and simulation of random fields.
\newblock \emph{Mathematical Geology}, 26:\penalty0 99--133, 1994.

\bibitem[Acerbi(2018)]{Acerbi2018}
Luigi Acerbi.
\newblock Variational {B}ayesian {M}onte {C}arlo.
\newblock \emph{Advances in Neural Information Processing Systems}, 31, 2018.

\bibitem[Achdou and Pironneau(2005)]{achdou2005computational}
Yves Achdou and Olivier Pironneau.
\newblock \emph{Computational methods for option pricing}.
\newblock SIAM, 2005.

\bibitem[Adachi et~al.(2022)Adachi, Hayakawa, Oberhauser, Jorgensen, and
  Osborne]{Adachi2022}
M.~Adachi, S.~Hayakawa, H.~Oberhauser, M.~Jorgensen, and M.A. Osborne.
\newblock {Fast {B}ayesian inference with batch {B}ayesian quadrature via
  kernel recombination}.
\newblock In \emph{Neural Information Processing Systems}, 2022.

\bibitem[Adams and Fournier(2003)]{adams2003sobolev}
Robert~A Adams and John~JF Fournier.
\newblock \emph{Sobolev spaces}.
\newblock Elsevier, 2003.

\bibitem[Alfonsi et~al.(2021)Alfonsi, Cherchali, and
  Acevedo]{alfonsi2021multilevel}
Aur{\'e}lien Alfonsi, Adel Cherchali, and Jose Arturo~Infante Acevedo.
\newblock Multilevel {M}onte {C}arlo for computing the {SCR} with the standard
  formula and other stress tests.
\newblock \emph{Insurance: Mathematics and Economics}, 100:\penalty0 234--260,
  2021.

\bibitem[Alfonsi et~al.(2022)Alfonsi, Lapeyre, and Lelong]{alfonsi2022many}
Aur{\'e}lien Alfonsi, Bernard Lapeyre, and J{\'e}r{\^o}me Lelong.
\newblock How many inner simulations to compute conditional expectations with
  least-square {M}onte {C}arlo?
\newblock \emph{arXiv preprint arXiv:2209.04153}, 2022.

\bibitem[Anastasiou et~al.(2023)Anastasiou, Barp, Briol, Ebner, Gaunt,
  Ghaderinezhad, Gorham, Gretton, Ley, Liu, et~al.]{anastasiou2023stein}
Andreas Anastasiou, Alessandro Barp, Fran{\c{c}}ois-Xavier Briol, Bruno Ebner,
  Robert~E Gaunt, Fatemeh Ghaderinezhad, Jackson Gorham, Arthur Gretton,
  Christophe Ley, Qiang Liu, et~al.
\newblock {S}tein’s method meets computational statistics: a review of some
  recent developments.
\newblock \emph{Statistical Science}, 38\penalty0 (1):\penalty0 120--139, 2023.

\bibitem[Andrianakis and Challenor(2012)]{Andrianakis2012}
Ioannis Andrianakis and Peter~G Challenor.
\newblock The effect of the nugget on {G}aussian process emulators of computer
  models.
\newblock \emph{Computational Statistics \& Data Analysis}, 56\penalty0
  (12):\penalty0 4215--4228, 2012.

\bibitem[Aronszajn(1950)]{aronszajn1950theory}
Nachman Aronszajn.
\newblock Theory of reproducing kernels.
\newblock \emph{Transactions of the American mathematical society}, 68\penalty0
  (3):\penalty0 337--404, 1950.

\bibitem[Bharti et~al.(2023)Bharti, Naslidnyk, Key, Kaski, and
  Briol]{Bharti2023}
Ayush Bharti, Masha Naslidnyk, Oscar Key, Samuel Kaski, and
  Fran{\c{c}}ois-Xavier Briol.
\newblock Optimally-weighted estimators of the maximum mean discrepancy for
  likelihood-free inference.
\newblock In \emph{International Conference on Machine Learning}, pages
  2289--2312, 2023.

\bibitem[Bishop(2006)]{bishop:2006:PRML}
Christopher~M. Bishop.
\newblock \emph{Pattern Recognition and Machine Learning}.
\newblock Springer, 2006.

\bibitem[Bradbury et~al.(2018)Bradbury, Frostig, Hawkins, Johnson, Leary,
  Maclaurin, Necula, Paszke, Vander{P}las, Wanderman-{M}ilne, and
  Zhang]{jax2018github}
James Bradbury, Roy Frostig, Peter Hawkins, Matthew~James Johnson, Chris Leary,
  Dougal Maclaurin, George Necula, Adam Paszke, Jake Vander{P}las, Skye
  Wanderman-{M}ilne, and Qiao Zhang.
\newblock {JAX}: composable transformations of {P}ython+{N}um{P}y programs,
  2018.

\bibitem[Brennan et~al.(2007)Brennan, Kharroubi, O'Hagan, and
  Chilcott]{brennan2007calculating}
Alan Brennan, Samer Kharroubi, Anthony O'Hagan, and Jim Chilcott.
\newblock Calculating partial expected value of perfect information via {M}onte
  {C}arlo sampling algorithms.
\newblock \emph{Medical Decision Making}, 27\penalty0 (4):\penalty0 448--470,
  2007.

\bibitem[Briol et~al.(2019)Briol, Oates, Girolami, Osborne, and
  Sejdinovic]{fx_quadrature}
Fran{\c{c}}ois-Xavier Briol, Chris~J Oates, Mark Girolami, Michael~A Osborne,
  and Dino Sejdinovic.
\newblock {Probabilistic integration: A role in statistical computation? (with
  discussion)}.
\newblock \emph{Statistical Science}, 34\penalty0 (1):\penalty0 1--22, 2019.

\bibitem[Caponnetto and De~Vito(2007)]{caponnetto2007optimal}
Andrea Caponnetto and Ernesto De~Vito.
\newblock Optimal rates for the regularized least-squares algorithm.
\newblock \emph{Foundations of Computational Mathematics}, 7:\penalty0
  331--368, 2007.

\bibitem[Casella(1985)]{casella1985emp_bayes}
George Casella.
\newblock An introduction to empirical {B}ayes data analysis.
\newblock \emph{The American Statistician}, 39\penalty0 (2):\penalty0 83--87,
  1985.

\bibitem[Chaloner and Verdinelli(1995)]{Chaloner1995}
Kathryn Chaloner and Isabella Verdinelli.
\newblock {B}ayesian experimental design: A review.
\newblock \emph{Statistical Science}, pages 273--304, 1995.

\bibitem[Chau et~al.(2021)Chau, Bouabid, and Sejdinovic]{chau2021deconditional}
Siu~Lun Chau, Shahine Bouabid, and Dino Sejdinovic.
\newblock Deconditional downscaling with {G}aussian processes.
\newblock \emph{Advances in Neural Information Processing Systems},
  34:\penalty0 17813--17825, 2021.

\bibitem[Cockayne et~al.(2019)Cockayne, Oates, Sullivan, and
  Girolami]{Cockayne2017BPNM}
Jon Cockayne, Chris~J Oates, Timothy~John Sullivan, and Mark Girolami.
\newblock {B}ayesian probabilistic numerical methods.
\newblock \emph{SIAM review}, 61\penalty0 (4):\penalty0 756--789, 2019.

\bibitem[Demange-Chryst et~al.(2022)Demange-Chryst, Bachoc, and
  Morio]{Demange-Chryst2022}
Julien Demange-Chryst, Fran{\c{c}}ois Bachoc, and J{\'e}r{\^o}me Morio.
\newblock Efficient estimation of multiple expectations with the same sample by
  adaptive importance sampling and control variates.
\newblock \emph{arXiv preprint arXiv:2212.00568}, 2022.

\bibitem[DeVore and Sharpley(1993)]{devore1993besov}
Ronald~A DeVore and Robert~C Sharpley.
\newblock Besov spaces on domains in $\mathbb{R}^d$.
\newblock \emph{Transactions of the American Mathematical Society},
  335\penalty0 (2):\penalty0 843--864, 1993.

\bibitem[Diaconis(1988)]{Diaconis1988}
Persi Diaconis.
\newblock {B}ayesian numerical analysis.
\newblock \emph{Statistical Decision Theory and Related Topics IV}, 1:\penalty0
  163--175, 1988.

\bibitem[Evans(2018)]{evans2018measure}
Lawrence~Craig Evans.
\newblock \emph{Measure theory and fine properties of functions}.
\newblock Routledge, 2018.

\bibitem[Fabozzi et~al.(2017)Fabozzi, Paletta, and Tunaru]{Fabozzi2017}
F.~J. Fabozzi, T.~Paletta, and R.~Tunaru.
\newblock {An improved least squares Monte Carlo valuation method based on
  heteroscedasticity}.
\newblock \emph{European Journal of Operational Research}, 263\penalty0
  (2):\penalty0 698--706, 2017.

\bibitem[Frazier(2018)]{frazier2018bayesian}
Peter~I Frazier.
\newblock Bayesian optimization.
\newblock In \emph{Recent advances in optimization and modeling of contemporary
  problems}, pages 255--278. Informs, 2018.

\bibitem[Fremlin(2000)]{fremlin2000measure}
David~Heaver Fremlin.
\newblock \emph{Measure theory}, volume~4.
\newblock 2000.

\bibitem[Gerber and Chopin(2015)]{gerber2015sequential}
Mathieu Gerber and Nicolas Chopin.
\newblock Sequential quasi {M}onte {C}arlo.
\newblock \emph{Journal of the Royal Statistical Society Series B: Statistical
  Methodology}, 77\penalty0 (3):\penalty0 509--579, 2015.

\bibitem[Gessner et~al.(2020)Gessner, Gonzalez, and
  Mahsereci]{gessner2020active}
Alexandra Gessner, Javier Gonzalez, and Maren Mahsereci.
\newblock Active multi-information source {B}ayesian quadrature.
\newblock In \emph{Uncertainty in Artificial Intelligence}, pages 712--721,
  2020.

\bibitem[Giles and Goda(2019)]{Giles2019}
Michael~B Giles and Takashi Goda.
\newblock Decision-making under uncertainty: using {MLMC} for efficient
  estimation of {EVPPI}.
\newblock \emph{Statistics and Computing}, 29:\penalty0 739--751, 2019.

\bibitem[Giles et~al.(2015)Giles, Nagapetyan, and Ritter]{Giles2015}
Michael~B Giles, Tigran Nagapetyan, and Klaus Ritter.
\newblock Multilevel {M}onte {C}arlo approximation of distribution functions
  and densities.
\newblock \emph{SIAM/ASA journal on Uncertainty Quantification}, 3\penalty0
  (1):\penalty0 267--295, 2015.

\bibitem[Glynn and Igelhart(1989)]{Glynn1989}
Peter Glynn and Donald Igelhart.
\newblock {Importance sampling for stochastic simulations}.
\newblock \emph{Management Science}, 35\penalty0 (1367-1392), 1989.

\bibitem[Gogolashvili et~al.(2023)Gogolashvili, Zecchin, Kanagawa, Kountouris,
  and Filippone]{gogolashvili2023importance}
Davit Gogolashvili, Matteo Zecchin, Motonobu Kanagawa, Marios Kountouris, and
  Maurizio Filippone.
\newblock When is importance weighting correction needed for covariate shift
  adaptation?
\newblock \emph{arXiv preprint arXiv:2303.04020}, 2023.

\bibitem[Gunter et~al.(2014)Gunter, Osborne, Garnett, Hennig, and
  Roberts]{Gunter2014}
Tom Gunter, Michael~A Osborne, Roman Garnett, Philipp Hennig, and Stephen~J
  Roberts.
\newblock Sampling for inference in probabilistic models with fast {B}ayesian
  quadrature.
\newblock \emph{Advances in Neural Information Processing Systems}, 27, 2014.

\bibitem[Hamid et~al.(2023)Hamid, Wan, Jorgensen, Ru, and Osborne]{Hamid2023}
Saad Hamid, Xinghchen Wan, Martin Jorgensen, Binxin Ru, and Michael Osborne.
\newblock {Bayesian quadrature for neural ensemble search}.
\newblock \emph{Transactions in Machine Learning Research}, 2023.

\bibitem[Han et~al.(2009)Han, Kim, and Lee]{Han2009}
G.~S. Han, B.~H. Kim, and J.~Lee.
\newblock {Kernel-based Monte Carlo simulation for American option pricing}.
\newblock \emph{Expert Systems with Applications}, 36\penalty0 (3):\penalty0
  4431--4436, 2009.
\newblock ISSN 09574174.
\newblock \doi{10.1016/j.eswa.2008.05.004}.
\newblock URL \url{http://dx.doi.org/10.1016/j.eswa.2008.05.004}.

\bibitem[Hayakawa et~al.(2023)Hayakawa, Oberhauser, and Lyons]{Hayakawa2023}
Satoshi Hayakawa, Harald Oberhauser, and Terry Lyons.
\newblock {Sampling-based Nystr{\"{o}}m approximation and kernel quadrature}.
\newblock In \emph{International Conference on Machine Learning}, volume 202,
  pages 12678--12699, 2023.

\bibitem[Heath et~al.(2017)Heath, Manolopoulou, and Baio]{heath2017review}
Anna Heath, Ioanna Manolopoulou, and Gianluca Baio.
\newblock A review of methods for analysis of the expected value of
  information.
\newblock \emph{Medical Decision Making}, 37\penalty0 (7):\penalty0 747--758,
  2017.

\bibitem[Hennig et~al.(2015)Hennig, Osborne, and
  Girolami]{hennig2015probabilistic}
Philipp Hennig, Michael~A Osborne, and Mark Girolami.
\newblock Probabilistic numerics and uncertainty in computations.
\newblock \emph{Proceedings of the Royal Society A: Mathematical, Physical and
  Engineering Sciences}, 471\penalty0 (2179):\penalty0 20150142, 2015.

\bibitem[Hennig et~al.(2022)Hennig, Osborne, and Kersting]{Hennig2022}
Philipp Hennig, Michael~A Osborne, and Hans~P Kersting.
\newblock \emph{Probabilistic Numerics: Computation as Machine Learning}.
\newblock Cambridge University Press, 2022.

\bibitem[Hickernell(1998)]{hickernell1998generalized}
Fred Hickernell.
\newblock A generalized discrepancy and quadrature error bound.
\newblock \emph{Mathematics of Computation}, 67\penalty0 (221):\penalty0
  299--322, 1998.

\bibitem[Hong and Juneja(2009)]{Hong2009}
L~Jeff Hong and Sandeep Juneja.
\newblock Estimating the mean of a non-linear function of conditional
  expectation.
\newblock In \emph{Proceedings of the 2009 Winter Simulation Conference (WSC)},
  pages 1223--1236. IEEE, 2009.

\bibitem[Hu and Zastawniak(2020)]{Hu2020}
W.~Hu and T.~Zastawniak.
\newblock {Pricing high-dimensional American options by kernel ridge
  regression}.
\newblock \emph{Quantitative Finance}, 20\penalty0 (5):\penalty0 851--865,
  2020.

\bibitem[Jagadeeswaran and Hickernell(2019)]{Jagadeeswaran2018}
Rathinavel Jagadeeswaran and Fred~J Hickernell.
\newblock Fast automatic {B}ayesian cubature using lattice sampling.
\newblock \emph{Statistics and Computing}, 29\penalty0 (6):\penalty0
  1215--1229, 2019.

\bibitem[Kallioinen et~al.(2021)Kallioinen, Paananen, B{\"u}rkner, and
  Vehtari]{Kallioinen2021}
Noa Kallioinen, Topi Paananen, Paul-Christian B{\"u}rkner, and Aki Vehtari.
\newblock Detecting and diagnosing prior and likelihood sensitivity with
  power-scaling.
\newblock \emph{arXiv preprint arXiv:2107.14054}, 2021.

\bibitem[Kanagawa and Hennig(2019)]{Kanagawa2019}
Motonobu Kanagawa and Philipp Hennig.
\newblock Convergence guarantees for adaptive {B}ayesian quadrature methods.
\newblock \emph{Advances in Neural Information Processing Systems}, 32, 2019.

\bibitem[Kanagawa et~al.(2018)Kanagawa, Hennig, Sejdinovic, and
  Sriperumbudur]{kanagawa2018gaussian}
Motonobu Kanagawa, Philipp Hennig, Dino Sejdinovic, and Bharath~K
  Sriperumbudur.
\newblock {G}aussian processes and kernel methods: A review on connections and
  equivalences.
\newblock \emph{arXiv preprint arXiv:1807.02582}, 2018.

\bibitem[Kanagawa et~al.(2020)Kanagawa, Sriperumbudur, and
  Fukumizu]{kanagawa2020convergence}
Motonobu Kanagawa, Bharath~K Sriperumbudur, and Kenji Fukumizu.
\newblock Convergence analysis of deterministic kernel-based quadrature rules
  in misspecified settings.
\newblock \emph{Foundations of Computational Mathematics}, 20:\penalty0
  155--194, 2020.

\bibitem[Karvonen and Sarkka(2018)]{Karvonen2017symmetric}
Toni Karvonen and Simo Sarkka.
\newblock Fully symmetric kernel quadrature.
\newblock \emph{SIAM Journal on Scientific Computing}, 40\penalty0
  (2):\penalty0 A697--A720, 2018.

\bibitem[Karvonen et~al.(2018)Karvonen, S{\"a}rkk{\"a}, and
  Oates]{Karvonen2019}
Toni Karvonen, Simo S{\"a}rkk{\"a}, and Chris Oates.
\newblock Symmetry exploits for {B}ayesian cubature methods.
\newblock \emph{arXiv preprint arXiv:1809.10227}, 2018.

\bibitem[Kendall and Gal(2017)]{kendall2017uncertainties}
Alex Kendall and Yarin Gal.
\newblock What uncertainties do we need in bayesian deep learning for computer
  vision?
\newblock \emph{Advances in Neural Information Processing Systems}, 30, 2017.

\bibitem[Kermack and McKendrick(1927)]{kermack1927sir}
William~Ogilvy Kermack and Anderson~G McKendrick.
\newblock A contribution to the mathematical theory of epidemics.
\newblock \emph{Proceedings of the Royal Society of London. Series A},
  115\penalty0 (772):\penalty0 700--721, 1927.

\bibitem[Kersting and Hennig(2016)]{Kersting2016}
Hans Kersting and Philipp Hennig.
\newblock {Active uncertainty calibration in Bayesian ODE solvers}.
\newblock In \emph{Uncertainty in Artificial Intelligence}, pages 309--318,
  2016.

\bibitem[Krumscheid and Nobile(2018)]{Krumscheid2018}
Sebastian Krumscheid and Fabio Nobile.
\newblock Multilevel {M}onte {C}arlo approximation of functions.
\newblock \emph{SIAM/ASA Journal on Uncertainty Quantification}, 6\penalty0
  (3):\penalty0 1256--1293, 2018.

\bibitem[Le et~al.(2005)Le, Smola, and Canu]{Le2005}
Quoc~V Le, Alex~J Smola, and St{\'e}phane Canu.
\newblock Heteroscedastic {G}aussian process regression.
\newblock In \emph{International Conference on Machine Learning}, pages
  489--496, 2005.

\bibitem[Lemieux(2004)]{lemieux2004randomized}
Christiane Lemieux.
\newblock Randomized quasi-{M}onte {C}arlo: A tool for improving the efficiency
  of simulations in finance.
\newblock In \emph{Proceedings of the 2004 Winter Simulation Conference,
  2004.}, volume~2, pages 1565--1573. IEEE, 2004.

\bibitem[Li et~al.(2022)Li, Giles, Karvonen, Guillas, and
  Briol]{li2022multilevel}
Kaiyu Li, Daniel Giles, Toni Karvonen, Serge Guillas, and Fran{\c{c}}ois-Xavier
  Briol.
\newblock Multilevel {B}ayesian quadrature.
\newblock In \emph{International Conference on Artificial Intelligence and
  Statistics}, pages 1845--1868, 2022.

\bibitem[Longstaff and Schwartz(2001)]{longstaff2001valuing}
Francis~A Longstaff and Eduardo~S Schwartz.
\newblock Valuing american options by simulation: a simple least-squares
  approach.
\newblock \emph{The Review of Financial Studies}, 14\penalty0 (1):\penalty0
  113--147, 2001.

\bibitem[Lopes and Tobias(2011)]{Lopes2011}
Hedibert~F. Lopes and Justin~L. Tobias.
\newblock {Confronting prior convictions: On issues of prior sensitivity and
  likelihood robustness in {B}ayesian analysis}.
\newblock \emph{Annual Review of Economics}, 3:\penalty0 107--131, 2011.

\bibitem[Madras and Piccioni(1999)]{Madras1999}
Neal Madras and Mauro Piccioni.
\newblock Importance sampling for families of distributions.
\newblock \emph{The Annals of Applied Probability}, 9\penalty0 (4):\penalty0
  1202--1225, 1999.

\bibitem[Marques et~al.(2013)Marques, Bouville, Ribardi{\`e}re, Santos, and
  Bouatouch]{Marques2013}
Ricardo Marques, Christian Bouville, Micka{\"e}l Ribardi{\`e}re,
  Lu{\'\i}s~Paulo Santos, and Kadi Bouatouch.
\newblock A spherical {G}aussian framework for {B}ayesian {M}onte {C}arlo
  rendering of glossy surfaces.
\newblock \emph{IEEE Transactions on Visualization and Computer Graphics},
  19\penalty0 (10):\penalty0 1619--1632, 2013.

\bibitem[Ming and Guillas(2021)]{ming2021linked}
Deyu Ming and Serge Guillas.
\newblock {Linked Gaussian Process Emulation for Systems of Computer Models
  using Mat\'ern Kernels and Adaptive Design}, 2021.

\bibitem[Muandet et~al.(2016)Muandet, Sriperumbudur, Fukumizu, Gretton, and
  Sch{\"o}lkopf]{muandet2016kernelmeanshrinkage}
Krikamol Muandet, Bharath Sriperumbudur, Kenji Fukumizu, Arthur Gretton, and
  Bernhard Sch{\"o}lkopf.
\newblock Kernel mean shrinkage estimators.
\newblock \emph{Journal of Machine Learning Research}, 17, 2016.

\bibitem[Muandet et~al.(2017)Muandet, Fukumizu, Sriperumbudur, Sch{\"o}lkopf,
  et~al.]{muandet2017kernel}
Krikamol Muandet, Kenji Fukumizu, Bharath Sriperumbudur, Bernhard
  Sch{\"o}lkopf, et~al.
\newblock Kernel mean embedding of distributions: A review and beyond.
\newblock \emph{Foundations and Trends{\textregistered} in Machine Learning},
  10\penalty0 (1-2):\penalty0 1--141, 2017.

\bibitem[Nishiyama and Fukumizu(2016)]{Nishiyama2016}
Yu~Nishiyama and Kenji Fukumizu.
\newblock {Characteristic kernels and infinitely divisible distributions}.
\newblock \emph{Journal of Machine Learning Research}, 17\penalty0
  (180):\penalty0 1--28, 2016.

\bibitem[Niu et~al.(2023)Niu, Meier, and Briol]{niu2023discrepancy}
Ziang Niu, Johanna Meier, and Fran{\c{c}}ois-Xavier Briol.
\newblock Discrepancy-based inference for intractable generative models using
  quasi-{M}onte {C}arlo.
\newblock \emph{Electronic Journal of Statistics}, 17\penalty0 (1):\penalty0
  1411--1456, 2023.

\bibitem[Novak(1988)]{novak1988deterministic}
Erich Novak.
\newblock \emph{Deterministic and stochastic error bounds in numerical
  analysis}, volume 1349.
\newblock Springer, 1988.

\bibitem[Oakley and O'Hagan(2004)]{oakley2004probabilistic}
Jeremy~E Oakley and Anthony O'Hagan.
\newblock Probabilistic sensitivity analysis of complex models: a {B}ayesian
  approach.
\newblock \emph{Journal of the Royal Statistical Society: Series B},
  66\penalty0 (3):\penalty0 751--769, 2004.

\bibitem[Oates and Sullivan(2019)]{Oates2019Modern}
Chris~J Oates and Timothy~John Sullivan.
\newblock A modern retrospective on probabilistic numerics.
\newblock \emph{Statistics and Computing}, 29\penalty0 (6):\penalty0
  1335--1351, 2019.

\bibitem[Oates et~al.(2017)Oates, Niederer, Lee, Briol, and
  Girolami]{Oates2017heart}
Chris~J. Oates, Steven Niederer, Angela Lee, Fran{\c{c}}ois-Xavier Briol, and
  Mark Girolami.
\newblock Probabilistic models for integration error in the assessment of
  functional cardiac models.
\newblock \emph{Advances in Neural Information Processing Systems}, 30, 2017.

\bibitem[Oates et~al.(2019)Oates, Cockayne, Briol, and
  Girolami]{oates2019convergence}
Chris~J. Oates, Jon Cockayne, Fran{\c{c}}ois-Xavier Briol, and Mark Girolami.
\newblock Convergence rates for a class of estimators based on {S}tein’s
  method.
\newblock \emph{Bernoulli}, 25\penalty0 (2):\penalty0 1141--1159, 2019.

\bibitem[O'Hagan(1991)]{OHagan1991BayesHermiteQ}
Anthony O'Hagan.
\newblock Bayes–hermite quadrature.
\newblock \emph{Journal of Statistical Planning and Inference}, 29:\penalty0
  245--260, 1991.

\bibitem[Ott et~al.(2023)Ott, Tiemann, Hennig, and Briol]{Ott2023}
Katharina Ott, Michael Tiemann, Philipp Hennig, and Fran\c{c}ois-Xavier Briol.
\newblock {Bayesian numerical integration with neural networks}.
\newblock In \emph{Proceedings of the Thirty-Ninth Conference on Uncertainty in
  Artificial Intelligence, PMLR 216}, pages 1606--1617, 2023.

\bibitem[Rainforth et~al.(2018)Rainforth, Cornish, Yang, Warrington, and
  Wood]{Rainforth2018}
Tom Rainforth, Rob Cornish, Hongseok Yang, Andrew Warrington, and Frank Wood.
\newblock On nesting {M}onte {C}arlo estimators.
\newblock In \emph{International Conference on Machine Learning}, pages
  4267--4276, 2018.

\bibitem[Rasmussen and Ghahramani(2003)]{Rasmussen2003}
Carl~E. Rasmussen and Zoubin Ghahramani.
\newblock {B}ayesian {M}onte {C}arlo.
\newblock \emph{Advances in Neural Information Processing Systems}, pages
  505--512, 2003.

\bibitem[Rasmussen and Williams(2006)]{GPML}
Carl~E. Rasmussen and Chris Williams.
\newblock \emph{{G}aussian Processes for Machine Learning}.
\newblock Adaptive Computation and Machine Learning. MIT Press, Cambridge, MA,
  USA, January 2006.

\bibitem[Robert et~al.(1999)Robert, Casella, and Casella]{Robert2004}
Christian~P Robert, George Casella, and George Casella.
\newblock \emph{{M}onte {C}arlo statistical methods}, volume~2.
\newblock Springer, 1999.

\bibitem[Sobol(2001)]{Sobol2001}
Ilya~M Sobol.
\newblock Global sensitivity indices for nonlinear mathematical models and
  their {M}onte {C}arlo estimates.
\newblock \emph{Mathematics and Computers in Simulation}, 55\penalty0
  (1-3):\penalty0 271--280, 2001.

\bibitem[Stentoft(2004)]{stentoft2004convergence}
Lars Stentoft.
\newblock Convergence of the least squares {M}onte {C}arlo approach to
  {A}merican option valuation.
\newblock \emph{Management Science}, 50\penalty0 (9):\penalty0 1193--1203,
  2004.

\bibitem[Stone(1982)]{Stone1982}
C.~J. Stone.
\newblock {Optimal global rates of convergence for nonparametric regression}.
\newblock \emph{The Annals of Statistics}, 10\penalty0 (4):\penalty0
  1040--1053, 1982.

\bibitem[Sun et~al.(2023{\natexlab{a}})Sun, Barp, and Briol]{Sun2021}
Zhuo Sun, Alessandro Barp, and Fran{\c{c}}ois-Xavier Briol.
\newblock {Vector-valued control variates}.
\newblock In \emph{International Conference on Machine Learning}, pages
  32819--32846, 2023{\natexlab{a}}.

\bibitem[Sun et~al.(2023{\natexlab{b}})Sun, Oates, and Briol]{Sun2023}
Zhuo Sun, Chris~J Oates, and Fran{\c{c}}ois-Xavier Briol.
\newblock {Meta-learning control variates: Variance reduction with limited
  data}.
\newblock In \emph{Proceedings of the Thirty-Ninth Conference on Uncertainty in
  Artificial Intelligence, PMLR 216}, pages 2047--2057, 2023{\natexlab{b}}.

\bibitem[Tang(2013)]{Tang2013}
Xiaojin Tang.
\newblock \emph{{Importance sampling for efficient parametric simulation}}.
\newblock PhD thesis, Boston University, 2013.

\bibitem[Teckentrup(2020)]{Teckentrup2020}
Aretha~L Teckentrup.
\newblock Convergence of {G}aussian process regression with estimated
  hyper-parameters and applications in {B}ayesian inverse problems.
\newblock \emph{SIAM/ASA Journal on Uncertainty Quantification}, 8\penalty0
  (4):\penalty0 1310--1337, 2020.

\bibitem[Titsias(2009)]{titsias2009variational}
Michalis Titsias.
\newblock Variational learning of inducing variables in sparse {G}aussian
  processes.
\newblock In \emph{Artificial Intelligence and Statistics}, pages 567--574,
  2009.

\bibitem[Wendland and Rieger(2005)]{Wendland2005}
Holger Wendland and Christian Rieger.
\newblock {Approximate interpolation with applications to selecting smoothing
  parameters}.
\newblock \emph{Numerische Mathematik}, 101\penalty0 (4):\penalty0 729--748,
  2005.

\bibitem[Wenger et~al.(2021)Wenger, Kr{\"a}mer, Pf{\"o}rtner, Schmidt, Bosch,
  Effenberger, Zenn, Gessner, Karvonen, Briol, et~al.]{Wenger2021}
Jonathan Wenger, Nicholas Kr{\"a}mer, Marvin Pf{\"o}rtner, Jonathan Schmidt,
  Nathanael Bosch, Nina Effenberger, Johannes Zenn, Alexandra Gessner, Toni
  Karvonen, Fran{\c{c}}ois-Xavier Briol, et~al.
\newblock {ProbNum: Probabilistic Numerics in Python}.
\newblock \emph{arXiv preprint arXiv:2112.02100}, 2021.

\bibitem[Williams and Seeger(2000)]{williams2000using}
Christopher Williams and Matthias Seeger.
\newblock Using the nystr{\"o}m method to speed up kernel machines.
\newblock \emph{Advances in neural information processing systems}, 13, 2000.

\bibitem[Wynne et~al.(2021)Wynne, Briol, and Girolami]{wynne2021convergence}
George Wynne, Fran{\c{c}}ois-Xavier Briol, and Mark Girolami.
\newblock Convergence guarantees for {G}aussian process means with misspecified
  likelihoods and smoothness.
\newblock \emph{The Journal of Machine Learning Research}, 22\penalty0
  (1):\penalty0 5468--5507, 2021.

\bibitem[Xi et~al.(2018)Xi, Briol, and Girolami]{xi2018bayesian}
Xiaoyue Xi, Fran{\c{c}}ois-Xavier Briol, and Mark Girolami.
\newblock {B}ayesian quadrature for multiple related integrals.
\newblock In \emph{International Conference on Machine Learning}, pages
  5373--5382, 2018.

\bibitem[Zhu et~al.(2020)Zhu, Liu, Kang, Shen, Flaxman, and Briol]{Zhu2020}
Harrison Zhu, Xing Liu, Ruya Kang, Zhichao Shen, Seth Flaxman, and
  Fran\c{c}ois-Xavier Briol.
\newblock {Bayesian probabilistic numerical integration with tree-based
  models}.
\newblock In \emph{Neural Information Processing Systems}, pages 5837--5849,
  2020.

\end{thebibliography}
\clearpage

\begin{appendices}
\crefalias{section}{appendix}
\crefalias{subsection}{appendix}
\crefalias{subsubsection}{appendix}

\setcounter{equation}{0}
\renewcommand{\theequation}{\thesection.\arabic{equation}}

\onecolumn

{\hrule height 1mm}
\vspace*{-0pt}
\section*{\LARGE\bf \centering Supplementary Material
}
\vspace{8pt}
{\hrule height 0.1mm}
% {\hrule height 0.3mm}
\vspace{24pt}

\section*{Table of Contents}
\vspace*{-10pt}
\startcontents[sections]
\printcontents[sections]{l}{1}{\setcounter{tocdepth}{2}}

\newpage

\section{Theoretical Results}
\label{appendix:convergence_rate}

To validate our methodology, we established a rate at which the CBQ estimator converges to the true value of the conditional expectation $I$ in the $\calL^2(\Theta)$ norm, $\|I_\mathrm{CBQ} - I\|_{\calL^2(\Theta)}=\int_\Theta (I_\mathrm{CBQ}(\theta) - I(\theta))^2 \mathrm d \theta$ in~\Cref{thm:convergence_generalised}. The more specific version of this result was presented in the main text in~\Cref{thm:convergence}. In this section, we prove a more general version of~\Cref{thm:convergence} (as well as several intermediate results), and expand on the technical background required.

For the duration of the appendix, we will denote by $M$ the total number of points in $\Theta$ instead of $T$, to avoid notation clashes with the integral operator $T$. Additionally, we will be explicit on the dependency of the BQ mean $I_\mathrm{BQ}$ and variance $\sigma^2_\mathrm{BQ}$ at the point $\theta$ on the realisations $x_{1:N}^\theta \sim \Pb_\theta$, meaning
\begin{align*}
    I_\mathrm{BQ}(\theta; x_{1:N}^\theta) &= \mu^\top_\theta(x_{1:N}^\theta) \left(k_\calX(x_{1:N}^\theta, x_{1:N}^\theta) + \lambda_\calX \Id_N \right)^{-1} f(x_{1:N}^\theta, \theta) \\
    \sigma^2_\mathrm{BQ}(\theta; x_{1:N}^\theta) &= \E_{X, X' \sim \Pb_\theta} [k_\calX(X, X')] -\mu^\top_\theta(x_{1:N}^\theta) \left(k_\calX(x_{1:N}^\theta, x_{1:N}^\theta) + \lambda_\calX \Id_N \right)^{-1} \mu_\theta(x_{1:N}^\theta)
\end{align*}
Finally, whenever $x_{1:N}^{\theta_t}$, we will shorten it to $x_{1:N}^t$ to avoid bulky notation. The rest of the section in structured as follows. In~\Cref{sec:technical_assumptions} we present technical assumptions, and state in~\Cref{thm:convergence_generalised} the main convergence result the proof of which is deferred until the necessary Stage 1 and 2 results are proven. In~\Cref{sec:stage1}, we provide the necessary Stage 1 bounds that will be used in the proof of the main result. In~\Cref{sec:stage2}, we provide the necessary auxiliary results and the bound for Stage 2 in terms of Stage 1 errors. Finally, in~\Cref{sec:proof_of_main_theorem} we combine the bounds from both stages to prove~\Cref{thm:convergence_generalised}, the more general version of~\Cref{thm:convergence}.

\subsection{Main Result}
\label{sec:technical_assumptions}
Prior to presenting our findings, we present and justify the assumptions we have made. Throughout we use Sobolev spaces to quantify a function's smoothness. A Sobolev space $\calW^{s, 2}(\calX, \mu)$, with $s>d/2$ and a measure $\mu$ on $\calX \subseteq \R^d$, consists of functions that satisfy certain conditions: they are square integrable under the measure $\mu$, and all weak derivatives up to and including order $s$ are also square integrable under $\mu$. Weak derivatives are a generalization of ordinary derivatives, allowing for functions that are not necessarily differentiable everywhere. We write $\theta = \begin{bmatrix} \theta_{(1)} & \dots & \theta_{(p)}\end{bmatrix}$ for any $\theta \in \Theta \subseteq \R^p$.  For a multi-index $\alpha = (\alpha_1, \dots \alpha_p) \in \mathbb{N}^p$, by $D_\theta^\alpha g$ we denote the $|\alpha|=\sum_{i=1}^d \alpha_i$ order weak derivative $D_\theta^\alpha g=D^{\alpha_1}_{\theta_{(1)}} \dots D^{\alpha_p}_{\theta_{(p)}} g$ for a function $g$ on $\Theta \subseteq \R^p$.
Further, we assume the kernels $k_\Theta, k_\calX$ are Sobolev kernels, meaning they induce Hilbert spaces that are norm-equivalent to Sobolev spaces (two normed spaces $(\calX, \|\cdot\|_\calX), (\calY, \|\cdot\|_\calY)$ are said to be norm-equivalent when $\calX=\calY$ as sets, and there are real constants $C'_1, C'_2>0$ such that for any $x \in \calX$ it holds that $ C'_1 \|x\|_\calY \leq \|x\|_\calX \leq C'_2 \|x\|_\calY$).

Mat\'ern kernels are important examples of Sobolev kernels. It is well-known that the RKHS of a Mat\'ern kernel of order $\nu_\Theta$ over an open, convex and bounded $\Theta \subset \R^p$ is norm-equivalent to the Sobolev space $W^{2,\nu_\Theta+p/2}(\Theta)$ when $\nu_\Theta+p/2 \in \mathbb Z$; this is proven in~\citep[Corollary 10.48]{Wendland2005}. For $\Theta=\R^p$, the result can be straightforwardly extended to fractional order Sobolev-Slobodeckij spaces, $\nu_\Theta+p/2 \in \mathbb R$: by~\cite[Corollary 10.13]{Wendland2005} the RKHS of a Mat\'ern kernel on $\R^p$ is norm-equivalent to a Bessel potential space, which in turn is norm-equivalent to the Sobolev-Slobodeckij space by~\cite[Section 7.62]{adams2003sobolev}. Finally, one can use an extension operator in~\cite[Theorems 6.1 and 6.7]{devore1993besov} to restrict the norm-equivalence result to open, convex and bounded $\Theta \subset \R^p$. 
% \fxb{Need a sentence to tell the reader at high-level what this paragraph is about. Why should we care about all of these equivalences etc?}

The following is a more general form of the assumptions in~\Cref{thm:convergence}: specifically, we allow for the case when $\theta_{1:T}$ came from a distribution that doesn't necessarily have a density, and do not assume $\lambda_\calX=0$.
\begin{enumerate}[itemsep=0.1pt,topsep=0pt,leftmargin=*]
\item [B0] 
\begin{enumerate}
    \item[(a)] $f(x, \theta)$ lies in the Sobolev space $\calW^{s_f, 2}(\calX)$ for any $\theta \in \Theta$. 
    \customlabel{as:app_true_f_smoothness}{B0.(a)} 
    \item[(b)] $f(x, \theta)$ lies in the Sobolev space $\calW^{s_I, 2}(\Theta)$ for any $x \in \calX$.
    \customlabel{as:app_true_I_smoothness}{B0.(b)} 
    \item[(c)] $M_f = \sup_{\theta \in \Theta} \max_{|\alpha|<s_I} \| D_\theta^\alpha f(\cdot, \theta) \|_{\calW^{s_I, 2}(\calX)}<\infty$.
    \customlabel{as:app_true_I_norm_bounds}{B0.(c)} 
\end{enumerate}
\item [B1] 
\begin{enumerate}
    \item[(a)] $\calX \subset \R^d$ is open, convex, and bounded.  
    \customlabel{as:app_domains_x}{B1.(a)} 
    \item[(b)] $\Theta\subset \R^p$ is open, convex, and bounded. 
    \customlabel{as:app_domains_theta}{B1.(b)} 
\end{enumerate}
\item [B2]
\begin{enumerate}
    \item[(a)] $\theta_t$ were sampled i.i.d. from some $\Qb$, and $\Qb$ is equivalent to the uniform distribution on $\Theta$, meaning $\Qb(A)=0$ for a set $A \subset \Theta$ if and only if $\operatorname{Unif}(A)=0$. 
    \customlabel{as:app_theta_samples}{B2.(a)} 
    \item[(b)] $x_{1:N}^t \sim \Pb_{\theta_t}$ for all $t \in \{1, \cdots, T\}$.  
    \customlabel{as:app_x_samples}{B2.(b)}
\end{enumerate}
\item [B3] $\Pb_\theta$ has a density $p_\theta$ for any $\theta \in \Theta$, and the densities are such that 
    \begin{enumerate}
    \item[(a)] $\inf_{\theta \in \Theta, x \in \calX} p_\theta(x)=\eta>0$ and $\sup_{\theta \in \Theta}\|p_\theta\|_{\calL^2(\calX)}=\eta_0<\infty$.
    \customlabel{as:app_densities}{B3.(a)}
    \item[(b)] $p_\theta(x)$ lies in the Sobolev space $\calW^{s_I, 2}(\Theta)$ for any $x \in \calX$.
    \customlabel{as:app_densities}{B3.(b)}
    \item[(c)] 
    % $D_\theta^\alpha p_\theta(x)$ in the Sobolev space $\calW^{s_I, 2}(\Theta)$ are bounded for any $\alpha$, and 
    $M_p = \sup_{\substack{\theta \in \Theta\\x \in \calX}}\max_{|\alpha|\leq s_I} |D_\theta^\alpha p_\theta(x)|<\infty$. 
    \customlabel{as:app_densities_linf}{B3.(c)}
    \end{enumerate}
\item [B4] 
\begin{enumerate}
    \item[(a)] $k_\calX$ is a Sobolev kernel of smoothness $s_\calX \in (d/2, s_f]$. 
    \customlabel{as:app_kernel_x}{B4.(a)}
    \item[(b)] $k_\Theta$ is a Sobolev kernel of smoothness $s_\Theta \in (p/2, s_I]$.
    \customlabel{as:app_kernel_theta}{B4.(b)}
    \item[(c)] $\kappa = \sup_{\theta \in \Theta}k_\Theta(\theta, \theta)<\infty$.
    \customlabel{as:app_kernel_theta_bounded}{B4.(c)}
\end{enumerate}
\item [B5]
\begin{enumerate}
    \item[(a)] $\lambda_\Theta = cM^{1/2}$, for $c>(4/C_6) \kappa \log(4/\delta)$ for some $C_6\leq1$. 
    \customlabel{as:app_regulariser_theta}{B5.(a)}
    \item[(b)] $\lambda_\calX \geq 0$. 
    \customlabel{as:app_regulariser_x}{B5.(b)}
\end{enumerate}
\end{enumerate}

Assumption B0 corresponds to conditions specified in the text of~\Cref{thm:convergence} prefacing the list of assumptions. Assumption~\ref{as:app_true_I_smoothness} implies $I(\theta) \in \calW^{s_I, 2}(\Theta)$: $f(x, \theta) p_\theta(x) \in \calW^{s_I, 2}(\Theta)$ by the product rule for weak derivatives (see, for instance,~\citet[Section 4.2.2]{evans2018measure}), and the integral lies in $\calW^{s_I, 2}(\Theta)$ by $\calW^{s_I, 2}(\Theta)$ being a complete space. 
Assumption~\ref{as:app_true_I_norm_bounds} ensures the $\calX$-Sobolev norm of any weak derivative of $\theta \to f(\cdot, \theta)$ is uniformly bounded across all $\theta$; this will be satisfied unless $f$ is so irregular said Sobolev norms can get arbitrarily close to infinity.
Assumption~\ref{as:app_densities_linf}, similarly, ensures that any weak derivative of $\theta \to p_\theta(x)$ is bounded across all $\theta$ and $x$.
It is worth pointing out assumption~\ref{as:app_kernel_theta_bounded}, boundedness of the kernel, follows from assumption~\ref{as:app_kernel_theta}; however, we keep it separate as some results will only require that the kernel is bounded, not necessarily that it is Sobolev.

Crucially, in the proofs in the next section we will see that the assumptions imply that the setting of the model in Stage 1 satisfies the assumptions of~\cite[Theorem 4]{wynne2021convergence}, and the setting of the model in Stage 2 satisfies the assumptions necessary to establish convergence of a noisy importance-weighted kernel ridge regression estimator---the two key results we will use to prove the convergence rate of the estimator.

We now state the main convergence result, which is a version of~\Cref{thm:convergence} for $\lambda_\calX \geq 0$. The proof of both this result and the more specific~\Cref{thm:convergence} are postponed until Section BLAH, as they rely on intermediary results.

\begin{theorem}[Generalised~\Cref{thm:convergence}]
\label{thm:convergence_generalised}
    Suppose all technical assumptions in~\Cref{sec:technical_assumptions} hold. Then for any $\delta \in (0, 1)$ there is an $N_0>0$ such that for any $N \geq N_0$, with probability at least $1-\delta$ it holds that
    \begin{align*}
        \| I_\mathrm{CBQ} - I \|_{\calL^2(\Theta, \Qb)} &\leq \left(1 + c^{-1} M^{-\frac{1}{2}}\left(\lambda_\calX + C_2 N^{-1 + 2\varepsilon} \left( N^{-\frac{s_\calX}{d}+\frac{1}{2} + \varepsilon} + C_3 \lambda_\calX \right)^2\right) \right) \\
        &\qquad \times \left( C_7(\delta) N^{-\frac{1}{2} + \varepsilon} \left( N^{-\frac{s_\calX}{d} + \frac{1}{2} + \varepsilon} + C_5 \lambda_\calX \right) + C_8(\delta) M^{-\frac{1}{4}} \| I\|_{\calH_\Theta} \right)
    \end{align*}
    for any arbitrarily small $\varepsilon>0$, constants $C_2, C_3, C_5$, $C_7(\delta) = O(1/\delta)$ and $C_8(\delta) = O(\log(1/\delta))$ independent of $N, M, \varepsilon$.
\end{theorem}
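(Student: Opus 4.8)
The plan is to recast the CBQ posterior mean as a \emph{noisy importance-weighted kernel ridge regression} (NIW-KRR) estimator and then control its $\calL^2(\Theta,\Qb)$ error by a two-layer decomposition. First I would rewrite the Stage-2 estimator (taking $m_\Theta \equiv 0$ without loss of generality) in IW-KRR form. Since the regulariser is the heteroscedastic diagonal $D = \mathrm{diag}(\lambda_\Theta + \sigma^2_\mathrm{BQ}(\theta_{1:M}))$, writing $D = \lambda_\Theta W^{-1}$ with weights $w_t = \lambda_\Theta/(\lambda_\Theta + \sigma^2_\mathrm{BQ}(\theta_t))$ and using the identity $(k_\Theta + \lambda_\Theta W^{-1})^{-1} = W(k_\Theta W + \lambda_\Theta \mathrm{Id})^{-1}$ gives
\[
\hat I_\mathrm{CBQ} = k_\Theta(\cdot, \theta_{1:M})\, W\, \big(k_\Theta(\theta_{1:M}, \theta_{1:M}) W + \lambda_\Theta \mathrm{Id}\big)^{-1}\, \hat I_\mathrm{BQ}(\theta_{1:M}),
\]
i.e.\ an IW-KRR smoother with bounded weights (as $0 \le \sigma^2_\mathrm{BQ} \le \kappa$ by B4(c)) applied to the \emph{noisy} targets $\hat I_\mathrm{BQ}(\theta_t) = I(\theta_t) + \varepsilon_t$.

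Next I would split via the triangle inequality $\|\hat I_\mathrm{CBQ} - I\|_{\calL^2(\Theta,\Qb)} \le \|\hat I_\mathrm{CBQ} - \hat I^\ast\|_{\calL^2(\Theta,\Qb)} + \|\hat I^\ast - I\|_{\calL^2(\Theta,\Qb)}$, where $\hat I^\ast$ is the same smoother applied to the exact targets $I(\theta_{1:M})$. The second term is a \emph{noise-free} IW-KRR error; under B0--B4 the regression function satisfies $I \in \calW^{s_I,2}(\Theta)$, and since $s_\Theta \le s_I$ we have $\calW^{s_I,2}(\Theta) \subseteq \calW^{s_\Theta,2}(\Theta)$, norm-equivalent to $\calH_\Theta$, so $I \in \calH_\Theta$ (a source condition of order at least $1/2$). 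With bounded weights and effective regularisation $\lambda_\Theta/M = cM^{-1/2}$ (from B5(a)), an extension of \citet[Theorem 4]{gogolashvili2023importance} yields $\|\hat I^\ast - I\| \le C_8(\delta) M^{-1/4}\|I\|_{\calH_\Theta}$. The first term is the propagation of Stage-1 noise through the linear smoother: it is bounded by the operator norm of $k_\Theta W(k_\Theta W + \lambda_\Theta \mathrm{Id})^{-1}$ times $\|\bm\varepsilon\|$, and the deviation of $W$ from the identity, controlled by $\max_t w_t - 1 \lesssim \max_t \sigma^2_\mathrm{BQ}(\theta_t)/\lambda_\Theta = c^{-1}M^{-1/2}\max_t \sigma^2_\mathrm{BQ}(\theta_t)$, produces exactly the prefactor $1 + c^{-1}M^{-1/2}(\lambda_\calX + \dots)$.

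For the Stage-1 quantities I would invoke the GP-interpolation guarantees of \citet[Theorem 4]{wynne2021convergence}. Under A2/B2 the samples $x_{1:N}^t \sim \Pb_{\theta_t}$ fill $\calX$ with high probability, and since $f(\cdot,\theta) \in \calW^{s_f,2}(\calX)$ with $s_\calX \le s_f$, the interpolation posterior mean $m_\mathrm{post}$ converges in $\calL^2(\Pb_\theta)$. Cauchy--Schwarz then bounds the BQ mean error $|\hat I_\mathrm{BQ}(\theta) - I(\theta)| = |\E_{\Pb_\theta}[m_\mathrm{post} - f(\cdot,\theta)]|$ by this interpolation error, and the same power-function argument controls $\sigma^2_\mathrm{BQ}(\theta)$. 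Taking suprema over the $M$ parameter values, using B0(c) for a uniform bound on $\|f(\cdot,\theta)\|_{\calW^{s_\calX,2}}$, yields the uniform bounds of order $N^{-s_\calX/d + \tfrac12 + \varepsilon} + C\lambda_\calX$ that feed into both Stage-2 terms. I would then union-bound the covering events for $\theta_{1:M}$ and for all $\{x_{1:N}^t\}_{t=1}^M$ so that every estimate holds simultaneously with probability at least $1-\delta$, and collect constants to obtain the stated bound.

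The main obstacle I anticipate is the Stage-2 extension: \citet[Theorem 4]{gogolashvili2023importance} is established for noise-free targets and \emph{fixed} importance weights, whereas here the weights $w_t$ are random and data-dependent (functions of the Stage-1 variances $\sigma^2_\mathrm{BQ}(\theta_t)$) and the targets carry Stage-1 noise. Decoupling the two sources of randomness---conditioning on $\theta_{1:M}$ and on the variances while treating the $x$-samples, and showing the perturbed-weight operator remains a stable, bounded smoother---is the delicate part, as is verifying that the boundedness and covariate-shift hypotheses of the Gogolashvili result survive once the uniform target weighting is replaced by the heteroscedastic one.
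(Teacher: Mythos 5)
Your overall skeleton matches the paper's: the paper likewise recasts $\hat I_\mathrm{CBQ}$ as a noisy importance-weighted KRR estimator, uses \citet[Theorem 4]{wynne2021convergence} for Stage 1, and extends \citet[Theorem 4]{gogolashvili2023importance} for Stage 2 (resolving the random-weight issue you flag by splitting each weight into its expectation $w(\theta)=\E_{x^\theta_{1:N}}[\lambda_\Theta/(\lambda_\Theta+\sigma^2_\mathrm{BQ})]$ plus zero-mean noise, and redoing the Bernstein/Hilbert--Schmidt concentration for the resulting operators). However, there is a genuine gap in your decisive step: how the Stage-1 error enters Stage 2. You bound the noise-propagation term by ``the operator norm of $k_\Theta W(k_\Theta W+\lambda_\Theta \Id)^{-1}$ times $\|\varepsilon_{1:M}\|$''. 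That matrix norm bound ($\leq 1$) only controls the fitted values \emph{at the sample points} $\theta_{1:M}$, not the out-of-sample $\calL^2(\Theta,\Qb)$ error; passing to the function-space norm costs a factor of order $\lambda_\Theta^{-1/2}$, since for $A=W^{1/2}k_\Theta(\theta_{1:M},\theta_{1:M})W^{1/2}$ one only gets
\begin{align*}
\left\| k_\Theta(\cdot,\theta_{1:M})W(k_\Theta W+\lambda_\Theta \Id)^{-1}\varepsilon_{1:M}\right\|^2_{\calH_\Theta}
= \varepsilon_{1:M}^\top W^{\frac12}(A+\lambda_\Theta)^{-1}A(A+\lambda_\Theta)^{-1}W^{\frac12}\varepsilon_{1:M}
\leq \frac{\|\varepsilon_{1:M}\|_2^2}{4\lambda_\Theta}.
\end{align*}
With $\|\varepsilon_{1:M}\|_2\leq \sqrt{M}\max_t|\varepsilon_t|$ and $\lambda_\Theta=cM^{1/2}$ this leaves an extra $M^{1/4}$ multiplying the $N$-dependent term, so the stated bound is not recovered. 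Moreover your Stage-1 errors $\varepsilon_t=\hat I_\mathrm{BQ}(\theta_t)-I(\theta_t)$ are biases, not zero-mean noise, so the standard KRR noise-averaging argument cannot rescue the rate either. The paper's key idea, which your sketch is missing, is to control the Stage-1 error \emph{as a function of $\theta$ in the RKHS norm}: its Theorem A.4 proves $\|I_\mathrm{BQ}-I\|_{\calH_\Theta}\lesssim \delta^{-1}N^{-1/2+\varepsilon}(N^{-s_\calX/d+1/2+\varepsilon}+C_5\lambda_\calX)$, which requires differentiating the BQ error in $\theta$ and a Sobolev product bound (this is why assumptions B0.(c) and B3.(c) on $\theta$-derivatives of $f$ and $p_\theta$ exist). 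Because the error is then a smooth, coherent perturbation, the Stage-2 smoother does not amplify it: the paper's term $U_2=\|(T+\lambda)^{-1/2}\hat T[I_\mathrm{BQ}-I]\|_{\calH_\Theta}\leq (S_2+1)\|I_\mathrm{BQ}-I\|_{\calH_\Theta}$ carries no factor of $M$. Your uniform (sup-norm) Stage-1 bound is strictly weaker than this $\calH_\Theta$-norm bound and is insufficient.

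A secondary error: you attribute the prefactor $1+c^{-1}M^{-1/2}(\lambda_\calX+\cdots)$ to the deviation of $W$ from the identity. In the paper it has a different origin: the Stage-2 analysis naturally yields convergence in the \emph{weighted} norm $\calL^2(\Theta,\Qb_w)$ with $\Qb_w(\mathrm d\theta)=w(\theta)\Qb(\mathrm d\theta)$, and the prefactor is the cost of converting back to $\calL^2(\Theta,\Qb)$ via $\inf_\theta w(\theta)\geq (1+\lambda_\Theta^{-1}\sup_\theta \E\,\sigma^2_\mathrm{BQ})^{-1}$; this is also precisely where the Stage-1 variance bound (the paper's Theorem A.3, giving $\E\,\sigma^2_\mathrm{BQ}\leq \lambda_\calX+C_2N^{-1+2\varepsilon}(\cdot)^2$) enters the final statement. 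Your sketch never produces this weighted-to-unweighted conversion, nor a bound on $\E\,\sigma^2_\mathrm{BQ}$, both of which are needed to assemble the theorem exactly as stated.
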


\subsection{Stage 1 bounds}
\label{sec:stage1}

Recall that we use the shorthand $x_{1:N}^t$ for $x_{1:N}^{\theta_t}$. In this section, we bound the BQ variance $\sigma^2_\mathrm{BQ}(\theta; x_{1:N}^\theta)$ in expectation in~\Cref{res:bound_on_bq_var}, and the difference between $I_\mathrm{BQ}(\theta; x_{1:N}^\theta)$ and $I$ in the norm of the RKHS $\calH_\Theta$ induced by the kernel $k_\Theta$ in~\Cref{res:bound_on_bq_error}. Later in~\Cref{sec:stage2}, the error of the estimator $I_\mathrm{CBQ}$ will be bounded in terms of these quantities.

\begin{theorem}
\label{res:bound_on_bq_var}
    Suppose Assumptions~\ref{as:app_true_f_smoothness},~\ref{as:app_domains_x},~\ref{as:app_densities},~\ref{as:app_kernel_x}, and~\ref{as:app_regulariser_x} hold. Then there is a $N_0>0$ such that for all $N \geq N_0$ it holds that
    \begin{align*}
        \E_{y_{1:N}^\theta \sim \Pb_\theta} \sigma^2_\mathrm{BQ}(\theta; y_{1:N}^\theta) \leq  \lambda_\calX + C_2 N^{-1 + 2\varepsilon} \left( N^{-\frac{s_\calX}{d}+\frac{1}{2} + \varepsilon} + C_3 \lambda_\calX \right)^2
    \end{align*}
    for any $\theta \in \Theta$, any arbitrarily small $\varepsilon>0$, and $C_2, C_3$ independent of $\theta,N,\varepsilon, \lambda_\calX$.
\end{theorem}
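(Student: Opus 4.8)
The plan is to recognise $\sigma^2_\mathrm{BQ}(\theta; y_{1:N}^\theta)$ as the squared worst-case integration error of the Stage-1 Gaussian process over the unit ball of the RKHS $\calH_\calX$ of $k_\calX$, to bound it with high probability over the random design $y_{1:N}^\theta \sim \Pb_\theta$ using the GP interpolation guarantees of \citet[Theorem 4]{wynne2021convergence}, and then to integrate this high-probability bound into the claimed bound on the expectation.

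First I would rewrite the variance in a form amenable to interpolation estimates. Writing $\mu_\theta = \E_{X \sim \Pb_\theta}[k_\calX(X, \cdot)]$ and using $\E_{X,X'\sim\Pb_\theta}[k_\calX(X,X')] = \|\mu_\theta\|_{\calH_\calX}^2$, one sees that $\sigma^2_\mathrm{BQ}(\theta; y_{1:N}^\theta)$ is the (regularised) squared $\calH_\calX$-distance between $\mu_\theta$ and the span of the representers $\{k_\calX(y_i^\theta, \cdot)\}_{i=1}^N$; up to the regularisation this is the squared worst-case error $\sup_{\|h\|_{\calH_\calX}\le 1}\bigl(\int h\,\mathrm d\Pb_\theta - \sum_i w_i h(y_i^\theta)\bigr)^2$ of the BQ rule, exactly the quantity controlled by scattered-data/GP interpolation theory. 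The key point is that $\mu_\theta$ inherits extra smoothness from the density $p_\theta$ --- here Assumptions~\ref{as:app_densities} and~\ref{as:app_densities_linf} enter, ensuring $\mu_\theta$ lies in a space roughly twice as smooth as a generic element of $\calH_\calX$ --- so the integration error contracts at the fast rate $N^{-s_\calX/d}$ rather than the pointwise power-function rate $N^{-s_\calX/d + 1/2}$. This $N^{-1/2}$ averaging gain is precisely the extra factor that appears in the claimed bound, where $N^{-1+2\varepsilon}(N^{-s_\calX/d+1/2+\varepsilon}+C_3\lambda_\calX)^2 = \bigl(N^{-1/2+\varepsilon}(N^{-s_\calX/d+1/2+\varepsilon}+C_3\lambda_\calX)\bigr)^2$.

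Next I would verify that the Stage-1 model meets the hypotheses of \citet[Theorem 4]{wynne2021convergence}: $k_\calX$ is a Sobolev kernel of smoothness $s_\calX \in (d/2, s_f]$ (\ref{as:app_kernel_x}), the integrand $x \mapsto f(x,\theta)$ lies in $\calW^{s_f,2}(\calX)$ (\ref{as:app_true_f_smoothness}), the domain $\calX$ is open, convex and bounded (\ref{as:app_domains_x}), and the design $y_{1:N}^\theta \sim \Pb_\theta$ has a density bounded above and below by~\ref{as:app_densities}, so the fill distance concentrates at the rate $(\log N / N)^{1/d}$. Invoking the theorem and absorbing the logarithmic fill-distance factors into an arbitrarily small $\varepsilon$ yields, for each $\delta' \in (0,1)$ and all $N$ large enough, an event of probability at least $1-\delta'$ on which $\sigma^2_\mathrm{BQ}(\theta; y_{1:N}^\theta) \le \lambda_\calX + C_2 N^{-1+2\varepsilon}(N^{-s_\calX/d + 1/2 + \varepsilon} + C_3 \lambda_\calX)^2$, where the standalone $\lambda_\calX$ is the irreducible noise floor of the regularised posterior variance and the uniform density bounds make $C_2, C_3$ independent of $\theta, N, \varepsilon, \lambda_\calX$.

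Finally I would upgrade this to a bound in expectation. The posterior variance is deterministically bounded, $0 \le \sigma^2_\mathrm{BQ}(\theta; y_{1:N}^\theta) \le \E_{X,X'\sim\Pb_\theta}[k_\calX(X,X')] \le \kappa_\calX := \sup_{x \in \calX} k_\calX(x,x) < \infty$, finite because $k_\calX$ is continuous on the bounded domain by~\ref{as:app_kernel_x} and~\ref{as:app_domains_x} (the lower bound uses positivity of the posterior kernel). Splitting the expectation over the good event and its complement gives $\E\,\sigma^2_\mathrm{BQ} \le (\text{high-probability bound}) + \kappa_\calX\,\delta'$; choosing $\delta' = \delta'_N$ to decay faster than any polynomial (so the $\log(1/\delta'_N)$ fill-distance factors remain $O(N^\varepsilon)$ while $\kappa_\calX \delta'_N$ is negligible) collapses the complement term and produces the stated bound for $N \ge N_0$. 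The hard part is the fast-rate step of the second paragraph: extracting the $N^{-1/2}$ integration/averaging gain over the pointwise power-function rate by exploiting the extra smoothness of $\mu_\theta$ and matching it to the precise statement of \citet[Theorem 4]{wynne2021convergence}; by comparison the high-probability-to-expectation conversion, although it requires care with the tail event, is routine.
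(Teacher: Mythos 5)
Your overall scaffolding matches the paper's: both start from the worst-case-error characterisation of the nugget-augmented posterior variance, $\sigma^2_\mathrm{BQ}(\theta; y^\theta_{1:N}) - \lambda_\calX = \sup_{\|h\|_{\calH^{\lambda_\calX}_{\calX}}\le 1}\big|\sum_i w_i h(y_i^\theta) - \int_\calX h \,\mathrm d\Pb_\theta\big|^2$, relax the supremum to the unit ball of $\calH_\calX$, invoke \citet[Theorem 4]{wynne2021convergence} under the random design, and control the fill distance using the density lower bound. However, the crux --- the source of the extra $N^{-1/2}$ beyond the power-function rate, which you correctly single out as the hard part --- is not resolved correctly in your proposal. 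First, $\sigma^2_\mathrm{BQ}$ does not depend on the integrand at all, so your verification that $x\mapsto f(x,\theta)\in\calW^{s_f,2}(\calX)$ meets Wynne's hypotheses is beside the point; the functions being interpolated are the unit-ball elements $h\in\calH_\calX$ (paper's route) or the embedding $\mu_\theta$ (your framing). Second, the mechanism you propose --- superconvergence from the extra smoothness of $\mu_\theta$ --- is a legitimate alternative in principle, but (i) that extra smoothness comes from $p_\theta\in\calL^2(\calX)$, i.e.\ the bound $\sup_{\theta}\|p_\theta\|_{\calL^2(\calX)}=\eta_0<\infty$ in B3.(a), not from B3.(c), which concerns $\theta$-derivatives $D^\alpha_\theta p_\theta$ and plays no role here; and (ii) Wynne's Theorem 4 does not deliver what you need: it bounds interpolation error in norms weaker than the native norm, with rate saturating at the kernel smoothness, so measuring the error of interpolating $\mu_\theta$ in the $\calH_\calX$-norm it gives no decay at all. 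Closing your route requires a separate duality-plus-sampling-inequality argument, e.g.\ $\|\mu_\theta-\Pi\mu_\theta\|^2_{\calH_\calX}=\langle\mu_\theta-\Pi\mu_\theta, p_\theta\rangle_{\calL^2(\calX)}\le \|\mu_\theta-\Pi\mu_\theta\|_{\calL^2(\calX)}\,\eta_0 \lesssim h^{s_\calX}_{y^\theta_{1:N}}\|\mu_\theta-\Pi\mu_\theta\|_{\calH_\calX}\,\eta_0$, which appears nowhere in your write-up.

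The paper obtains the same gain with a single step that your proposal omits: for each unit-ball $h$, the integration error is bounded by H\"older, $\big|\int_\calX(h-\hat h)\,\mathrm d\Pb_\theta\big|\le\|h-\hat h\|_{\calL^2(\calX)}\|p_\theta\|_{\calL^2(\calX)}$, after which Wynne's Theorem 4 is applied in its $\calL^2$ (i.e.\ $\calW^{0,2}$) form in the well-specified regime, giving exactly the factor $h^{d/2}(h^{s_\calX-d/2}+\lambda_\calX)$; this is where B3.(a) enters, and it is the precise analogue of the averaging gain you describe. The paper then stays in expectation throughout, using $\E[h_{y^\theta_{1:N}}]\lesssim N^{-1/d+\varepsilon}$ from \citet[Lemma 2]{oates2019convergence} (uniform in $\theta$ by the density lower bound), so your high-probability-to-expectation conversion, while workable, is unnecessary. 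In short: your starting point and target rate are right, and your superconvergence idea can be made rigorous, but as written the key step is attributed to the wrong assumptions and to a theorem that does not provide it; you need to supply either the H\"older step or the duality argument above to close the gap.
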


The term $N_0$ quantifies how likely the points $y_{1:N}^\theta$ are to ``fill out'' the space $\calX$---for any $\theta$. Intuitively speaking, $N_0$ is smallest when for all $\theta$, the $\Pb_\theta$ is uniform. 

\begin{proof}
Recall
\begin{align*}
    I_\mathrm{BQ}(\theta; y_{1:N}^\theta) & = \mu_\theta(y_{1:N}^\theta)^\top \left(k_{\calX}(y_{1:N}^\theta, y_{1:N}^\theta)+ \lambda_\calX \Id_N\right)^{-1} f(y_{1:N}^\theta, \theta),\\
    \sigma^2_\mathrm{BQ}(\theta; y_{1:N}^\theta) &= \mathbb{E}_{X,X'\sim \mathbb{P}_\theta}[k_{\calX}(X,X')] - \mu_\theta(y_{1:N}^\theta)^\top \left(k_{\calX}(y_{1:N}^\theta, y_{1:N}^\theta)+ \lambda_\calX \Id_N\right)^{-1} \mu_\theta(y_{1:N}^\theta).
\end{align*}
We seek to bound $\sigma^2_\mathrm{BQ}(\theta; y_{1:N}^\theta)$.~\citep[Proposition 3.8]{kanagawa2018gaussian} pointed out that the Gaussian noise posterior is the worst-case error in the $\calH_{\calX}^{\lambda_\calX}$, the RKHS induced by the kernel $k_\calX^{\lambda_\calX}(x, x') = k_\calX(x, x') + \lambda_\calX \delta(x, x')$ (where $\delta(x, x') = 1$ if $x=x'$, and $0$ otherwise). 
Through straightforward algebraic manipulations and using the reproducing property, one can show that for the vector $w_\theta = k(x, y_{1:N}^\theta)^\top \left(k_{\calX}(y_{1:N}^\theta, y_{1:N}^\theta)+ \lambda_\calX \Id_N\right)^{-1} \in \R^N$, 
\begin{align}
\label{eq:variance_bound_proof_1}
    \sigma^2_\mathrm{BQ}(\theta; y_{1:N}^\theta) - \lambda_\calX =\sup_{\|f\|_{\calH_{\calX}^{\lambda_\calX}} \leq 1} \left| w_\theta f(y_{1:N}^\theta) - \int_\calX f(x) \Pb_\theta(\mathrm d x)\right|^2,
\end{align}
Since $\calH_\calX^{\lambda_\calX}$ is induced by the sum of kernels, $k_\calX^{\lambda_\calX}(x, x') = k_\calX(x, x') + \lambda_\calX$, it holds that $\calH_\calX \subseteq \calH_\calX^{\lambda_\calX}$, and $\| f  \|_{\calH_\calX^{\lambda_\calX}} \leq \| f  \|_{\calH_\calX}$~\citep[Theorem I.13.IV]{aronszajn1950theory}. Therefore, the class of functions $f$ for which $\| f  \|_{\calH_\calX} \leq 1$ is larger than that for which $\| f  \|_{\calH_\calX^{\lambda_\calX}} \leq 1$, and
\begin{align}
\label{eq:variance_bound_proof_2}   \sup_{\|f\|_{\calH_{\calX}^{\lambda_\calX}} \leq 1} \left| w_\theta f(y_{1:N}^\theta) - \int_\calX f(x) \Pb_\theta(\mathrm d x)\right| \leq \sup_{\|f\|_{\calH_{\calX}} \leq 1} \left| w_\theta f(y_{1:N}^\theta) - \int_\calX f(x) \Pb_\theta(\mathrm d x)\right|.
\end{align}
Next, note that for $\hat{f}_\theta(x) = k(x, y_{1:N}^\theta)^\top \left(k_{\calX}(y_{1:N}^\theta, y_{1:N}^\theta)+ \lambda_\calX \Id_N\right)^{-1} f(y_{1:N}^\theta)$,
\begin{align}
\begin{split}
\label{eq:variance_bound_proof_3}
   \left| w_\theta f(y_{1:N}^\theta) - \int_\calX f(x) \Pb_\theta(\mathrm d x)\right| = \left| \int_\calX \left(\hat{f}_\theta(x) - f(x) \right) \Pb_\theta(\mathrm d x)\right| &\leq  \int_\calX \left|\hat{f}_\theta(x) - f(x) \right| \Pb_\theta(\mathrm d x)  \\
    & \leq \|\hat{f}_\theta - f\|_{\calL^2(\calX)} \|p_\theta \|_{\calL^2(\calX)},
\end{split}
\end{align}
where the last inequality is an application of H\"older inequality. By Assumption~\ref{as:app_densities}$, \|p_\theta \|_{\calL^2(\calX)}$ is bounded above by $\eta_0$. In order to apply~\citep[Theorem 4]{wynne2021convergence} to bound $\|\hat{f}_\theta - f\|_{\calL^2(\calX)}$, we show the assumptions of that Theorem hold.
    
{Assumption 1 (Assumptions on the Domain):} An open, bounded, and convex $\calX$ satisfies the assumption, as discussed in~\cite{wynne2021convergence}.
    
{Assumption 2 (Assumptions on the Kernel Parameters) and Assumption 3 (Assumptions on the Kernel Smoothness Range):} Our setting is more specific than the one~\citep[Theorem 4]{wynne2021convergence}: the kernel $k_\calX$ is Mat\'ern, and therefore all smoothness constants mentioned in Assumptions 2 and 3 have the same value, $s_\calX$.

{Assumption 4 (Assumptions on the Target Function and Mean Function):} The target function $f$ was assumed to have higher smoothness than $k_\calX$ in~\ref{as:app_true_f_smoothness}, and~\ref{as:app_kernel_x}; the mean function was taken to be zero.

{Assumption 5 (Additional Assumptions on Kernel Parameters):} By \ref{as:app_kernel_x} and~\ref{as:app_true_f_smoothness} the smoothness of the true function $s_f \geq s_\calX >d/2$, which verifies both statements in the Assumption since all smoothness constants of the kernel are equal to $s_\calX$.

Therefore~\citep[Theorem 4]{wynne2021convergence} holds, and for $\calW_{0,2}(\calX)=\calL^2(\calX)$
\begin{align*}
    \|\hat{f}_\theta - f\|_{\calL^2(\calX)} \leq K_3 \| f \|_{\calH_\calX} h_{y_{1:N}^\theta}^{\frac{d}{2}} \left( h_{y_{1:N}^\theta}^{s_\calX-\frac{d}{2}} + \lambda_\calX \right),
\end{align*}
for any $N$ for which the fill distance $h_{y_{1:N}^\theta} \leq h_0$ for some $h_0$, and $K_3$ and $h_0$ that depend on $\calX, s_f, s_\calX$.\footnote{Note that the result in~\citep[Theorem 4]{wynne2021convergence} features $\| f \|_{\calW^{s_\calX,2}(\calX)}$, not $\| f \|_{\calH_\calX}$. The bound in terms $\| f \|_{\calH_\calX}$ holds since $\calH_\calX$ was assumed to be a Sobolev RKHS.}

For $y_{1:N}^\theta \sim \Pb_\theta$, we can guarantee that $h_{y_{1:N}^\theta} \leq h_0$ in expectation using~\citep[Lemma 2]{oates2019convergence}, which says that provided the density $\inf_{x} p_\theta(x)>0$, there is a $C_\theta$ such that $\E h_{y_{1:N}^\theta} \leq C_\theta N^{-1/d + \varepsilon}$ for an arbitrarily small $\varepsilon>0$, for $C_\theta$ that depends on $\theta$ through $\inf_{x} p_\theta(x)$. The smaller $\inf_{x} p_\theta(x)$, the larger $C_\theta$. Since we assumed $\inf_{x, \theta} p_\theta(x)=\eta>0$ there is a $K_4$ such that $C_\theta \leq K_4$ for any $\theta$. Therefore, we may take $N_0$ to be the smallest $N$ for which $\E h_{y_{1:N}^\theta} \leq K_4 N^{-1/d + \varepsilon}$ holds, and have for all $N \geq N_0$
\begin{equation}
\label{eq:wynne_bound_on_exp}
    \E_{y_{1:N}^\theta \sim \Pb_\theta}\|\hat{f}_\theta - f\|_{\calL^2(\calX)} \leq K_3 K_4^{\frac{d}{2}} \| f \|_{\calH_\calX} N^{-\frac{1}{2} + \varepsilon} \left( K_4^{s_\calX-\frac{d}{2}} N^{-\frac{s_\calX}{d} + \frac{1}{2} + \varepsilon} + \lambda_\calX \right)
\end{equation}
Putting together~\Cref{eq:variance_bound_proof_1,eq:variance_bound_proof_2,eq:variance_bound_proof_3,eq:wynne_bound_on_exp} and Assumption~\ref{as:app_densities}, we get the result,
\begin{align*}
    \E_{y_{1:N}^\theta \sim \Pb_\theta} \sigma^2_\mathrm{BQ}(\theta; y_{1:N}^\theta) - \lambda_\calX 
    &=
    \sup_{\|f\|_{\calH_{\calX}^{\lambda_\calX}} \leq 1} \E_{y_{1:N}^\theta \sim \Pb_\theta} \left| w_\theta f(y_{1:N}^\theta) - \int_\calX f(x) \Pb_\theta(\mathrm d x)\right|^2 \\
    &\leq 
    \sup_{\|f\|_{\calH_{\calX}} \leq 1} \E_{y_{1:N}^\theta \sim \Pb_\theta} \left| w_\theta f(y_{1:N}^\theta) - \int_\calX f(x) \Pb_\theta(\mathrm d x)\right|^2 \\
    &\leq 
    \sup_{\|f\|_{\calH_{\calX}} \leq 1} \E_{y_{1:N}^\theta \sim \Pb_\theta} \|\hat{f}_\theta - f\|^2_{\calL^2(\calX)} \|p_\theta \|^2_{\calL^2(\calX)} \\
    &\leq 
    \eta^2_0 K^2_3 K_4^d N^{-1 + 2\varepsilon} \left( K_4^{s_\calX-\frac{d}{2}} N^{-\frac{s_\calX}{d} + \frac{1}{2} + \varepsilon} + \lambda_\calX \right)^2 \\
    &\eqqcolon C_2 N^{-1 + 2\varepsilon} \left( N^{-\frac{s_\calX}{d}+\frac{1}{2} + \varepsilon} + C_3 \lambda_\calX \right)^2.
\end{align*}
\end{proof}
Before bounding the error $\|I_\mathrm{BQ} - I\|_{\calH_\Theta}$, we give the following general auxiliary result for an arbitrary Sobolev space of function over some open $\Omega \subseteq \R^d$.

\begin{prop}
\label{res:sobolev_product_bound}
    Suppose $f, g$ lie in a Sobolev space $\calW^{s,2}(\Omega)$ for some of smoothness $s$, and for all $|\alpha|\leq s$ the weak derivative $D^\alpha g$ is bounded. Take $M = \max_{|\alpha|\leq s} \|D^\alpha g\|_{\calL^\infty(\Omega)}$. Then, there is a constant $K$ such that
    \begin{equation*}
        \|fg\|_{\calW^{s,2}(\Omega)} \leq KM \|f\|_{\calW^{s,2}(\Omega)}.
    \end{equation*}
\end{prop}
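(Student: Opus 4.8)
The plan is to reduce the claim to the Leibniz (product) rule for weak derivatives together with elementary term-by-term bounds. Recall that for integer smoothness the Sobolev norm decomposes as $\|h\|_{\calW^{s,2}(\Omega)}^2 = \sum_{|\alpha|\leq s}\|D^\alpha h\|_{\calL^2(\Omega)}^2$, so it suffices to control $\|D^\alpha(fg)\|_{\calL^2(\Omega)}$ for every multi-index $\alpha$ with $|\alpha|\leq s$ and then sum over $\alpha$.

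First I would apply the product rule for weak derivatives, which under our hypotheses gives $D^\alpha(fg) = \sum_{\beta\leq\alpha}\binom{\alpha}{\beta} D^\beta f\, D^{\alpha-\beta} g$ almost everywhere on $\Omega$. Each summand is a product of $D^\beta f \in \calL^2(\Omega)$ (valid since $|\beta|\leq s$) and $D^{\alpha-\beta}g \in \calL^\infty(\Omega)$ (valid since $|\alpha-\beta|\leq s$ and all weak derivatives of $g$ up to order $s$ are bounded), hence lies in $\calL^2(\Omega)$; this also confirms $fg\in\calW^{s,2}(\Omega)$ in the first place.

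Next I would bound each term by pulling out the $\calL^\infty$ norm of the $g$-factor, using $\|D^\beta f\, D^{\alpha-\beta}g\|_{\calL^2(\Omega)} \leq \|D^{\alpha-\beta}g\|_{\calL^\infty(\Omega)}\,\|D^\beta f\|_{\calL^2(\Omega)} \leq M\,\|f\|_{\calW^{s,2}(\Omega)}$, where the last step uses the definition of $M$ and the fact that every $\|D^\beta f\|_{\calL^2(\Omega)}$ with $|\beta|\leq s$ is dominated by the full Sobolev norm. Summing over $\beta\leq\alpha$ and using $\sum_{\beta\leq\alpha}\binom{\alpha}{\beta}=2^{|\alpha|}\leq 2^s$ gives $\|D^\alpha(fg)\|_{\calL^2(\Omega)} \leq 2^s M\,\|f\|_{\calW^{s,2}(\Omega)}$. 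Squaring, summing over the finitely many multi-indices $\alpha$ with $|\alpha|\leq s$ (say $N_s$ of them), and taking square roots then yields $\|fg\|_{\calW^{s,2}(\Omega)} \leq 2^s\sqrt{N_s}\,M\,\|f\|_{\calW^{s,2}(\Omega)}$, i.e. the claim with $K=2^s\sqrt{N_s}$ depending only on $s$ and the dimension.

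The main obstacle is the justification of the Leibniz rule in this low-regularity setting, since $g$ is only a Sobolev function with bounded weak derivatives rather than a classical $C^s$ function. I would handle this exactly as in the product-rule argument already used when discussing the technical assumptions (see~\citet[Section 4.2.2]{evans2018measure}): the hypothesis that all weak derivatives of $g$ up to order $s$ lie in $\calL^\infty(\Omega)$ is precisely what guarantees that the products $D^\beta f\, D^{\alpha-\beta}g$ are integrable and that the induction on $|\alpha|$ establishing the Leibniz formula closes. Everything else is routine multi-index bookkeeping.
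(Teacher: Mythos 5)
Your proof is correct and follows essentially the same route as the paper's own: both expand the Sobolev norm term by term, invoke the product rule for weak derivatives (citing the same reference, Section 4.2.2 of Evans--Gariepy), and bound each summand by pulling out the $\calL^\infty$ norm of the derivatives of $g$ before summing over the finitely many multi-indices. If anything, your version is slightly sharper, since you use the exact binomial form of the Leibniz rule and obtain the explicit constant $K = 2^s\sqrt{N_s}$, whereas the paper works with unspecified combinatorial constants.
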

\begin{proof}
    Recall that the norm in a Sobolev space is defined as
    \begin{equation}
    \label{eq:sobolev_norm_of_product_defn}
        \|fg\|^2_{\calW^{s,2}(\Omega)} = \sum_{|\alpha|\leq s} \|D^\alpha[fg]\|^2_{\calL^2(\Omega)}.
    \end{equation}
    Fix some $\alpha$ such that $|\alpha|\leq s$. By the product rule to weak derivatives (see, for instance,~\citet[Section 4.2.2]{evans2018measure}), it holds that
    \begin{equation*}
        D^{\alpha} [fg] = \sum_{\substack{|\alpha'|\leq|\alpha|}} \sum_{\substack{|\alpha''|\leq|\alpha|}} C_{\alpha',\alpha'',\alpha} D^{\alpha'} [f] D^{\alpha''} [g],
    \end{equation*}
    for all $\alpha', \alpha''$ being multi-indices of the same dimension as $\alpha$, and some real constants $C_{\alpha',\alpha'',\alpha}>0$ that only depend on $\alpha$ and not $f$ or $g$. Then
    \begin{align*}
        \|D^{\alpha}[fg]\|^2_{\calL^2(\Omega)} 
        &= \left\|\sum_{\substack{|\alpha'|\leq|\alpha|}} \sum_{\substack{|\alpha''|\leq|\alpha|}} C_{\alpha',\alpha'',\alpha} D^{\alpha'} [f] D^{\alpha''} [g]\right\|^2_{\calL^2(\Omega)} \\
        &\stackrel{(A)}{\leq} 
        \left(\sum_{\substack{|\alpha'|\leq|\alpha|}} \sum_{\substack{|\alpha''|\leq|\alpha|}} C_{\alpha',\alpha'',\alpha} \| D^{\alpha'} [f] D^{\alpha''} [g]\|_{\calL^2(\Omega)} \right)^2 \\
        &\stackrel{(B)}{\leq} 
        2 \begin{pmatrix} d\\|\alpha| \end{pmatrix} \sum_{\substack{|\alpha'|\leq|\alpha|}} \sum_{\substack{|\alpha''|\leq|\alpha|}} C_{\alpha',\alpha'',\alpha} 
        \| D^{\alpha'} [f] D^{\alpha''} [g]\|^2_{\calL^2(\Omega)} \\
        &\stackrel{(C)}{\leq} 
        2M^2 \begin{pmatrix} d\\|\alpha| \end{pmatrix} \sum_{\substack{|\alpha'|\leq|\alpha|}} \sum_{\substack{|\alpha''|\leq|\alpha|}} C_{\alpha',\alpha'',\alpha} 
        \| D^{\alpha'} [f] \|^2_{\calL^2(\Omega)} \\
        &\leq 2M^2 \begin{pmatrix} d\\|\alpha| \end{pmatrix} \sum_{\substack{|\alpha'|\leq|\alpha|}} \sum_{\substack{|\alpha''|\leq|\alpha|}} C_{\alpha',\alpha'',\alpha} 
        \| f \|^2_{\calW^{s,2}(\Omega)},
    \end{align*}
    where $(A)$ holds by triangle inequality, $(B)$ holds as, by Cauchy-Schwartz, $(\sum_{i=1}^n a_i)^2 \leq n \sum_{i=1}^n a_i^2$ for any real $a_i$, and as the number of multi-indices in $\mathbb N^d$ of size at most $\alpha$ is ``$d$ choose $|\alpha|$'', and $(C)$ by the definition $M = \max_{|\alpha|\leq s} \|D^\alpha g\|_{\calL^\infty}(\Omega)$. Substituting this into~\Cref{eq:sobolev_norm_of_product_defn}, we get that for $\sqrt{K} = 2\sum_{|\alpha| < s} \begin{pmatrix} d\\|\alpha| \end{pmatrix} \sum_{\substack{|\alpha'|\leq|\alpha|}} \sum_{\substack{|\alpha''|\leq|\alpha|}} C_{\alpha',\alpha'',\alpha}$,
    \begin{equation*}
        \|fg\|^2_{\calW^{s,2}(\Omega)} \leq K^2 M^2 \| f \|^2_{\calW^{s,2}(\Omega)}.
    \end{equation*}
\end{proof}
With the Sobolev norm bound in place, we are ready to give the bound on $\|I_\mathrm{BQ} - I\|_{\calH_\Theta}$.
\begin{theorem}
\label{res:bound_on_bq_error}
    Suppose Assumptions~\ref{as:app_true_f_smoothness},~\ref{as:app_true_I_norm_bounds},~\ref{as:app_domains_x},~\ref{as:app_x_samples},~\ref{as:app_densities},~\ref{as:app_densities_linf},~\ref{as:app_kernel_x},~\ref{as:app_kernel_theta} and~\ref{as:app_regulariser_x} hold. Then there is a $N_0>0$ such that for all $N \geq N_0$ with probability at least $1-\delta/2$ it holds that
    \begin{equation*}
        \|I_\mathrm{BQ} - I\|_{\calH_\Theta} \leq \frac{2}{\delta} C_4 N^{-\frac{1}{2} + \varepsilon} \left( N^{-\frac{s_\calX}{d} + \frac{1}{2} + \varepsilon} + C_5 \lambda_\calX \right).
    \end{equation*}
    for any arbitrarily small $\varepsilon>0$, and $C_4, C_5$ independent of $N,\varepsilon, \lambda_\calX$.
\end{theorem}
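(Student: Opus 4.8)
The plan is to control the error function $e(\theta) \defines I_\mathrm{BQ}(\theta; x_{1:N}^\theta) - I(\theta)$ in the Sobolev norm and then convert an expectation bound into a high-probability one. Since $\calH_\Theta$ is norm-equivalent to $\calW^{s_\Theta, 2}(\Theta)$ by Assumption~\ref{as:app_kernel_theta}, it suffices to bound $\E \|e\|_{\calW^{s_\Theta, 2}(\Theta)}$ and then apply Markov's inequality: if $\E\|e\|_{\calH_\Theta} \leq B$, then $\|e\|_{\calH_\Theta} \leq (2/\delta) B$ with probability at least $1-\delta/2$, which is exactly the claimed form with the prefactor $2/\delta$.

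First I would write the error as an integrated interpolation error. Letting $\hat{f}_\theta$ denote the Stage~1 GP interpolant of $f(\cdot, \theta)$ at the points $x_{1:N}^\theta$, we have $I_\mathrm{BQ}(\theta) = \int_\calX \hat{f}_\theta(x)\, p_\theta(x)\,\mathrm d x$ and hence $e(\theta) = \int_\calX \big(\hat{f}_\theta(x) - f(x, \theta)\big) p_\theta(x)\,\mathrm d x$. Expanding $\|e\|^2_{\calW^{s_\Theta,2}(\Theta)} = \sum_{|\alpha|\leq s_\Theta} \|D_\theta^\alpha e\|^2_{\calL^2(\Theta)}$ and differentiating under the integral sign, the product rule for weak derivatives splits each $D_\theta^\alpha e(\theta)$ into a finite sum of terms $\int_\calX D_\theta^\beta(\hat{f}_\theta - f)(x)\, D_\theta^{\alpha-\beta} p_\theta(x)\,\mathrm d x$. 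Applying H\"older's inequality (as in~\Cref{eq:variance_bound_proof_3}) together with the uniform bound $M_p$ on the $\theta$-derivatives of $p_\theta$ from Assumption~\ref{as:app_densities_linf}, each such term is controlled pointwise in $\theta$ by $\|D_\theta^\beta(\hat{f}_\theta - f)\|_{\calL^2(\calX)}$ up to constants; \Cref{res:sobolev_product_bound} makes this factorisation precise and absorbs the $p_\theta$ factor into the constants.

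The crux is then to bound, in expectation over the samples, the $\calL^2(\calX)$ norm of the $\theta$-derivatives of the interpolation error. Here I would argue that $D_\theta^\beta \hat{f}_\theta$ is itself the Stage~1 interpolant of $D_\theta^\beta f(\cdot, \theta)$: when the design points are held fixed in $\theta$, the weights $(k_\calX(x_{1:N}, x_{1:N}) + \lambda_\calX \Id_N)^{-1} f(x_{1:N}, \theta)$ depend on $\theta$ only through the evaluations $f(x_{1:N}, \theta)$, so $\theta$-differentiation commutes with interpolation exactly. Assumption~\ref{as:app_true_I_norm_bounds} guarantees that each $D_\theta^\beta f(\cdot, \theta) \in \calW^{s_I,2}(\calX)$ with $\calX$-Sobolev norm uniformly bounded by $M_f$ across $\theta$, so the hypotheses of~\citet[Theorem 4]{wynne2021convergence} hold for each of these finitely many target functions exactly as verified in the proof of~\Cref{res:bound_on_bq_var}. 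Invoking that theorem with the expected fill-distance bound of~\citet[Lemma 2]{oates2019convergence} (valid uniformly over $\theta$ since $\inf_{\theta,x} p_\theta(x) = \eta > 0$ by Assumption~\ref{as:app_densities}) yields $\E\|D_\theta^\beta(\hat{f}_\theta - f)\|^2_{\calL^2(\calX)} \leq (\mathrm{const}) N^{-1 + 2\varepsilon}(N^{-s_\calX/d + 1/2 + \varepsilon} + C_5 \lambda_\calX)^2$ for every $\theta$ and every $|\beta| \leq s_\Theta$.

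Putting the pieces together, I would use Jensen's inequality, $\E\|e\|_{\calH_\Theta} \leq \sqrt{\E\|e\|^2_{\calW^{s_\Theta,2}(\Theta)}}$, Fubini to interchange $\E$ with $\int_\Theta$, and the per-$\theta$ bound above; summing over the finitely many multi-indices $\alpha, \beta$ and absorbing all $\theta$- and $x$-independent factors (including $\mathrm{vol}(\Theta)$, $\mathrm{vol}(\calX)$, $M_p$, $M_f$ and the product-rule constants) into $C_4$ gives $\E\|e\|_{\calH_\Theta} \leq C_4 N^{-1/2 + \varepsilon}(N^{-s_\calX/d + 1/2 + \varepsilon} + C_5 \lambda_\calX)$, and a final application of Markov's inequality delivers the stated bound. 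The main obstacle is the commutation step: when the design points $x_{1:N}^\theta$ genuinely vary with $\theta$ (as they do when drawn from $\Pb_\theta$), differentiating $\hat{f}_\theta$ produces extra terms in which $D_\theta$ acts on the point locations. I would control these by writing $x_i^\theta = G_\theta(u_i)$ via a transport from a fixed reference distribution, whose $\theta$-regularity is inherited from the smoothness of $p_\theta$ in Assumptions~\ref{as:app_densities} and~\ref{as:app_densities_linf}, and verifying that these additional terms are of the same or smaller order is the most delicate part of the argument.
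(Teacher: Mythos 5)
Your proposal is correct and follows essentially the same route as the paper's proof: norm equivalence between $\calH_\Theta$ and $\calW^{s_\Theta,2}(\Theta)$, pulling the $\theta$-norm through the $x$-integral and absorbing the $\theta$-derivatives of $p_\theta$ via the product bound of \Cref{res:sobolev_product_bound} with $M_p$, commuting $D_\theta^\alpha$ with the Stage~1 interpolation, invoking \citet[Theorem 4]{wynne2021convergence} together with the expected fill-distance bound of \citet[Lemma 2]{oates2019convergence}, and finishing with Markov's inequality to produce the $2/\delta$ factor. The only substantive difference is that you flag the $\theta$-dependence of the design points $x_{1:N}^\theta$ in the commutation step as a delicate issue needing a transport-map argument, whereas the paper's proof simply writes $D_\theta^\alpha \hat f(x,\theta)$ as the interpolant of $D_\theta^\alpha f(\cdot,\theta)$ with the points treated as fixed in $\theta$, leaving that subtlety unaddressed.
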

\begin{proof}
    Recall that, as $\calH_\Theta$ is a Sobolev RKHS (meaning $k_\Theta$ is a Sobolev kernel) of smoothness $s_\Theta$, it holds that $C'_1 \|g\|_{\calW^{s_\Theta, 2}(\Theta)} \leq \|g\|_{\calH_\Theta} \leq C'_2 \|g\|_{\calW^{s_\Theta, 2}(\Theta)}$ for some constants $C'_1,C'_2>0$ and any $g \in \calH_\Theta$.
    Take $\hat{f}(x, \theta) = k(x, x_{1:N}^\theta)^\top \left(k_{\calX}(x_{1:N}^\theta, x_{1:N}^\theta)+ \lambda_\calX \Id_N\right)^{-1} f(x_{1:N}^\theta, \theta)$. Then,
    \begin{align*}
        \|I_\mathrm{BQ} - I\|^2_{\calH_\Theta} 
        &= 
        \langle I_\mathrm{BQ} - I, I_\mathrm{BQ} - I\rangle_{\calH_\Theta}  \\
        &= 
        \left\langle \int_\calX \left( \hat f(x, \theta) - f(x, \theta) \right) p_\theta(x)  \mathrm{d} x, \int_\calX \left( \hat f(x', \theta) - f(x', \theta) \right) p_\theta(x')  \mathrm{d} x' \right\rangle_{\calH_\Theta}  \\
        &\leq \int_\calX \int_\calX \left\langle   \left( \hat f(x, \theta) - f(x, \theta) \right) p_\theta(x) , \left( \hat f(x', \theta) - f(x', \theta) \right) p_\theta(x')  \right\rangle_{\calH_\Theta} \mathrm{d} x \mathrm{d} x' \\
        &\stackrel{(A)}{\leq} 
        \left(\int_\calX \left\| \left( \hat f(x, \theta) - f(x, \theta) \right) p_\theta(x)  \right\|_{\calH_\Theta} \mathrm{d} x \right)^2 \\
        &\stackrel{(B)}{\leq} 
        {C'_2}^2 K^2 M_p^2 \left(\int_\calX \left\| \hat f(x, \theta) - f(x, \theta) \right\|_{\calW^{s_\Theta, 2}(\Theta)} \mathrm{d} x \right)^2,
    \end{align*}
    where $(A)$ holds by the Cauchy-Schwarz, $(B)$ by~\Cref{res:sobolev_product_bound} and $\calH_\Theta$ being a Sobolev RKHS. 
    % $(C)$ by the H\"older inequality, and $(D)$ by the Assumption smth and $\calH_\Theta$ being a Sobolev RKHS. 
    As for the remaining term,
    \begin{align*}
        \int_\calX \left\| \hat f(x, \theta) - f(x, \theta) \right\|^2_{\calW^{s_\Theta, 2}(\Theta)} \mathrm d x 
        &= 
        \sum_{|\alpha|\leq s_\Theta} \int_\calX \int_\Theta \left( D_\theta^\alpha \hat f(x, \theta) - D_\theta^\alpha f(x, \theta) \right)^2 \mathrm d \theta \mathrm d x \\
        &= 
        \sum_{|\alpha|\leq s_\Theta} \int_\Theta \int_\calX \left( D_\theta^\alpha \hat f(x, \theta) - D_\theta^\alpha f(x, \theta) \right)^2 \mathrm d x \mathrm d \theta \\
        &= 
        \sum_{|\alpha|\leq s_\Theta} \int_\Theta \left\| D_\theta^\alpha \hat f(x, \theta) - D_\theta^\alpha f(x, \theta) \right\|^2_{\calL^2(\calX)} \mathrm d \theta
    \end{align*}
    Since $D_\theta^\alpha \hat{f}(x, \theta) = k(x, x_{1:N}^\theta)^\top \left(k_{\calX}(x_{1:N}^\theta, x_{1:N}^\theta)+ \lambda_\calX \Id_N\right)^{-1} D_\theta^\alpha f(x_{1:N}^\theta, \theta)$, and the $\calX$-smoothness of $D_\theta^\alpha f$ is the same as that of $f$, we may use~\citet[Theorem 4]{wynne2021convergence} to bound $\|D_\theta^\alpha \hat f(x, \theta) - D_\theta^\alpha f(x, \theta)\|_{\calL^2(\calX)}$ identically to the proof of~\Cref{res:bound_on_bq_var}. Then, we have that
    \begin{align*}
        \E_{x_{1:N}^\theta \sim \Pb_\theta}\|D_\theta^\alpha \hat f(x, \theta) - D_\theta^\alpha f(x, \theta)\|_{\calL^2(\calX)} 
        &\leq 
        K_3 K_4^{\frac{d}{2}} \| D_\theta^\alpha f \|_{\calH_\calX} N^{-\frac{1}{2} + \varepsilon} \left( K_4^{s_\calX-\frac{d}{2}} N^{-\frac{s_\calX}{d} + \frac{1}{2} + \varepsilon} + \lambda_\calX \right) \\
        &\stackrel{(A)}{\leq} 
        K_3 K_4^{\frac{d}{2}} C'_2 M_f N^{-\frac{1}{2} + \varepsilon} \left( K_4^{s_\calX-\frac{d}{2}} N^{-\frac{s_\calX}{d} + \frac{1}{2} + \varepsilon} + \lambda_\calX \right),
    \end{align*}
    where $(A)$ holds by Assumption~\ref{as:app_true_I_norm_bounds}, and $k_\calX$ being a Sobolev kernel and $C'_2$ being a norm equivalence constant. Define 
    By Markov's inequality, for any $\delta/2 \in (0,1)$ it holds with probability at least $1-\delta/2$ that 
    \begin{align*}
        \|D_\theta^\alpha \hat f(x, \theta) - D_\theta^\alpha f(x, \theta)\|_{\calL^2(\calX)} \leq \frac{2}{\delta}K_3 K_4^{\frac{d}{2}} C'_2 M_f N^{-\frac{1}{2} + \varepsilon} \left( K_4^{s_\calX-\frac{d}{2}} N^{-\frac{s_\calX}{d} + \frac{1}{2} + \varepsilon} + \lambda_\calX \right)
    \end{align*}
    Lastly, the number of $\alpha$ such that $|\alpha| < s_\Theta$ is the combination ``$p$ select $s_\Theta$''. Then,
    \begin{align*}
        \|I_\mathrm{BQ} - I\|^2_{\calH_\Theta}  
        &\leq  
        {C'_2}^2 K^2 M_p^2 \begin{pmatrix} p \\ s_\Theta \end{pmatrix} \left( \frac{2}{\delta}K_3 K_4^{\frac{d}{2}} C'_2 M_f N^{-\frac{1}{2} + \varepsilon} \left( K_4^{s_\calX-\frac{d}{2}} N^{-\frac{s_\calX}{d} + \frac{1}{2} + \varepsilon} + \lambda_\calX \right) \right)^2 \\
        &\eqqcolon
        \frac{4}{\delta^2} \sqrt{C_4} N^{-1 + 2\varepsilon} \left( N^{-\frac{s_\calX}{d} + \frac{1}{2} + \varepsilon} + C_5 \lambda_\calX \right) ^2.
    \end{align*}
\end{proof}

\subsection{Stage 2 bounds}
\label{sec:stage2}

In this section, we establish convergence of the estimator $I_\mathrm{CBQ}$ to the true function $I$ in the norm $\calL^2(\Theta, \Qb)$, first in terms of the error $\|I_\mathrm{BQ}(\cdot; x^\theta_{1:N}) - I(\cdot)\|_{\calH_\Theta}$ in~\Cref{res:convergence_of_iwkrr}, and additionally in the variance $\sigma^2_\mathrm{BQ}(\theta; x^\theta_{1:N})$ in~\Cref{res:unweighted_bound_stage_2}. To do so, we represent the CBQ estimator as
\begin{equation}
\label{eq:cbq_noisy_weights}
    I_\mathrm{CBQ}(\theta) = k_\Theta(\theta, \theta_{1:M}) \left( k_\Theta(\theta_{1:M}, \theta_{1:M}) + \diag \left[ \frac{M \lambda}{w(\theta_{1:M}) + \varepsilon(\theta_{1:M}; x^{1:M}_{1:N})} \right] \right)^{-1} I_\mathrm{BQ}(\theta_{1:M}; x^{1:M}_{1:N}).
\end{equation}
for vector notation $\varepsilon(\theta_{1:M}; x^{1:M}_{1:N}) = [\varepsilon(\theta_1; x^1_{1:N}), \ldots,  \varepsilon(\theta_M; x^M_{1:N}) ]^\top \in \R^M$, and $\lambda$, the weight $w: \Theta \to \R$ and the noise term $\varepsilon: \Theta \to \R$ given by
\begin{equation}
\begin{split}
\label{eq:weights}
    &\lambda = \lambda_\Theta M^{-1} \\
    &w(\theta) = \E_{y^\theta_{1:N} \sim \Pb_\theta}\frac{\lambda_\Theta}{\lambda_\Theta + \sigma^2_\mathrm{BQ}(\theta; y^\theta_{1:N})},\\
    &\varepsilon(\theta; x^\theta_{1:N}) = \frac{\lambda_\Theta}{\lambda_\Theta + \sigma^2_\mathrm{BQ}(\theta; x^\theta_{1:N})} - \E_{y^\theta_{1:N} \sim \Pb_\theta} \frac{\lambda_\Theta}{\lambda_\Theta + \sigma^2_\mathrm{BQ}(\theta; y^\theta_{1:N})}.
\end{split}
\end{equation}
The equality to the CBQ estimator given in the main text can be easily seen, as the term under the $\mathrm{diag}$ is
\begin{equation*}
    \frac{M \lambda}{w(\theta_{1:M}) + \varepsilon(\theta_{1:M}; x^{1:M}_{1:N})} = \frac{M \lambda_\Theta M^{-1}}{\frac{\lambda_\Theta}{\lambda_\Theta + \sigma^2_\mathrm{BQ}(\theta; x^\theta_{1:N})}} = \lambda_\Theta + \sigma^2_\mathrm{BQ}(\theta; x^\theta_{1:N}).
\end{equation*}
If the noise term in~\Cref{eq:cbq_noisy_weights} were absent (meaning, equal to zero), the estimator would become the \emph{importance-weighted kernel ridge regression} (IW-KRR) estimator. The convergence of the IW-KRR estimator was studied in~\citet[Theorem 4]{gogolashvili2023importance}. In this section, we extend their results to the case of noisy weights ($\varepsilon \not\equiv 0$), which are additionally correlated with the noise in $I_\mathrm{BQ}(\theta_i; x^i_{1:N})$ (through the shared datapoints $x^i_{1:N}$). 

Note that, while we only provide results specific for $I_\mathrm{CBQ}$, the proof can be extended with minor modifications to the more general case of arbitrary noisy IW-KRR with weights that satisfy conditions in~\citet{gogolashvili2023importance}, and zero-mean weight noise.

The convergence results for the \emph{noisy importance-weighted kernel ridge regression} estimator in~\Cref{sec:convergence_of_niwkrr} will rely on a representation of $I_\mathrm{CBQ}$ in terms of a sample-level version of a certain weighted integral operator. Then, we bound the gap between $I_\mathrm{CBQ}$ and $I$ in terms of (1) the gap between the sample-level version of said operator, and the population-level version, and (2) the gap between $I_\mathrm{BQ}$ and $I$. Next, we define said operator, and additional notation used in the proofs.

\subsubsection{Notation}
\label{sec:notation}
We will be working on positive, bounded, self-adjoint $\calH_\Theta \to \calH_\Theta$ operators 
\begin{equation}
    T[g](\theta) = \int_\Theta k_\Theta(\theta, \theta') g(\theta') w(\theta') \Qb(\mathrm d \theta') \qquad 
    \hat{T}[g](\theta) = \frac{1}{M} k_\Theta(\theta, \theta_{1:M}) \diag\left[w(\theta_{1:M}) + \varepsilon(\theta_{1:M}; x^{1:M}_{1:N}) \right] g(\theta_{1:M}).
\end{equation}
for the weight function $w$ and noise term $\varepsilon$ as defined in~\Cref{eq:weights}.
We will denote $\mathrm{HS}$ to be the Hilbert space of Hilbert-Schmidt operators $\calH_\Theta \to \calH_\Theta$, $\|\cdot\|_\text{HS}$ to be the Hilbert-Schmidt norm, and $\|\cdot\|_\mathrm{op}$ to be the operator norm. As is customary, we will write $T+\lambda$ to mean the operator $T+\lambda \Id_{\calH_\Theta}$, where $\Id_{\calH_\Theta}$ is the identity operator $\calH_\Theta \to \calH_\Theta$.

\subsubsection{Auxiliary results}
\label{sec:auxilliary_results_stage_2}

The results given in this section are key to proving the main Stage 2 result,~\Cref{res:convergence_of_iwkrr}. The following result bounds the Hilbert-Schmidt norm on the ``gap'' between the population-level $T$ and the sample-level $\hat T$, when their difference is ``sandwiched'' between $(T+\lambda)^{-1/2}$. With some manipulation, this term will appear in the proof of~\Cref{res:convergence_of_iwkrr}.
\begin{lemma}[Modified Lemma 18 in~\citet{gogolashvili2023importance}]
\label{res:s1}
Suppose Assumptions~\ref{as:app_x_samples},~\ref{as:app_kernel_theta_bounded} hold, and the operators $T, \hat{T}$ be as defined in \Cref{sec:notation}. Then, with probability greater than $1 - \delta/2$,
    \begin{equation*}
        S_1 \coloneqq \|(T+\lambda)^{-1/2} (T-\hat{T}) (T+\lambda)^{-1/2} \|_{\mathrm{HS}} \leq \frac{4\kappa}{\lambda \sqrt M} \log(4/\delta).
    \end{equation*}
Additionally, if $\lambda \sqrt{M} > (4/C_6) \kappa \log(4/\delta)$ for some $C_6\leq1$, it holds that $S_1<C_6 \leq 1$.
\end{lemma}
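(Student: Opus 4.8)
The plan is to recognise $\hat T$ as an empirical average of $M$ i.i.d.\ random operators and to apply a Bernstein-type concentration inequality in the Hilbert space of Hilbert--Schmidt operators, adapting the argument of~\citet[Lemma 18]{gogolashvili2023importance} so as to accommodate the additional noise term $\varepsilon$.

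First I would rewrite both operators in rank-one form. Using the reproducing property $g(\theta_i)=\langle g, k_\Theta(\cdot,\theta_i)\rangle_{\calH_\Theta}$, we may write $\hat T = \frac1M\sum_{i=1}^M (w(\theta_i)+\varepsilon(\theta_i;x^i_{1:N}))\, k_\Theta(\cdot,\theta_i)\otimes k_\Theta(\cdot,\theta_i)$, so that, setting
\begin{equation*}
    A_i := (w(\theta_i)+\varepsilon(\theta_i;x^i_{1:N}))\,\big[(T+\lambda)^{-1/2}k_\Theta(\cdot,\theta_i)\big]\otimes\big[(T+\lambda)^{-1/2}k_\Theta(\cdot,\theta_i)\big],
\end{equation*}
the sandwiched difference becomes $(T+\lambda)^{-1/2}(T-\hat T)(T+\lambda)^{-1/2}=\frac1M\sum_{i=1}^M(\E[A_i]-A_i)$. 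The crucial point---and the only place the modification enters---is to verify that $\E[A_i]=(T+\lambda)^{-1/2}T(T+\lambda)^{-1/2}$. Taking the expectation over $x^i_{1:N}\sim\Pb_{\theta_i}$ first, the definition of $\varepsilon$ in~\Cref{eq:weights} gives $\E_{x^i}[w(\theta_i)+\varepsilon(\theta_i;x^i_{1:N})]=w(\theta_i)$; taking the expectation over $\theta_i\sim\Qb$ then reproduces exactly the integral operator $T$. Thus $\hat T$ remains an \emph{unbiased} estimator of $T$, and since the pairs $(\theta_i,x^i_{1:N})$ are i.i.d.\ across $i$, so are the $A_i$.

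Next I would bound each summand. Because $w(\theta)+\varepsilon(\theta;x^\theta_{1:N})=\lambda_\Theta/(\lambda_\Theta+\sigma^2_\mathrm{BQ}(\theta;x^\theta_{1:N}))\in(0,1]$ irrespective of the value of $\varepsilon$, and $\|(T+\lambda)^{-1/2}k_\Theta(\cdot,\theta_i)\|^2_{\calH_\Theta}\le\lambda^{-1}k_\Theta(\theta_i,\theta_i)\le\kappa/\lambda$ by Assumption~\ref{as:app_kernel_theta_bounded}, each rank-one operator satisfies $\|A_i\|_{\mathrm{HS}}\le\kappa/\lambda$ almost surely, and likewise $\|\E[A_i]\|_{\mathrm{HS}}\le\kappa/\lambda$. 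Hence the centred variables $\xi_i:=\E[A_i]-A_i$ are i.i.d., zero-mean, and bounded by $\|\xi_i\|_{\mathrm{HS}}\le 2\kappa/\lambda$, with $\E\|\xi_i\|^2_{\mathrm{HS}}\le\kappa^2/\lambda^2$. Applying the Bernstein inequality for i.i.d.\ bounded Hilbert-space-valued random elements then gives, with probability at least $1-\delta/2$, a bound of the form $\tfrac{4\kappa}{3\lambda M}\log(4/\delta)+\tfrac{\kappa}{\lambda}\sqrt{2\log(4/\delta)/M}$; using $\log(4/\delta)>1$ (so that $\sqrt{2\log(4/\delta)}\le 2\log(4/\delta)$) together with $M\ge1$ to collapse both terms yields $S_1\le\tfrac{4\kappa}{\lambda\sqrt M}\log(4/\delta)$. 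The final claim is then immediate: substituting the hypothesis $\lambda\sqrt M>(4/C_6)\kappa\log(4/\delta)$ gives $S_1<C_6\le1$.

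The main obstacle I anticipate is not the concentration step, which is standard once the operators are shown to be bounded, but rather checking cleanly that the noise term $\varepsilon$ does not break the two properties the argument relies on: the unbiasedness $\E[\hat T]=T$ and the almost-sure bound on $\|A_i\|_{\mathrm{HS}}$. Both hinge on the precise definition of $\varepsilon$ in~\Cref{eq:weights}---unbiasedness from its zero conditional mean, and boundedness from the fact that $w+\varepsilon$ collapses to the manifestly $(0,1]$-valued ratio $\lambda_\Theta/(\lambda_\Theta+\sigma^2_\mathrm{BQ})$. It is worth noting that the correlation between $\varepsilon(\theta_i;x^i_{1:N})$ and the Stage~1 error $I_\mathrm{BQ}(\theta_i;x^i_{1:N})$, which complicates the later results, is irrelevant here: this lemma concerns only the operator $\hat T$, and its i.i.d.\ structure across $i$ is left untouched by that dependence.
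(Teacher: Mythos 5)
Your proposal is correct and follows essentially the same route as the paper's proof: both express the sandwiched difference as a deviation of i.i.d.\ Hilbert--Schmidt rank-one operators from their mean, use the zero conditional mean of $\varepsilon$ for unbiasedness and the identity $w+\varepsilon=\lambda_\Theta/(\lambda_\Theta+\sigma^2_\mathrm{BQ})\in(0,1]$ together with $\kappa$ for the almost-sure bound $\kappa/\lambda$, and then invoke a Bernstein inequality for Hilbert-space-valued random variables. The only cosmetic difference is that the paper verifies the $m$-th moment condition of \citet[Proposition 2]{caponnetto2007optimal} while you apply the bounded-variable form directly; your constant-collapsing step ($10/3\le 4$) checks out.
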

The fact that $S_1$ is strictly less than $1$ will be important in the proof of the main Stage 2 result,~\Cref{res:convergence_of_iwkrr}, as it will allow us to apply Neumann series expansion to $\|(\Id - (T+\lambda)^{-1/2} (T-\hat{T}) (T+\lambda)^{-1/2})^{-1}\|_\mathrm{op}$.

\begin{proof}
    Denote a feature function $\varphi_\theta (\cdot) \coloneqq k_\Theta(\theta, \cdot)$. Let $\xi, \xi_1, \dots, \xi_M$ be random variables in $\mathrm{HS}$ defined as
    \begin{align*}
        \xi &= (T+\lambda)^{-1/2} (w(\theta)+\varepsilon(\theta; x^\theta_{1:N})) \varphi_\theta \langle \varphi_\theta, \cdot \rangle_{\calH_\Theta} (T+\lambda)^{-1/2} \\
        \xi_i &= (T+\lambda)^{-1/2} (w(\theta_i) + \varepsilon(\theta_i; x^i_{1:N})) \varphi_{\theta_i} \langle \varphi_{\theta_i}, \cdot \rangle_{\calH_\Theta} (T+\lambda)^{-1/2}
    \end{align*}
    
    First, note that as $(\theta, x_{1:N}), (\theta_1, x^{1}_{1:N}),\dots,(\theta_N, x^{N}_{1:N})$ are i.i.d., it follows that $\xi, \xi_1, \dots, \xi_N$ are i.i.d. random variables in $\mathrm{HS}$. Further, as $\E_{x^\theta_{1:N} \sim \Pb_\theta} \varepsilon(\theta; x^\theta_{1:N}) =0$, it holds that
    \begin{equation*}
        \E_{\theta \sim \Qb} \E_{x^\theta_{1:N} \sim \Pb_\theta} \xi = \E_{\theta \sim \Qb} \left[(T+\lambda)^{-1/2} w(\theta) \varphi_\theta \langle \varphi_\theta, \cdot \rangle_{\calH_\Theta} (T+\lambda)^{-1/2}\right] =(T+\lambda)^{-1/2} T (T+\lambda)^{-1/2},
    \end{equation*}
    where the last equality, $\E_{\theta \sim \Qb} w(\theta) \varphi_\theta \langle \varphi_\theta, \cdot \rangle_{\calH_\Theta} = T$, holds since  $\E_{\theta \sim \Qb} \left[ w(\theta) \varphi_\theta \langle \varphi_\theta, \cdot \rangle_{\calH_\Theta} \right]g(\theta') = \E_{\theta \sim \Qb}  w(\theta) k(\theta, \theta') g(\theta)$ for any $g \in \calH_\Theta$. Therefore 
    \begin{align*}
        S_1 = \left\| \frac{1}{M} \sum_{i=1}^M \xi_i - \E_{\theta \sim \Qb} \E_{x^\theta_{1:N} \sim \Pb_\theta} \xi \right\|_{\mathrm{HS}}.
    \end{align*} 
    Then, by the Bernstein inequality for Hilbert space-valued random variables~\citep[Proposition 2]{caponnetto2007optimal}, the claimed bound on $S_1$ holds if there exist $L>0, \sigma>0$ such that
    \begin{equation*}
        \E_{\theta \sim \Qb} \E_{x^\theta_{1:N} \sim \Pb_\theta} \left[\| \xi - \E_{\theta \sim \Qb} \E_{x^\theta_{1:N} \sim \Pb_\theta} \xi  \|^m_{\mathrm{HS}}\right] \leq \frac{1}{2} m! \sigma^2 L^{m-2}
    \end{equation*}
    holds for all integer $m \geq 2$. We will show the condition holds.
    For convenience, denote $\E_\xi f(\xi) \coloneqq \E_{\theta \sim \Qb} \E_{x^\theta_{1:N} \sim \Pb_\theta} f(\xi)$. 
    First, suppose $\xi'$ is an independent copy of $\xi$. Identically to the proof of~\citet[Lemma 18]{gogolashvili2023importance}, it holds that
    \begin{equation*}
        \E_\xi\left[\| \xi - \E_\xi \xi  \|^m_{\mathrm{HS}}\right] \stackrel{(A)}{\leq} \E_\xi \E_{\xi'} \left[\| \xi - \xi'  \|^m_{\mathrm{HS}}\right] \stackrel{(B)}{\leq} 2^{m-1} \E_\xi \E_{\xi'}\left[\| \xi \|^m_{\mathrm{HS}} +  \|\xi'  \|^m_{\mathrm{HS}}\right] = 2^m \E_\xi \| \xi \|^m_{\mathrm{HS}}
    \end{equation*}
    where $(A)$ holds by Jensen inequality, and $(B)$ uses the fact that $|a+b|^m \leq 2^{m-1}(|a|^m + |b|^m)$. Next, observe that
    \begin{align*}
        \E_\xi \| \xi \|^m_{\mathrm{HS}} 
        &= 
        \E_{\theta \sim \Qb} \E_{x^\theta_{1:N} \sim \Pb_\theta} \|(T+\lambda)^{-1/2} (w(\theta)+\varepsilon(\theta; x^\theta_{1:N})) \varphi_\theta \langle \varphi_\theta, \cdot \rangle_{\calH_\Theta} (T+\lambda)^{-1/2} \|^m_{\mathrm{HS}} \\
        &\stackrel{(A)}{=} 
        \E_{\theta \sim \Qb} \left[ \E_{x^\theta_{1:N} \sim \Pb_\theta} \left[ (w(\theta)+\varepsilon(\theta; x^\theta_{1:N}))^m \right]  \|(T+\lambda)^{-1/2}  \varphi_\theta \langle \varphi_\theta, \cdot \rangle_{\calH_\Theta} (T+\lambda)^{-1/2} \|^m_{\mathrm{HS}}  \right] \\
        & \stackrel{(B))}{\leq}
        \E_{\theta \sim \Qb} \left[  \|(T+\lambda)^{-1/2}  \varphi_\theta \langle \varphi_\theta, \cdot \rangle_{\calH_\Theta} (T+\lambda)^{-1/2} \|^m_{\mathrm{HS}}  \right] \\
        & \stackrel{(C)}{\leq} 
        \kappa^m \lambda^{-m} \\
        &= 
        \frac{1}{2}m! \sigma^2 L^{m-2}
    \end{align*}
    where $L=\sigma=\kappa \lambda^{-1}$, $(A)$ holds by linearity of norms as $w(\theta)+\varepsilon(\theta; x^\theta_{1:N}) \in \R$, $(B)$ holds since $\sigma^2_\mathrm{BQ}(\theta; x^\theta_{1:N}) \geq 0$, so
    \begin{equation*}
        \left(w(\theta) + \varepsilon(\theta; x^\theta_{1:N})\right)^m = \left(\frac{\lambda_\Theta}{\lambda_\Theta + \sigma^2_\mathrm{BQ}(\theta; x^\theta_{1:N})}\right)^m \leq 1.
    \end{equation*}
    To show $(C)$ holds, take $\{e_j\}_{j=1}^\infty$ to be some orthonormal basis of $\calH_\Theta$. Then,
    \begin{align*}
        \|(T+\lambda)^{-1/2}  \varphi_\theta \langle \varphi_\theta, \cdot \rangle_{\calH_\Theta} (T+\lambda)^{-1/2} \|^2_{\mathrm{HS}} 
        &= 
        \sum_{j=1}^\infty \|(T+\lambda)^{-1/2}  \varphi_\theta \langle \varphi_\theta, \cdot \rangle_{\calH_\Theta} (T+\lambda)^{-1/2} e_j \|^2_{\calH_\Theta} \\
        &= 
        \sum_{j=1}^\infty \|(T+\lambda)^{-1/2}  \varphi_\theta \langle \varphi_\theta, (T+\lambda)^{-1/2} e_j \rangle_{\calH_\Theta}  \|^2_{\calH_\Theta} \\
        &\leq 
        \|(T+\lambda)^{-1/2}  \varphi_\theta  \|^2_{\calH_\Theta} \sum_{j=1}^\infty \langle \varphi_\theta, (T+\lambda)^{-1/2} e_j \rangle^2_{\calH_\Theta}  \\
        &= 
        \|(T+\lambda)^{-1/2}  \varphi_\theta  \|^2_{\calH_\Theta} \sum_{j=1}^\infty \langle (T+\lambda)^{-1/2} \varphi_\theta,  e_j \rangle^2_{\calH_\Theta}  \\
        &\stackrel{(A)}{\leq}
        \|(T+\lambda)^{-1/2}  \varphi_\theta  \|^2_{\calH_\Theta} \|(T+\lambda)^{-1/2}  \varphi_\theta  \|^2_{\calH_\Theta}  \\
        &= 
        \langle (T+\lambda)^{-1}  \varphi_\theta  , \varphi_\theta  \rangle^2_{\calH_\Theta}  \\
        &\leq \kappa^2 \lambda^{-2},
    \end{align*}
    where $(A)$ holds by Bessel's inequality. Then by the Bernstein inequality in~\citet[Proposition 2]{caponnetto2007optimal}, it holds that
    \begin{equation*}
        S_1 \leq \frac{2\kappa}{\lambda \sqrt M} \left(\frac{1}{\sqrt M} + 1\right) \log(4/\delta) \leq \frac{4\kappa}{\lambda \sqrt M} \log(4/\delta),
    \end{equation*}
    with probability at least $1 - \delta/2$. Finally as $\lambda \sqrt{M} > (16/3) \kappa \log(4/\delta)$, $S_1  < 3/4$.
\end{proof}

Next, we bound another relevant term that also quantifies the ``gap'' between $T$ and $\hat T$. Unlike $S_1$, we will not require it to be upper bounded by $1$---as it will only appear in~\Cref{res:convergence_of_iwkrr} as a bounding term to the error.

\begin{lemma}
\label{res:s2}
Suppose Assumptions~\ref{as:app_x_samples},~\ref{as:app_kernel_theta_bounded} hold, and the operators $T, \hat{T}$ be as defined in \Cref{sec:notation}. Then, with probability greater than $1 - \delta/2$,
    \begin{equation*}
        S_2 \coloneqq \|(T+\lambda)^{-1/2} (T-\hat{T})\|_{\mathrm{HS}} \leq \frac{4\kappa}{ \sqrt {\lambda M}} \log(4/\delta).
    \end{equation*}
Additionally, if $\lambda \sqrt{M} > (4/C_6) \kappa \log(4/\delta)$, it holds that $S_2<C_6 \sqrt \lambda$.
\end{lemma}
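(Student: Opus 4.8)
The plan is to mirror the proof of~\Cref{res:s1} almost verbatim, via the Bernstein inequality for Hilbert-space-valued random variables~\citep[Proposition 2]{caponnetto2007optimal}, the only structural change being that $T-\hat T$ is now ``sandwiched'' on one side by $(T+\lambda)^{-1/2}$ rather than both. Writing $\varphi_\theta(\cdot) \coloneqq k_\Theta(\theta, \cdot)$, I would first introduce the i.i.d. $\mathrm{HS}$-valued random variables
\begin{equation*}
    \zeta_i = (T+\lambda)^{-1/2} \left(w(\theta_i) + \varepsilon(\theta_i; x^i_{1:N})\right) \varphi_{\theta_i} \langle \varphi_{\theta_i}, \cdot \rangle_{\calH_\Theta},
\end{equation*}
whose independence follows from Assumption~\ref{as:app_x_samples} (the pairs $(\theta_i, x^i_{1:N})$ being i.i.d.). Exactly as in~\Cref{res:s1}, the conditional zero-mean property $\E_{x^\theta_{1:N} \sim \Pb_\theta}\varepsilon(\theta; x^\theta_{1:N}) = 0$ together with $\E_{\theta \sim \Qb}[w(\theta)\varphi_\theta\langle\varphi_\theta, \cdot\rangle_{\calH_\Theta}] = T$ gives $\E\zeta_i = (T+\lambda)^{-1/2} T$, so that $S_2 = \|\tfrac{1}{M}\sum_{i=1}^M \zeta_i - \E\zeta_1\|_{\mathrm{HS}}$ is precisely a centred empirical average to which Bernstein applies.

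Next I would verify the moment condition $\E\|\zeta_1 - \E\zeta_1\|^m_{\mathrm{HS}} \leq \tfrac{1}{2}m!\,\sigma^2 L^{m-2}$ for all integer $m \geq 2$, reusing the symmetrisation step $\E\|\zeta_1 - \E\zeta_1\|^m_{\mathrm{HS}} \leq 2^m\,\E\|\zeta_1\|^m_{\mathrm{HS}}$. The weight factor is bounded by $1$ since $w(\theta) + \varepsilon(\theta; x^\theta_{1:N}) = \lambda_\Theta/(\lambda_\Theta + \sigma^2_\mathrm{BQ}(\theta; x^\theta_{1:N})) \in [0,1]$, so the whole task reduces to bounding the $\mathrm{HS}$ norm of the one-sided rank-one factor. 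Here is the one genuinely different computation: for an orthonormal basis $\{e_j\}$ of $\calH_\Theta$,
\begin{equation*}
    \left\| (T+\lambda)^{-1/2}\varphi_\theta\langle\varphi_\theta, \cdot\rangle_{\calH_\Theta} \right\|^2_{\mathrm{HS}} = \left\| (T+\lambda)^{-1/2}\varphi_\theta \right\|^2_{\calH_\Theta} \sum_{j} \langle \varphi_\theta, e_j\rangle^2_{\calH_\Theta} = \left\| (T+\lambda)^{-1/2}\varphi_\theta \right\|^2_{\calH_\Theta} \|\varphi_\theta\|^2_{\calH_\Theta},
\end{equation*}
using Parseval for the last equality. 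Since $(T+\lambda)^{-1} \preceq \lambda^{-1}\Id_{\calH_\Theta}$ and $\|\varphi_\theta\|^2_{\calH_\Theta} = k_\Theta(\theta,\theta) \leq \kappa$ by Assumption~\ref{as:app_kernel_theta_bounded}, this is at most $\lambda^{-1}\kappa \cdot \kappa = \kappa^2\lambda^{-1}$, giving the almost-sure bound $\|\zeta_1\|_{\mathrm{HS}} \leq \kappa\lambda^{-1/2}$ and hence $\E\|\zeta_1\|^m_{\mathrm{HS}} \leq \kappa^m\lambda^{-m/2}$. This verifies the Bernstein condition with $L = \sigma = \kappa\lambda^{-1/2}$, the single resolvent factor being exactly what turns the $\lambda^{-1}$ of~\Cref{res:s1} into $\lambda^{-1/2}$.

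Finally I would invoke~\citet[Proposition 2]{caponnetto2007optimal} with these constants to obtain, with probability at least $1-\delta/2$,
\begin{equation*}
    S_2 \leq \frac{2\kappa}{\sqrt{\lambda M}}\left(\frac{1}{\sqrt M} + 1\right)\log(4/\delta) \leq \frac{4\kappa}{\sqrt{\lambda M}}\log(4/\delta),
\end{equation*}
which is the claimed bound. For the additional statement, substituting the hypothesis $\lambda\sqrt{M} > (4/C_6)\kappa\log(4/\delta)$ into $S_2 \leq 4\kappa\log(4/\delta)/(\sqrt\lambda\sqrt M)$ yields $4\kappa\log(4/\delta)/(\lambda\sqrt M) < C_6$, i.e. $S_2 < C_6\sqrt\lambda$. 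I do not anticipate a real obstacle here, as the argument is a strict specialisation of~\Cref{res:s1}; the only point requiring care is the one-sided $\mathrm{HS}$-norm estimate above, which must produce the $\kappa\lambda^{-1/2}$ scaling (not $\kappa\lambda^{-1}$) so that the final rate carries the $\sqrt\lambda$ rather than a $\lambda$, and keeping the constants consistent with the threshold condition on $\lambda\sqrt M$.
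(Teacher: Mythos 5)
Your proposal is correct and is exactly what the paper intends: the paper's own proof of this lemma is a one-line remark that it is identical to the proof of \Cref{res:s1}, and you have simply carried out that adaptation, with the one-sided resolvent factor correctly yielding the $\kappa\lambda^{-1/2}$ Bernstein constants (via the Parseval computation) in place of the $\kappa\lambda^{-1}$ of \Cref{res:s1}, and hence the $\sqrt{\lambda}$ in the final bound.
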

\begin{proof}
    The proof is identical to that of~\Cref{res:s1}.
\end{proof}

The last auxiliary result we need is a simple bound on the following operator norm.
\begin{lemma}
\label{res:t_t_plus_lambda_inv_bounded}
Let $T: \calH_\Theta \to \calH_\Theta$ be a positive operator. Then,
    \begin{equation*}
        \| T  (T+\lambda)^{-1} \|_\mathrm{op} \leq 1.
    \end{equation*}
\end{lemma}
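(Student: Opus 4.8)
The plan is to avoid invoking the full spectral theorem and instead derive the bound from a one-line positivity estimate. Since $T$ is positive (hence self-adjoint) and $\lambda>0$, the operator $T+\lambda \Id_{\calH_\Theta}$ is strictly positive and bounded below by $\lambda$, so $(T+\lambda)^{-1}$ exists as a bounded operator on $\calH_\Theta$; this is already implicit in the statement, so I take it for granted.

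The key step is the following reduction. Fix an arbitrary $g \in \calH_\Theta$ and set $h = (T+\lambda)^{-1} g$, so that $g = (T+\lambda) h = Th + \lambda h$ and $T(T+\lambda)^{-1} g = Th$. Expanding the norm and using self-adjointness of $T$,
\begin{equation*}
\| g \|_{\calH_\Theta}^2 = \| Th + \lambda h \|_{\calH_\Theta}^2 = \| Th \|_{\calH_\Theta}^2 + 2 \lambda \langle Th, h \rangle_{\calH_\Theta} + \lambda^2 \| h \|_{\calH_\Theta}^2 .
\end{equation*}
Positivity of $T$ gives $\langle Th, h \rangle_{\calH_\Theta} \geq 0$, and clearly $\lambda^2 \| h \|_{\calH_\Theta}^2 \geq 0$, so the right-hand side is at least $\| Th \|_{\calH_\Theta}^2$. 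Hence $\| T(T+\lambda)^{-1} g \|_{\calH_\Theta} = \| Th \|_{\calH_\Theta} \leq \| g \|_{\calH_\Theta}$ for every $g \in \calH_\Theta$, and taking the supremum over $\| g \|_{\calH_\Theta} \leq 1$ yields $\| T (T+\lambda)^{-1} \|_\mathrm{op} \leq 1$.

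There is essentially no hard part here; the only point to be careful about is that the cross term $2\lambda \langle Th, h\rangle_{\calH_\Theta}$ is nonnegative, which is exactly where positivity of $T$ (and self-adjointness, so that the inner product term is real in this real RKHS) enters. An alternative route would be the continuous functional calculus: writing $T(T+\lambda)^{-1} = \phi(T)$ with $\phi(t) = t/(t+\lambda)$, and noting that $\sigma(T) \subseteq [0,\infty)$ together with $0 \leq \phi(t) < 1$ on $[0,\infty)$ gives $\|\phi(T)\|_\mathrm{op} = \sup_{t \in \sigma(T)} |\phi(t)| \leq 1$. I would favour the elementary computation above, since it needs no spectral machinery and makes transparent which hypothesis on $T$ is doing the work.
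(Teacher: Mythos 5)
Your proof is correct and follows essentially the same route as the paper's: both reduce the claim to the inequality $\|Th\|_{\calH_\Theta} \leq \|(T+\lambda)h\|_{\calH_\Theta}$ and then substitute $h=(T+\lambda)^{-1}g$. The only difference is that the paper asserts this inequality directly from positivity of $T$, whereas you prove it by expanding $\|Th+\lambda h\|^2_{\calH_\Theta}$ and discarding the nonnegative cross and quadratic terms, which makes explicit exactly where positivity is used.
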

\begin{proof}
    Since $T$ is positive, for any $f \in \calH_\Theta$ it holds that $\|Tf\|_{\calH_\Theta} \leq \|(T+\lambda) f\|_{\calH_\Theta}$. Therefore, by taking $f=(T+\lambda)^{-1} g$, we get that
    \begin{align*}
        \|T (T+\lambda)^{-1} \|_\mathrm{op} = \sup_{\substack{g \in \calH_\Theta \\ \|g\|_{\calH_\Theta}=1}} \|T (T+\lambda)^{-1} g \|_{\calH_\Theta} \leq \sup_{\substack{g \in \calH \\ \|g\|_{\calH_\Theta}=1}} \|(T+\lambda)  (T+\lambda)^{-1} g\|_{\calH_\Theta}  = 1.
    \end{align*}
\end{proof}

\subsubsection{Convergence of the noisy IW-KRR estimator}
\label{sec:convergence_of_niwkrr}

With the auxiliary results in place, we now extend~\citet[Theorem 4]{gogolashvili2023importance} to the case of noisy weights. We start by establishing convergence in $\calL^2(\Theta, \Qb_w)$, where $\Qb_w$ is the measure defined as $\Qb_w(A) = \int_A w(\theta)\Qb(\mathrm{d} \theta)$ that must be finite and positive. By~\cite[ Proposition 232D]{fremlin2000measure}, for $\Qb_w(A)$ to be a finite positive measure, it is sufficient for $w(\theta)$ to be continuous and bounded. By their definition in~\Cref{eq:weights}, 
\begin{equation*}
    w(\theta) = \E_{y^\theta_{1:N} \sim \Pb_\theta}\frac{\lambda_\Theta}{\lambda_\Theta + \sigma^2_\mathrm{BQ}(\theta; y^\theta_{1:N})}
\end{equation*}
the weights are bounded by $1$, and are continuous in $\theta$ if $p_\theta$ is continuous in $\theta$ (as the dependance of $\sigma^2_\mathrm{BQ}(\theta; y^\theta_{1:N})$ on $\theta$ for a fixed $y^\theta_{1:N}$ is, again, only through $p_\theta$ appearing under integrals and in polynomials). The continuity of $p_\theta$ holds as, by~\ref{as:app_true_I_smoothness},~\ref{as:app_kernel_theta}, $p_\theta$ lies in a Sobolev space of smoothness over $p/2$, and therefore by Sobolev embedding theorem~\citep[Theorem 4.12]{adams2003sobolev} $p_\theta$ is continuous in $\theta$.

\begin{theorem}
\label{res:convergence_of_iwkrr}
Suppose Assumptions~\ref{as:app_true_I_smoothness},~\ref{as:app_domains_theta},~\ref{as:app_theta_samples},~\ref{as:app_x_samples}, and~\ref{as:app_kernel_theta_bounded} hold, and $\lambda \sqrt{M} > (4/C_6) \kappa \log(4/\delta)$ for some $C_6\leq1$. Then,
\begin{equation*}
    \| I_\mathrm{CBQ} - I \|_{\calL^2(\Theta, \Qb_w)} \leq (1-C_6)^{-1} \left( C_6 \sqrt \lambda + 1\right) \|I_\mathrm{BQ}-I \|_{\calH_\Theta} + \left(\frac{8 (1-C_6)^{-1} \kappa}{ \sqrt {\lambda M}} \log(4/\delta)  + \sqrt \lambda \right) \| I\|_{\calH_\Theta}.
\end{equation*}
\end{theorem}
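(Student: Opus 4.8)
The plan is to recast $I_\mathrm{CBQ}$ as a noisy importance-weighted kernel ridge regression estimator in operator form and then bound the two resulting error components using the auxiliary operator inequalities of \Cref{res:s1,res:s2,res:t_t_plus_lambda_inv_bounded}. First I introduce the sampling operator $\hat S : \calH_\Theta \to \R^M$, $\hat S g = g(\theta_{1:M})$, with adjoint $\hat S^\ast v = \frac{1}{M}\sum_{i=1}^M v_i k_\Theta(\theta_i, \cdot)$, and the weight matrix $W = \diag[w(\theta_{1:M}) + \varepsilon(\theta_{1:M}; x^{1:M}_{1:N})]$, so that $\hat T = \hat S^\ast W \hat S$. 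Starting from the normal equations $I_\mathrm{CBQ} = (\hat T + \lambda)^{-1}\hat S^\ast W I_\mathrm{BQ}(\theta_{1:M})$ and applying the push-through identity $(\hat S^\ast W \hat S + \lambda)^{-1}\hat S^\ast W = \hat S^\ast(W\hat S \hat S^\ast + \lambda)^{-1}W$, one checks (using $\lambda = \lambda_\Theta M^{-1}$ and $w+\varepsilon = \lambda_\Theta/(\lambda_\Theta + \sigma^2_\mathrm{BQ})$ from \eqref{eq:weights}) that this coincides with the representation \eqref{eq:cbq_noisy_weights} and that $I_\mathrm{CBQ} = (\hat T + \lambda)^{-1}\hat T\, I_\mathrm{BQ}$. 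Since $I \in \calH_\Theta$ by \ref{as:app_true_I_smoothness} and \ref{as:app_kernel_theta} (Sobolev embedding), writing $I_\mathrm{BQ} = I + h$ with $h := I_\mathrm{BQ} - I$ and using $(\hat T + \lambda)^{-1}\hat T - \Id = -\lambda(\hat T+\lambda)^{-1}$, the error splits as
\begin{equation*}
    I_\mathrm{CBQ} - I = -\lambda (\hat T + \lambda)^{-1} I + (\hat T + \lambda)^{-1}\hat T\, h,
\end{equation*}
a regularisation-bias term and a term driven by the Stage-1 error. Because $\| g \|_{\calL^2(\Theta, \Qb_w)}^2 = \langle g, T g\rangle_{\calH_\Theta} = \| T^{1/2} g\|_{\calH_\Theta}^2$ for $g \in \calH_\Theta$, it suffices to bound $\| T^{1/2}(\cdot)\|_{\calH_\Theta}$ of each term.

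The workhorse of the argument is to sandwich every occurrence of $(\hat T + \lambda)^{-1}$ between factors of $(T+\lambda)^{\pm 1/2}$. The central identity is
\begin{equation*}
    (T+\lambda)^{1/2}(\hat T + \lambda)^{-1}(T+\lambda)^{1/2} = \left(\Id - B\right)^{-1}, \qquad B := (T+\lambda)^{-1/2}(T - \hat T)(T+\lambda)^{-1/2},
\end{equation*}
whose operator norm I bound by a Neumann series: since $\| B\|_\mathrm{op} \le \| B\|_\mathrm{HS} = S_1 < C_6 \le 1$ by \Cref{res:s1}, we get $\|(\Id - B)^{-1}\|_\mathrm{op} \le (1 - C_6)^{-1}$. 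This is exactly where $S_1 < 1$ (hence the hypothesis $\lambda\sqrt M > (4/C_6)\kappa\log(4/\delta)$) is needed. Alongside this I repeatedly use the elementary spectral bounds $\| T^{1/2}(T+\lambda)^{-1/2}\|_\mathrm{op} \le 1$, $\|(T+\lambda)^{-1}\|_\mathrm{op} \le \lambda^{-1}$, and $\|(T+\lambda)^{-1/2}(T - \hat T)\|_\mathrm{op} \le S_2 < C_6\sqrt\lambda$ from \Cref{res:s2}.

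For the bias term I apply the resolvent identity $(\hat T+\lambda)^{-1} = (T+\lambda)^{-1} + (\hat T+\lambda)^{-1}(T - \hat T)(T+\lambda)^{-1}$, which separates the irreducible regularisation bias $\lambda\| T^{1/2}(T+\lambda)^{-1}I\|_{\calH_\Theta} \le \sqrt\lambda\,\| I\|_{\calH_\Theta}$ from a sampling-fluctuation piece that, after inserting the $(T+\lambda)^{\pm1/2}$ factors and using the Neumann bound together with $S_2$, is controlled by $(1 - C_6)^{-1} S_2 \,\|I\|_{\calH_\Theta} \lesssim (1-C_6)^{-1}\kappa (\lambda M)^{-1/2}\log(4/\delta)\| I\|_{\calH_\Theta}$. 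For the noise term I write $\hat T = T - (T - \hat T)$: the $T$-piece yields $(1 - C_6)^{-1}\| h\|_{\calH_\Theta}$ (the ``$+1$'' contribution) and the $(T - \hat T)$-piece yields $(1 - C_6)^{-1} S_2 \| h\|_{\calH_\Theta} \le (1 - C_6)^{-1} C_6 \sqrt\lambda\, \| h\|_{\calH_\Theta}$, together giving the coefficient $(1-C_6)^{-1}(C_6\sqrt\lambda + 1)$ on $\| I_\mathrm{BQ} - I\|_{\calH_\Theta}$. Finally I take a union bound over the two probability-$\delta/2$ events of \Cref{res:s1} and \Cref{res:s2} and collect constants.

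The main obstacle is that, unlike in the noise-free analysis of \citet[Theorem 4]{gogolashvili2023importance}, the regression weights carry a zero-mean but nonzero fluctuation $\varepsilon(\theta; x^\theta_{1:N})$ that is \emph{correlated} with the regression targets $I_\mathrm{BQ}(\theta; x^\theta_{1:N})$ through the shared samples $x^\theta_{1:N}$. The device that circumvents this is the decomposition above: rather than treating $\hat S^\ast W(I_\mathrm{BQ}(\theta_{1:M}) - I(\theta_{1:M}))$ as a concentrating zero-mean noise (which fails, as the Stage-1 error is not mean-zero and is coupled to the weights), I rewrite it as the deterministic operator expression $(\hat T+\lambda)^{-1}\hat T h$ and pass all the randomness into the operator concentration $\|T - \hat T\|$, controlled by \Cref{res:s1} and \Cref{res:s2} — whose proofs use only $\E_{x^\theta_{1:N}}\varepsilon = 0$ and $w + \varepsilon \in [0,1]$ — while the magnitude $\| h\|_{\calH_\Theta}$ is supplied separately by the Stage-1 bound \Cref{res:bound_on_bq_error}. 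Checking that this decoupling loses nothing essential, and that the weighted operator $T$ genuinely induces the target norm $\calL^2(\Theta,\Qb_w)$ with $\Qb_w$ a finite positive measure (guaranteed by continuity and boundedness of $w$), are the delicate points.
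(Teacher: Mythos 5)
Your proposal is correct and reaches the stated bound with the same machinery the paper uses: the operator representation $I_\mathrm{CBQ}=(\hat T+\lambda)^{-1}\hat T[I_\mathrm{BQ}]$, the concentration bounds of \Cref{res:s1,res:s2}, the spectral bound of \Cref{res:t_t_plus_lambda_inv_bounded}, a Neumann-series inversion of $\Id - (T+\lambda)^{-1/2}(T-\hat T)(T+\lambda)^{-1/2}$, and a union bound over the two probability-$\delta/2$ events. The one genuine difference is where the error is split. The paper first passes through the population-regularised element $I_\lambda=(T+\lambda)^{-1}T[I]$ via the triangle inequality and then invokes \citet[Lemma 17]{gogolashvili2023importance} as a black box, producing the terms $U_0 U_1(U_2+U_3+U_4)$; you instead split at the empirical level, $I_\mathrm{CBQ}-I=-\lambda(\hat T+\lambda)^{-1}I+(\hat T+\lambda)^{-1}\hat T\,h$, and re-derive the content of that lemma inline through the sandwich identity $(T+\lambda)^{1/2}(\hat T+\lambda)^{-1}(T+\lambda)^{1/2}=(\Id-B)^{-1}$ together with the resolvent identity for $(\hat T+\lambda)^{-1}-(T+\lambda)^{-1}$. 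Your route is self-contained and marginally tighter: it produces a single $S_2\|I\|_{\calH_\Theta}$ fluctuation term (constant $4\kappa$) where the paper's $U_3+U_4$ give two (constant $8\kappa$), so your bound implies the stated one. The paper's route has the advantage of mapping transparently onto the existing noise-free IW-KRR analysis, which makes clear that the only new ingredient needed for noisy, target-correlated weights is that \Cref{res:s1,res:s2} require nothing beyond $\E_{x^\theta_{1:N}}\varepsilon=0$ and $w+\varepsilon\in[0,1]$ --- a point you also identify correctly. One caveat you share with the paper: the ``$+1$'' coefficient on $\|I_\mathrm{BQ}-I\|_{\calH_\Theta}$ rests on bounding $\|(T+\lambda)^{-1/2}T\|_\mathrm{op}$ by $1$ (the paper cites \Cref{res:t_t_plus_lambda_inv_bounded} for this, you call it an elementary spectral bound), whereas that operator norm is in general only bounded by $\sup_{t\in\sigma(T)}t/\sqrt{t+\lambda}\leq\sqrt{\kappa}$; this affects constants only, and equally in both arguments, but strictly speaking neither derivation of that step is airtight as written.
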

\begin{proof}
    First, note that $I_\mathrm{CBQ}(\theta) = (\hat{T} + \lambda)^{-1}\hat{T}[I_\mathrm{BQ}]$, 
    which can be checked easily by seeing that $(\hat{T} + \lambda)I_\mathrm{CBQ}(\theta) = \hat{T}[I_\mathrm{BQ}]$ for the \emph{weighted} operator $\hat T$ as defined in~\Cref{sec:notation} and $I_\mathrm{CBQ}$ as defined in~\Cref{eq:cbq_noisy_weights}.
    Then, for $I_{\lambda} = (T+\lambda)^{-1} T[I]$, by triangle inequality the error is bounded as
    \begin{align}
    \label{eq:i_cbq_minus_i_triangle}
        \| I_\mathrm{CBQ} - I \|_{\calL^2(\Theta, \Qb_w)} \leq \| I_\mathrm{CBQ} - I_\lambda \|_{\calL^2(\Theta, \Qb_w)} + \| I_\lambda - I \|_{\calL^2(\Theta, \Qb_w)}
    \end{align}
    The second term, $\| I_\lambda - I \|^2_{\calL^2(\Theta, \Qb_w)}$, can be bounded in terms of $\lambda$ as
    \begin{align}
        \| I_\lambda - I \|_{\calL^2(\Theta, \Qb_w)} 
        &= 
        \| \lambda (T+\lambda)^{-1}[I] \|_{\calL^2(\Theta, \Qb_w)} \nonumber \\
        &= 
        \| \lambda T^{1/2} (T+\lambda)^{-1}[I] \|_{\calH_\Theta} \nonumber \\
        &\stackrel{(A)}{\leq} 
        \lambda \|  T (T+\lambda)^{-1} \|^{1/2}_\mathrm{op} \|(T+\lambda)^{-1/2}  \|_\mathrm{op} \| I \|_{\calH_\Theta} \nonumber \\
        &\leq 
        \sqrt{\lambda} \| I \|_{\calH_\Theta} , \label{eq:i_lambda_minus_i_bound}
    \end{align}
    where $(A)$ holds by~\Cref{res:t_t_plus_lambda_inv_bounded} and $T$ being a positive operator.
    Next, the $\calL^2(\Theta, \Qb_w)$ norm between $I_\mathrm{CBQ} - I_{\lambda}$ can be bounded as
    \begin{align*}
        \| I_\mathrm{CBQ} - I_{\lambda} \|_{\calL^2(\Theta, \Qb_w)} = \| T^{1/2} (I_\mathrm{CBQ} - I_{\lambda}) \|_{\calH_\Theta}
        &= 
        \| T^{1/2} ((\hat{T}+\lambda)^{-1} \hat{T}[I_\mathrm{BQ}] - (T+\lambda)^{-1} T[I] ) \|_{\calH_\Theta} \\
        &\stackrel{(A)}{\leq}
        \| T  (T+\lambda)^{-1} \|_\mathrm{op}^{1/2} \| ( \Id - (T+\lambda)^{-1/2} (T - \hat{T}) (T+\lambda)^{-1/2} )^{-1} \|_\mathrm{op} \\
        &\qquad \times \Big(\| (T+\lambda)^{-1/2} (\hat{T}[I_\mathrm{BQ}]-T[I]) \|_{\calH_\Theta} \\
        &\qquad\qquad\qquad+ \| (T+\lambda)^{-1/2} (T-\hat{T}) (T+\lambda)^{-1} T[I] \|_{\calH_\Theta} \Big) \\
        &\stackrel{(B)}{\leq} 
        \| T  (T+\lambda)^{-1} \|_\mathrm{op}^{1/2} \| ( \Id - (T+\lambda)^{-1/2} (T - \hat{T}) (T+\lambda)^{-1/2} )^{-1} \|_\mathrm{op} \\
        &\qquad \times \Big( \| (T+\lambda)^{-1/2} \hat{T}[I_\mathrm{BQ}-I] \|_{\calH_\Theta} \\
        &\qquad\qquad\qquad+ \| (T+\lambda)^{-1/2} (T - \hat T)[I]\|_{\calH_\Theta} \\
        &\qquad\qquad\qquad+ \| (T+\lambda)^{-1/2} (T-\hat{T}) (T+\lambda)^{-1} T[I] \|_{\calH_\Theta} \Big) \\
        &\eqqcolon 
        U_0 \times U_1 \times (U_2 + U_3 + U_4),
    \end{align*}
    where $\|\cdot\|_\mathrm{op}$ denotes the operator norm, $(A)$ holds by~\citet[Lemma 17]{gogolashvili2023importance}, and $(B)$ is an application of triangle inequality,
    \begin{equation*}
        \| (T+\lambda)^{-1/2} (\hat{T}[I_\mathrm{BQ}]-T[I]) \|_{\calH_\Theta} \leq \| (T+\lambda)^{-1/2} \hat{T}[I_\mathrm{BQ}-I] \|_{\calH_\Theta} + \| (T+\lambda)^{-1/2} (T - \hat T)[I] \|_{\calH_\Theta}.
    \end{equation*}
    We will bound the terms $U_0,U_1, U_2, U_3, U_4$, and the result will follow. First, we have that $U_0 = \| T  (T+\lambda)^{-1} \|_\mathrm{op}^{1/2} \leq 1$ by \Cref{res:t_t_plus_lambda_inv_bounded}. To upper bound $U_1=\| ( \Id - (T+\lambda)^{-1/2} (T - \hat{T}) (T+\lambda)^{-1/2} )^{-1} \|_\mathrm{op}$ we may expand it as Neumann series, provided $\| (T+\lambda)^{-1/2} (T - \hat{T}) (T+\lambda)^{-1/2} )^{-1} \|_\mathrm{op} < 1$. This condition holds as
    \begin{equation*}
        \| (T+\lambda)^{-1/2} (T - \hat{T}) (T+\lambda)^{-1/2} \|_\mathrm{op} \stackrel{(A)}{\leq} \| (T+\lambda)^{-1/2} (T - \hat{T}) (T+\lambda)^{-1/2} \|_{\mathrm{HS}} \stackrel{(B)}{<} C_6 \leq 1,
    \end{equation*}
    where $(A)$ holds as the operator norm is bounded by the Hilbert-Schmidt norm, and $(B)$ by~\Cref{res:s1}. 
    Therefore,
    \begin{align*}
        \left\| ( \Id - (T+\lambda)^{-1/2} (T - \hat{T}) (T+\lambda)^{-1/2} )^{-1} \right\|_\mathrm{op} 
        &\stackrel{(A)}{=} \left\| \sum_{i=0}^\infty \left((T+\lambda)^{-1/2} (T - \hat{T}) (T+\lambda)^{-1/2} \right)^i \right\|_\mathrm{op} \\
        &\stackrel{(B)}{\leq} \sum_{i=0}^\infty \left\| (T+\lambda)^{-1/2} (T - \hat{T}) (T+\lambda)^{-1/2}  \right\|^i_\mathrm{op}\\
        &\stackrel{(C)}{\leq} \sum_{i=0}^\infty \left\| (T+\lambda)^{-1/2} (T - \hat{T}) (T+\lambda)^{-1/2}  \right\|^i_{\mathrm{HS}}\\
        &\stackrel{(D)}{=} \left(1 - \left\| (T+\lambda)^{-1/2} (T - \hat{T}) (T+\lambda)^{-1/2}  \right\|_{\mathrm{HS}}\right)^{-1}\\
        &\stackrel{(E)}{\leq} (1 - C_6)^{-1},
    \end{align*}
    where $(A)$ holds by the Neumann series expansion, $(B)$ by the triangle inequality, and the fact that operator norm is sub-multiplicative for bounded operators, $(C)$ since the operator norm is bounded by the Hilbert-Schmidt norm, $(D)$ by the geometric series, and $(E)$ by~\Cref{res:s1}.

    To bound $U_2 = \| (T+\lambda)^{-1/2} \hat{T}[I_\mathrm{BQ}-I] \|_{\calH_\Theta}$, observe that
    \begin{align*}
        U_2 = \| (T+\lambda)^{-1/2} \hat{T}[I_\mathrm{BQ}-I] \|_{\calH_\Theta} &\leq \| (T+\lambda)^{-1/2} \hat{T} \|_\mathrm{op} \|I_\mathrm{BQ}-I \|_{\calH_\Theta} \\
        &\leq
        \left( \| (T+\lambda)^{-1/2} (T - \hat T) \|_\mathrm{op} + \| (T+\lambda)^{-1/2} T \|_\mathrm{op} \right) \|I_\mathrm{BQ}-I \|_{\calH_\Theta} \\
        &\stackrel{(A)}{\leq}
        \left( S_2 + 1\right) \|I_\mathrm{BQ}-I \|_{\calH_\Theta},
    \end{align*}
    where $(A)$ holds by~\Cref{res:s2,res:t_t_plus_lambda_inv_bounded}.

    Both $U_3$ and $U_4$ are upper bounded by the $S_2$ term in~\Cref{res:s2}, as 
    \begin{align*}
        U_3 &=\| (T+\lambda)^{-1/2} (T - \hat T)[I]\|_{\calH_\Theta} \leq \| (T+\lambda)^{-1/2} (T - \hat T)\|_\mathrm{op} \| I \|_{\calH_\Theta} = S_2 \| I \|_{\calH_\Theta}, \\
        U_4 &=\| (T+\lambda)^{-1/2} (T-\hat{T}) (T+\lambda)^{-1} T[I] \|_{\calH_\Theta} 
        \leq
        \| (T+\lambda)^{-1/2} (T - \hat T)\|_\mathrm{op} \| (T+\lambda)^{-1} T \|_\mathrm{op} \| I \|_{\calH_\Theta} \stackrel{(A)}{\leq} 
        S_2 \| I \|_{\calH_\Theta},
    \end{align*}
    where $(A)$ holds by~\Cref{res:t_t_plus_lambda_inv_bounded}. 
    Putting the upper bounds on $U_0, U_1, U_2, U_3, U_4$ together, we get
    \begin{align*}
        \| I_\mathrm{CBQ} - I_{\lambda} \|_{\calL^2(\Theta, \Qb_w)} \leq U_0 \times U_1 \times (U_2 + U_3 + U_4) \leq 4 \left(( S_2 + 1) \|I_\mathrm{BQ}-I \|_{\calH_\Theta} + 2S_2 \| I\|_{\calH_\Theta} \right).
    \end{align*}
    By applying the union bound, we get that that with probability at least $1-\delta$,
    \begin{align*}
        \| I_\mathrm{CBQ} - I_{\lambda} \|_{\calL^2(\Theta, \Qb_w)} 
        &\leq 
        U_0 \times U_1 \times (U_2 + U_3 + U_4) \\
        &\leq 
        (1-C_6)^{-1} \left(( S_2 + 1) \|I_\mathrm{BQ}-I \|_{\calH_\Theta} + 2S_2 \| I\|_{\calH_\Theta} \right) \\
        &\stackrel{(A)}{\leq}
        (1-C_6)^{-1} \left(\left( C_6 \sqrt \lambda + 1\right) \|I_\mathrm{BQ}-I \|_{\calH_\Theta} + \frac{8\kappa}{ \sqrt {\lambda M}} \log(4/\delta) \| I\|_{\calH_\Theta} \right),
    \end{align*}
    where $(A)$ holds by~\Cref{res:s2}. Inserting this and the bound in~\Cref{eq:i_lambda_minus_i_bound} into~\Cref{eq:i_cbq_minus_i_triangle} gives
    \begin{equation*}
        \| I_\mathrm{CBQ} - I \|_{\calL^2(\Theta, \Qb_w)} \leq (1-C_6)^{-1} \left( C_6 \sqrt \lambda + 1\right) \|I_\mathrm{BQ}-I \|_{\calH_\Theta} + \left(\frac{8 (1-C_6)^{-1} \kappa}{ \sqrt {\lambda M}} \log(4/\delta)  + \sqrt \lambda \right) \| I\|_{\calH_\Theta}.
    \end{equation*}
\end{proof}
Finally, we use the $\calL^2(\Theta, \Qb_w)$ bound in~\Cref{res:convergence_of_iwkrr} to establish a bound in $\calL^2(\Theta, \Qb)$ in terms of the BQ variance.
\begin{cor}
\label{res:unweighted_bound_stage_2}
    Suppose Assumptions~\ref{as:app_true_I_smoothness},~\ref{as:app_domains_theta},~\ref{as:app_theta_samples},~\ref{as:app_x_samples}, and~\ref{as:app_kernel_theta_bounded}, and $\lambda \sqrt{M} > (4/C_6) \kappa \log(4/\delta)$ for some $C_6\leq1$. Then
    \begin{align*}
        \| I_\mathrm{CBQ} - I \|_{\calL^2(\Theta, \Qb)} &\leq \left(1 + \frac{1}{c \sqrt{M}} \sup_{\theta \in \Theta} \E_{y^\theta_{1:N} \sim \Pb_\theta}\sigma^2_\mathrm{BQ}(\theta; y^\theta_{1:N})\right) \\
        &\qquad \times \left( (1-C_6)^{-1} \left( C_6 \sqrt \lambda + 1\right) \|I_\mathrm{BQ}-I \|_{\calH_\Theta} + \left(\frac{8 (1-C_6)^{-1} \kappa}{ \sqrt {\lambda M}} \log(4/\delta)  + \sqrt \lambda \right) \| I\|_{\calH_\Theta}\right)
    \end{align*}
\end{cor}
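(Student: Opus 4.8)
The plan is to transfer the $\calL^2(\Theta, \Qb_w)$ bound of \Cref{res:convergence_of_iwkrr} to an $\calL^2(\Theta, \Qb)$ bound by a change-of-measure argument, exploiting that $\Qb_w(A) = \int_A w(\theta)\,\Qb(\mathrm d\theta)$ differs from $\Qb$ only through the weight $w$. For any $g$ we have $\|g\|^2_{\calL^2(\Theta, \Qb)} = \int_\Theta g(\theta)^2 \Qb(\mathrm d\theta) = \int_\Theta g(\theta)^2 w(\theta)^{-1} w(\theta)\,\Qb(\mathrm d\theta) \leq \big(\sup_\theta w(\theta)^{-1}\big)\,\|g\|^2_{\calL^2(\Theta, \Qb_w)}$, so the entire task reduces to producing a uniform lower bound on the weight $w(\theta)$.

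First I would lower bound $w$. Recall from \Cref{eq:weights} that $w(\theta) = \E_{y^\theta_{1:N}\sim\Pb_\theta}\big[\lambda_\Theta/(\lambda_\Theta + \sigma^2_\mathrm{BQ}(\theta; y^\theta_{1:N}))\big]$. The map $x \mapsto \lambda_\Theta/(\lambda_\Theta + x)$ is convex on $[0,\infty)$ and $\sigma^2_\mathrm{BQ} \geq 0$, so Jensen's inequality gives $w(\theta) \geq \lambda_\Theta/(\lambda_\Theta + \E_{y^\theta_{1:N}}\sigma^2_\mathrm{BQ}(\theta; y^\theta_{1:N}))$. Inverting this bound and substituting $\lambda_\Theta = cM^{1/2}$ from Assumption~\ref{as:app_regulariser_theta} yields $w(\theta)^{-1} \leq 1 + (c\sqrt M)^{-1}\,\E_{y^\theta_{1:N}}\sigma^2_\mathrm{BQ}(\theta; y^\theta_{1:N})$; taking the supremum over $\theta$ bounds $\sup_\theta w(\theta)^{-1}$ by $1 + (c\sqrt M)^{-1}\sup_\theta \E_{y^\theta_{1:N}}\sigma^2_\mathrm{BQ}(\theta; y^\theta_{1:N})$.

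Combining the two steps gives $\|I_\mathrm{CBQ} - I\|_{\calL^2(\Theta, \Qb)} \leq \sqrt{\sup_\theta w(\theta)^{-1}}\,\|I_\mathrm{CBQ} - I\|_{\calL^2(\Theta, \Qb_w)}$, and since $\sqrt{1+x} \leq 1+x$ for $x \geq 0$ the square root on the supremum can be discarded at the cost of mild looseness, producing exactly the leading factor in the statement. Substituting the $\calL^2(\Theta, \Qb_w)$ bound from \Cref{res:convergence_of_iwkrr} for the remaining factor, and noting the probabilistic guarantee of that theorem is inherited verbatim, completes the argument.

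There is no deep obstacle here; the corollary is a routine reweighting layered on top of the substantive \Cref{res:convergence_of_iwkrr}. The one point requiring genuine care is the \emph{direction} of the change of measure: because we pass from the weighted norm to the unweighted one we need a lower bound on $w$ rather than the trivial upper bound $w \leq 1$, which is precisely why convexity and Jensen must enter. It is also worth confirming that $\Qb_w$ is a finite positive measure so that $\|\cdot\|_{\calL^2(\Theta, \Qb_w)}$ is well defined; this follows from $w$ being continuous and bounded, as established in the discussion immediately preceding \Cref{res:convergence_of_iwkrr}.
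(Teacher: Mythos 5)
Your proposal is correct and follows essentially the same route as the paper's proof: a change of measure from $\calL^2(\Theta,\Qb_w)$ to $\calL^2(\Theta,\Qb)$ requiring a lower bound on $w$, obtained via Jensen's inequality applied to the convex map $x \mapsto \lambda_\Theta/(\lambda_\Theta + x)$, followed by substituting $\lambda_\Theta = cM^{1/2}$ and the bound from \Cref{res:convergence_of_iwkrr}. Your treatment is in fact slightly more explicit than the paper's, since you spell out the $\sqrt{1+x}\leq 1+x$ step that absorbs the square root arising from passing between squared norms, which the paper leaves implicit.
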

\begin{proof}
    Observe that for any $g \in \calL^2(\Theta, \Qb)$, it holds that $\|g\|^2_{\calL^2(\Theta, \Qb_w)} \geq \left(\inf_{\theta \in \Theta} w(\theta) \right) \times \|g\|^2_{\calL^2(\Theta, \Qb)}$. Then, since
    \begin{align*}
        w(\theta) = \E_{y^\theta_{1:N} \sim \Pb_\theta}\frac{\lambda_\Theta}{\lambda_\Theta + \sigma^2_\mathrm{BQ}(\theta; y^\theta_{1:N})} \geq \frac{\lambda_\Theta}{\lambda_\Theta + \E_{y^\theta_{1:N} \sim \Pb_\theta} \sigma^2_\mathrm{BQ}(\theta; y^\theta_{1:N})} = \frac{1}{1 + \lambda_\Theta^{-1} \E_{y^\theta_{1:N} \sim \Pb_\theta} \sigma^2_\mathrm{BQ}(\theta; y^\theta_{1:N})},
    \end{align*}
    the bound in~\Cref{res:convergence_of_iwkrr}, the definition of $\lambda$ in~\Cref{eq:weights}, and Assumption~\ref{as:app_regulariser_theta} give the desired statement.
\end{proof}

\subsection{Proof of Theorem~\ref{thm:convergence}}
\label{sec:proof_of_main_theorem}
We are now ready to prove our main convergence result, which is a version of~\Cref{thm:convergence} for $\lambda_\calX \geq 0$. We start by restating it for the convenience of the reader.

\begin{proof}[Restatement of~\Cref{thm:convergence_generalised}]
    Suppose all technical assumptions in~\Cref{sec:technical_assumptions} hold. Then for any $\delta \in (0, 1)$ there is an $N_0>0$ such that for any $N \geq N_0$, with probability at least $1-\delta$ it holds that
    \begin{align*}
        \| I_\mathrm{CBQ} - I \|_{\calL^2(\Theta, \Qb)} &\leq \left(1 + c^{-1} M^{-\frac{1}{2}}\left(\lambda_\calX + C_2 N^{-1 + 2\varepsilon} \left( N^{-\frac{s_\calX}{d}+\frac{1}{2} + \varepsilon} + C_3 \lambda_\calX \right)^2\right) \right) \\
        &\qquad \times \left( C_7(\delta) N^{-\frac{1}{2} + \varepsilon} \left( N^{-\frac{s_\calX}{d} + \frac{1}{2} + \varepsilon} + C_5 \lambda_\calX \right) + C_8(\delta) M^{-\frac{1}{4}} \| I\|_{\calH_\Theta} \right)
    \end{align*}
    for any arbitrarily small $\varepsilon>0$, constants $C_2, C_3, C_5$, $C_7(\delta) = O(1/\delta)$ and $C_8(\delta) = O(\log(1/\delta))$ independent of $N, M, \varepsilon$.
\end{proof}

\begin{proof}[Proof of~\Cref{thm:convergence_generalised}]
    By inserting~\Cref{res:bound_on_bq_error,res:bound_on_bq_var} into~\Cref{res:unweighted_bound_stage_2} and applying the union bound, we get that the result holds with probability at least $1-\delta$ and
    \begin{align*}
        C_7(\delta) &= (1-C_6)^{-1} \left( C_6 c^{\frac{1}{2}} + 1\right) C_4 (2/\delta), \\
        C_8(\delta) &= \left( 8c^{-\frac{1}{2}} (1-C_6)^{-1} \kappa \log(4/\delta)  + c^{\frac{1}{2}} \right).
    \end{align*}
\end{proof}

As discussed in the main text, convergence is fastest when the regulariser $\lambda_\calX$ is set to $0$; $\lambda_\calX>0$ ensures greater stability at the cost of a lower speed of convergence. For clarity we show how~\Cref{thm:convergence} in the main text follows from the more general~\Cref{thm:convergence_generalised} by setting $\lambda_\calX=0$. 
    
\begin{proof}[Proof of~\Cref{thm:convergence}]         
    In~\Cref{thm:convergence_generalised}, take $\lambda_\calX=0$. Then
    \begin{align*}
        \| I_\mathrm{CBQ} - I \|_{\calL^2(\Theta, \Qb)} &\leq \left(1 + c^{-1} M^{-\frac{1}{2}} C_2 N^{-\frac{2s_\calX}{d} + \varepsilon} \right) \times \left( C_7(\delta) N^{-\frac{s_\calX}{d} + \varepsilon} + C_8(\delta) M^{-\frac{1}{4}} \| I\|_{\calH_\Theta} \right).
    \end{align*}
    As $\Qb$ was assumed equivalent to the uniform distribution in Assumption~\ref{as:app_theta_samples}, the error in uniform measure is bounded by the error in $\Qb$. Therefore, the result holds for
    \begin{align*}
        C_0(\delta)&=\left(1 + c^{-1} C_2 \right) C_7(\delta)=O(1/\delta), \\
        C_1(\delta)&=\left(1 + c^{-1} C_2 \right) \|I\|_{\calH_\Theta} C_8(\delta) =O(\log(1/\delta)).
    \end{align*}
\end{proof}

\section{Practical Considerations for Conditional Bayesian Quadrature}\label{appendix:practical_considerations}

We now discuss important practical considerations which can have significant impact on the performance of CBQ. Firstly, in \Cref{appendix:tractable_kernel_means} we discuss how to ensure a closed-form expression for kernel mean embeddings and initial errors of BQ estimators. Then, we discuss the selection of all kernel hyperparameters in \Cref{appendix:hyperparameter_selection}.

\subsection{Tractable Kernel Means}\label{appendix:tractable_kernel_means}

In the main text, we discussed the requirement for both BQ and  CBQ that the kernel mean embedding $\mu$ and its integral (called initial error) are known in closed-form. A list of well-known pair can be found in Table 1 in \citep{fx_quadrature} or the \texttt{ProbNum} package \citep{Wenger2021}. 
However, even when none of these pairs are appropriate for the problem at hand, there are still multiple solutions:

\begin{itemize}
    \item First, for a fixed $k$, when the embedding of $\mathbb{P}$ is intractable but the embedding of some other distribution $\mathbb{Q}$ is known, we can use the `importance sampling trick' which consists of writing the integral as $I=\mathbb{E}_{X \sim \mathbb{P}} [f(X)] = \mathbb{E}_{X \sim \mathbb{Q}} [g(X)]$ where $g(x)=f(x)p(x)/q(x)$ and $p,q$ are the densities of $\mathbb{P},\mathbb{Q}$. This allows us to use BQ on the integral of $g$, which is tractable by construction.

\vspace{2mm}

\item Secondly, again for a fixed $k$ and assuming that we know the quantile function $\Phi^{-1}$ of the distribution $\mathbb{P}$ and that the embedding of the uniform distribution is available, we can use the `inverse transform trick' which consists of writing $I=\mathbb{E}_{X \sim \mathbb{P}} [f(X)] = \mathbb{E}_{U \sim \mathbb{U}} [g(U)]$ where $g(u) = f(\Phi^{-1}(u))$ and $\mathbb{U}$ is a uniform distribution on some hypercube. Once again, BQ can now be applied to the transformed problem.

\vspace{2mm}

\item Finally, for any distribution $\mathbb{P}$ whose density is known up to the normalisation constant (for example most posterior distributions), then specialised kernels with closed-form embeddings can be constructed. This is true of Stein reproducing kernels~\cite{anastasiou2023stein}. Suppose we have a distribution $\mathbb{P}$ with density $p:\calX \rightarrow \R^+$ and a function $f:\calX \rightarrow \R$ with the property that $\lim_{x \to \infty} p(x)f(x) = 0$. The Langevin Stein kernel $k: \calX \times \calX \to \R$ \citep{anastasiou2023stein} is given by:
\begin{align*}
    k_p(x,x') & := \nabla_x \log p(x)^\top k(x, x' ) \nabla_{x'} \log p(x') + \nabla_{x} \log p(x) ^\top \nabla_{x'} k(x, x') 
    \\ & \qquad + \nabla_{x'} \log p(x') ^\top \nabla_x k(x, x') + \nabla_x \cdot \nabla_{x'} k(x, x'),
\end{align*}
where $\nabla_x = (\partial/ \partial x_1, \cdots, \partial/ \partial x_d)^\top$ and $\nabla_x \cdot \nabla_{x'} k(x, x') = \sum_{i=1}^d \frac{\partial k(x, x')}{\partial x_i \partial x_i'}$.

The main advantage of using Stein kernel is that the mean embedding $\mu(x') = \int_{\calX} k_p(x, x')p(x)dx = 0$ by construction. 
However, this means our GP prior on $f$ encodes beliefs that the function has mean zero.
To weaken this, we can add a constant $c \in \R$; i.e $\tilde{k}_p(x, x') = k_p(x,x') + c$, so that the kernel mean embedding becomes $\mu(x') = c$. 
The constant $c$ can then be treated as a kernel hyperparameter and estimated alongside all other parameters. 

\end{itemize}

%%%%%%%%%%%%%%%%%%%%%%%%%%%

\subsection{Model and Hyperparameter Selection}\label{appendix:hyperparameter_selection}

We now discuss our approach for model and hyperparameter selection for CBQ and baseline methods. 

\paragraph{Conditional Bayesian quadrature} The hyperparameter selection for CBQ boils down to the choice of GP interpolation hyperparameters at stage 1 and the choice of GP regression hyperparameters at stage 2. To simplify this choice, we renormalise all our function values before performing GP regression and interpolation. This is done by first subtracting the empirical mean and then dividing by the empirical standard deviation. 
All of our experiments then use prior mean functions $m_\Theta$ and $m_\calX$ which are zero functions, a reasonable choice given the function was renormalised using the empirical mean. This choice is made for simplicity, and we might expect further improvements in accuracy if more information is available.

The choice of covariance functions $k_\calX$ and $k_\Theta$ is made on a case-by-case basis in order to both encode properties we expect the target functions to have, but also to ensure that the corresponding kernel mean is available in closed-form (as per the previous section). Once this is done, we typically still need to make a choice of hyperparameters for both kernel: lengthscales $l_\calX$, $\l_\Theta$ and amplitudes $A_\calX, A_\Theta$. 
We also need to select the regularizer $\lambda_\calX, \lambda_\Theta$. 
$\lambda_\calX$ is fixed to be $0$ as suggested by \Cref{thm:convergence}.
The rest of the hyperparameters are selected through empirical Bayes, which consists of maximising the log-marginal likelihood.
For stage 1, the log-marginal likelihood can be written as~\citep{GPML}:
\begin{align*}
& L(l_\calX, A_\calX) =  -\frac{1}{2} \log \left| k_{\calX}(x_{1:N},x_{1:N}; l_\calX, A_\calX) \right| - \frac{N}{2} \log(2 \pi) \\
& \quad -\frac{1}{2}(f(x_{1:N})-m_{\calX}(x_{1:N}))^\top \left(k_{\calX}(x_{1:N},x_{1:N};l_\calX, A_\calX) + \lambda_\calX \Id_N \right)^{-1} (f(x_{1:N})-m_{\calX}(x_{1:N})),
\end{align*}
where $|\cdot|$ denotes the determinant of the matrix, and we write $k_{\calX}(x_{1:N},x_{1:N}; l_\calX, A_\calX)$ to emphasise the hyperparameters used to compute the Gram matrix.
The optimisation is implemented through a grid search over $\left[1.0, 10.0, 100.0, 1000.0 \right]$ for the amplitude $A_\calX$ and a grid search over $\left[0.1, 0.3, 1.0, 3.0, 10.0 \right]$ for the lengthscale $l_\calX$. 

If $k_\calX$ is a Stein reproducing kernel, we have an extra hyperparameter $c_\calX$. 
In this case, we use stochastic gradient descent on the log-marginal likelihood to find the optimal value for $c_\calX, l_\calX, A_\calX$, which is implemented with \texttt{JAX} autodiff library~\citep{jax2018github}. 
The reason we are using gradient based optimization instead of grid search for Stein kernel is that Stein kernel requires an accurate estimate of $c_\calX$ to work well. 
In order to return accurate results, grid search would require finer grid which is very expensive, while gradient based methods would require good initialization to avoid getting stuck in local minima. Fortunately, since $c_\calX$ indicates the mean of functions in the RKHS, we know that $c_\calX = 0$ is a good initialisation point since we have subtracted the empirical mean when normalising.

Additionally, it is important to note that we could technically use $T$ different kernels $k_\calX^1, \cdots, k_\calX^T$ for each integral in stage 1. However, the hyperparameters of each kernel $k_\calX^t$ would need to be selected using empirical Bayes under the observations $x_{1:N}^t$, which means we would need to repeat the above optimization $T$ times. In practice, when performing initial experiments, we observed that the estimated hyperparameters were very similar. Our strategy is therefore to select the hyperparameters of $k_\calX^1$ and subsequently reuse them across all $T$ integrals in stage 1. This is done for computational reasons, and we expect CBQ to show better performances if hyperparameters are optimised separately.

For the kernel $k_\Theta$, we also select the hyperparameters by maximising the log-marginal likelihood: 
\begin{align*}
   & L(l_\Theta, A_\Theta) =  -\frac{1}{2} \log | k_{\Theta}(\theta_{1:T},\theta_{1:T}; l_\Theta, A_\Theta)| - \frac{T}{2} \log(2 \pi)
     \\
& \quad  -\frac{1}{2} (I_\mathrm{BQ} (\theta_{1:T})- m_{\Theta}(\theta_{1:T}))^\top \left(k_{\Theta}(\theta_{1:T}, \theta_{1:T};l_\Theta, A_\Theta) + \left( \lambda_\Theta + \sigma_{\mathrm{BQ}}^2(\theta_{1:T}) \right) \Id_T \right)^{-1} (I_\mathrm{BQ} (\theta_{1:T})- m_{\Theta}(\theta_{1:T})).
\end{align*}
Similar to above, we also do a grid search over $\left[1.0, 10.0, 100.0, 1000.0 \right]$ for amplitude $A_\Theta$, a grid search over $\left[0.1, 0.3, 1.0, 3.0, 10.0 \right]$ for lengthscale $l_\Theta$ and a grid search over $\left[0.01, 0.1, 1.0 \right]$ for $\lambda_\Theta$, so we select the value that gives the largest log-marginal likelihood. 

\paragraph{Least-squares Monte Carlo} LSMC implements Monte Carlo in the first stage and polynomial regression in the second stage. In the second stage, the hyperparameters include the regularisation coefficient $\lambda_\Theta$ and the order of the polynomial $p \in \{1,2,3,4\}$.
These hyperaparameters are also selected with grid search to give the lowest RMSE on a separate held out validation set.

\paragraph{Kernel least-squares Monte Carlo}
KLSMC implements Monte Carlo in the first stage and kernel ridge regression in the second stage. In the second stage, the hyperparameters are analogous to the hyperparameters in the second stage of CBQ, namely $A_\Theta, l_\Theta, \lambda_\Theta$.
These hyperaparameters are selected with grid search to give the lowest RMSE on a separate held out validation set.

\paragraph{Importance sampling} For IS, there are no hyperparameters to select.

%%%%%%%%%%%%%%%%%%%%%%%%%%%%%%%%%%%%%%%%%

\section{Additional Experiments}\label{appendix:experiments}
We now provide detailed description of all experiments in the main text, as well as further results and ablation studies. All figures reported in the paper are created using the median values obtained from $20$ separate runs with different random seeds. Standard error is shown as shaded area around the median.

\subsection{Synthetic Experiment: Bayesian Sensitivity Analysis for Linear Models}\label{appendix:bayes_sensitivity}

\subsubsection{Experimental Setting}

\begin{figure}[t]
    \centering
    \begin{minipage}{1.0\textwidth}
    \centering
    \includegraphics[width=250pt]{figures/legend.pdf}
    \vspace{-5pt}
    \end{minipage}
    
    \centering
    % First plot
    \begin{subfigure}{0.33\textwidth}
        \centering
        \hspace{-10pt}
        \includegraphics[width=\textwidth]{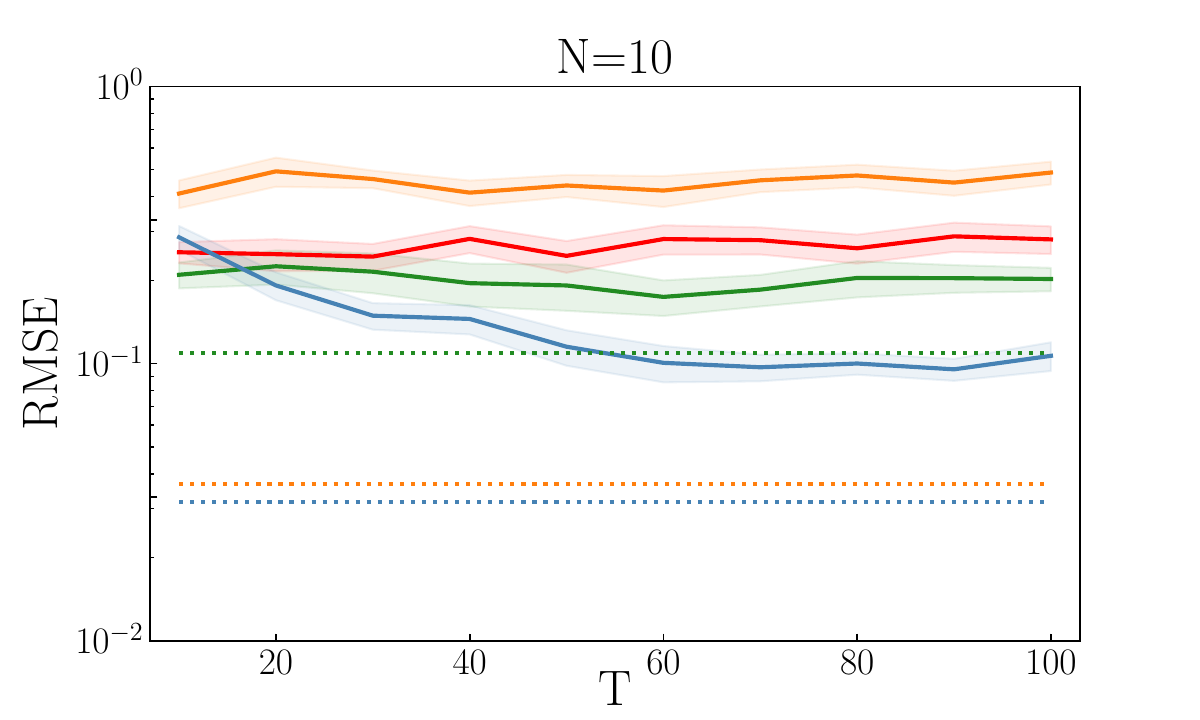}
        \caption{RMSE with fixed $N=10$}
    \end{subfigure}%
    \hfill % Add horizontal space between the subfigures
    % Second plot
    \begin{subfigure}{0.33\textwidth}
        \centering
        \hspace{-10pt}
        \includegraphics[width=\textwidth]{figures/bayes_sensitivity_N_50.pdf}
        \caption{RMSE with fixed $N=50$.}
    \end{subfigure}%
    \hfill % Add horizontal space between the subfigures
    % Third plot
    \begin{subfigure}{0.33\textwidth}
        \centering
        \hspace{-10pt}
        \includegraphics[width=\textwidth]{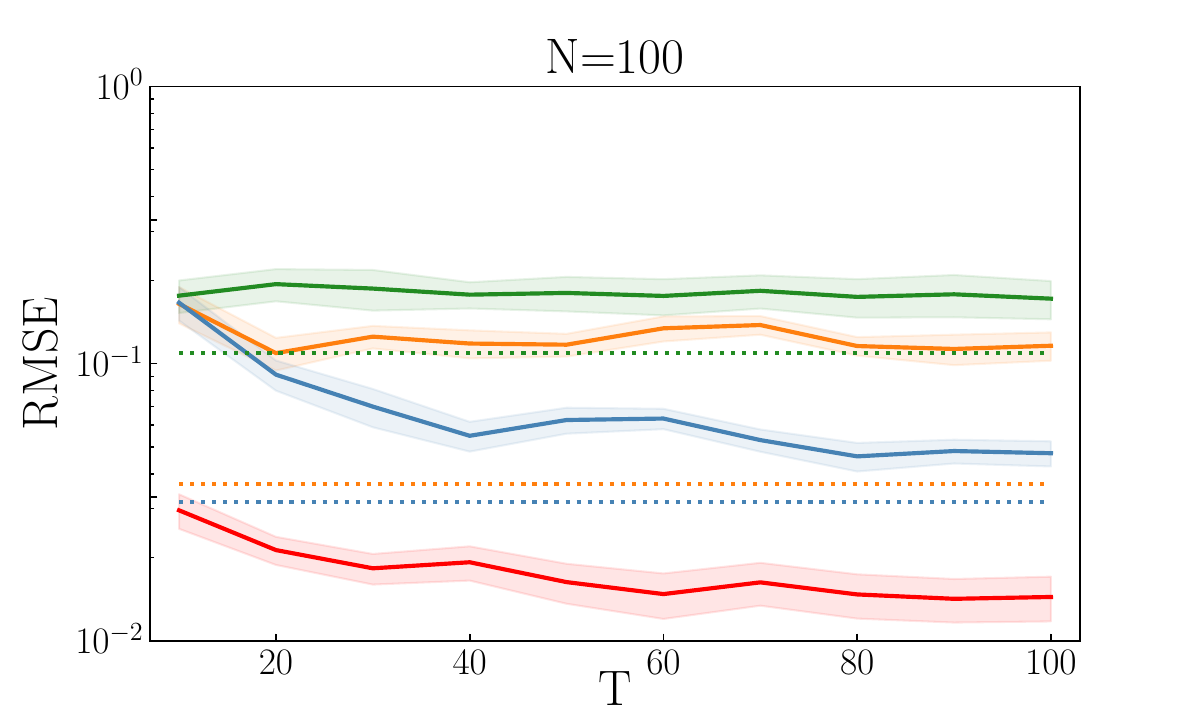}
        \caption{RMSE with fixed $N=100$.}
    \end{subfigure}
        \centering
    % First plot
    \begin{subfigure}{0.33\textwidth}
        \centering
        \hspace{-10pt}
        \includegraphics[width=\textwidth]{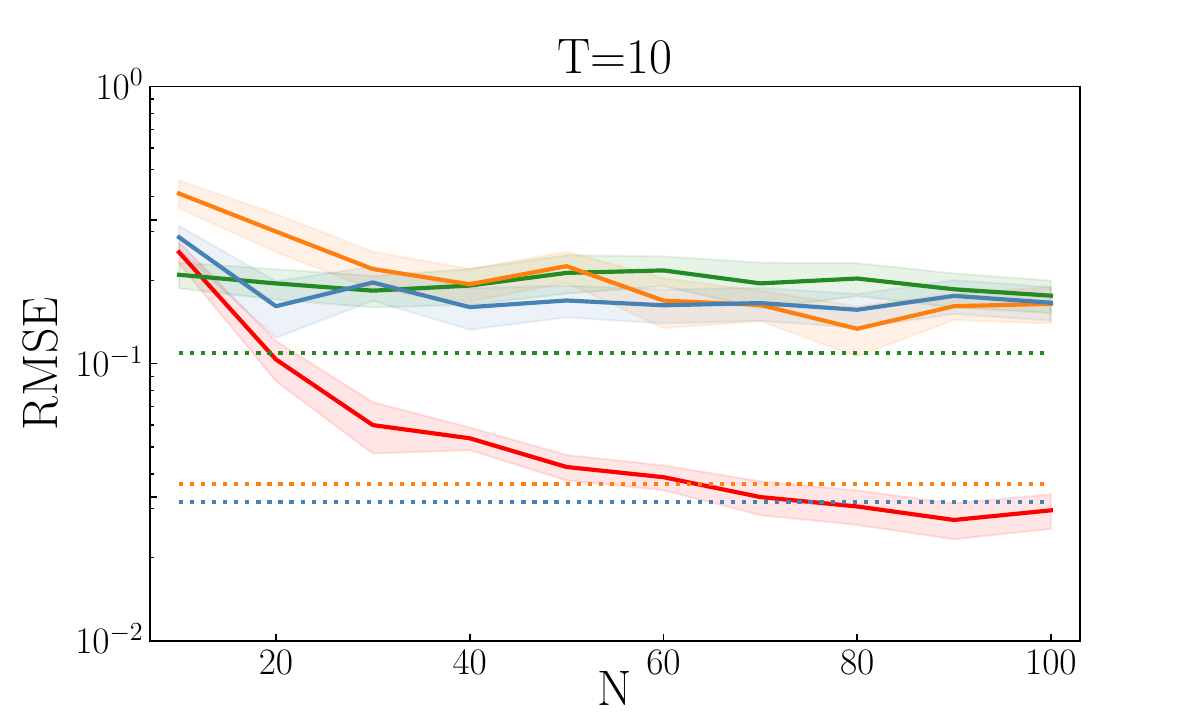}
        \caption{RMSE with fixed $T=10$}
    \end{subfigure}%
    \hfill % Add horizontal space between the subfigures
    % Second plot
    \begin{subfigure}{0.33\textwidth}
        \centering
        \hspace{-10pt}
        \includegraphics[width=\textwidth]{figures/bayes_sensitivity_T_50.pdf}
        \caption{RMSE with fixed $T=50$.}
    \end{subfigure}%
    \hfill % Add horizontal space between the subfigures
    % Third plot
    \begin{subfigure}{0.33\textwidth}
        \centering
        \hspace{-10pt}
        \includegraphics[width=\textwidth]{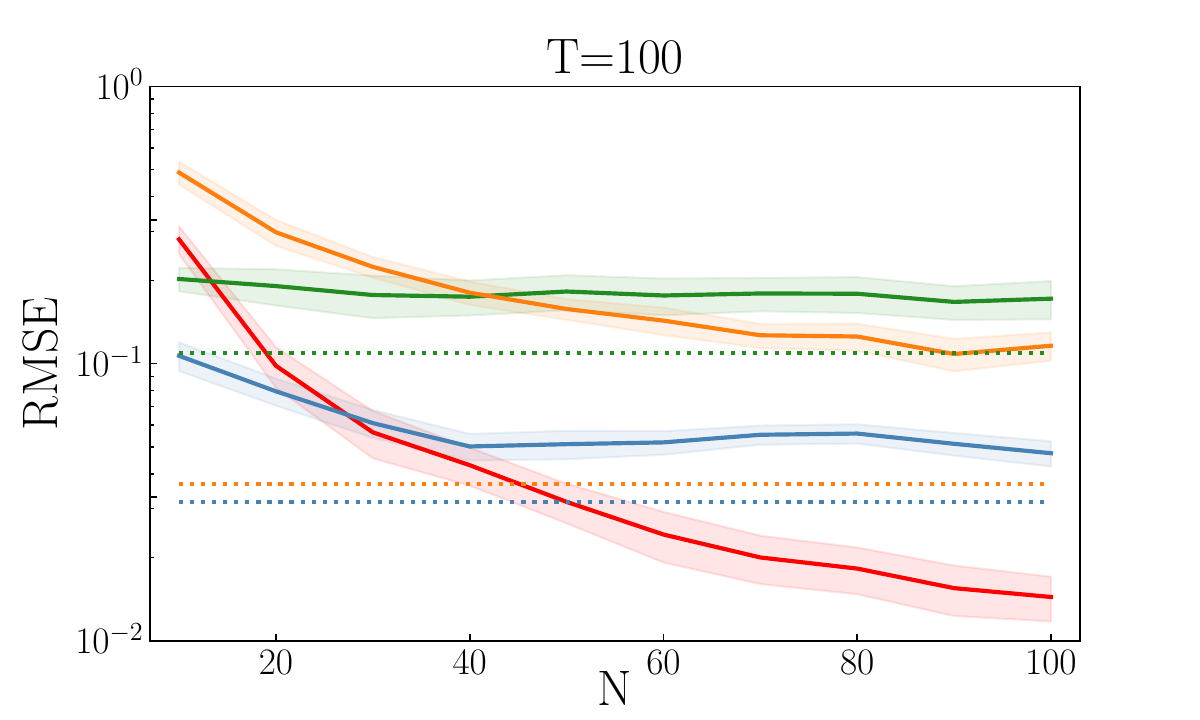}
        \caption{RMSE with fixed $T=100$.}
    \end{subfigure}
    \caption{\emph{Bayesian sensitivity analysis for linear models}. \textbf{First Row:} Dimension $d=2$ with fixed $N=10, 50, 100$ and increasing $T$. 
    \textbf{Second Row:} Dimension $d=2$ with fixed $T=10, 50, 100$ and increasing $N$. The intergral is $f(x) = x^\top x$.} 
    \label{appfig:bayes_sensitivity_1}
\end{figure}

In this synthetic experiment, we do sensitivity analysis on the hyperparameters in Bayesian linear regression. 
The observational data for the linear regression are $Y \in \R^{m \times d}, Z \in \R^{m}$ with $m$ being the number of observations and $d$ being the dimension.
We use $x$ to denote the regression weight; this is unusual but is done so as to keep the notation consistent with the main text.
By placing a $\calN(x ; 0, \theta \Id_d)$ prior 
on the regression weights $x \in \R^{d}$ with $\theta \in \left( 1, 3\right)^d$, and assuming independent $\calN(0, \eta)$ observation noise for some known $\eta>0$, we can obtain (via conjugacy) a multivariate Gaussian posterior $\Pb_\theta$ whose mean and variance have a closed form expression~\citep{bishop:2006:PRML}.
\begin{align*}
    \Pb_\theta = \calN(\tilde{m}, \tilde{\Sigma}), \quad \tilde{\Sigma}^{-1} = {\frac{1}\theta \Id_d} + \eta Y^\top Y, \quad \tilde{m} = \eta \tilde{\Sigma} Y^\top Z.
\end{align*}
We can then analyse sensitivity by computing the conditional expectation $I(\theta)=\int_\calX f(x)\Pb_\theta(dx)$ of some quantity of interest $f$.
For example, if  $f(x)=x^\top x$, then $I(\theta)$ is the second moment of the posterior and the results are already reported in the main text.
If $f(x) = x^\top y^\ast$ for some new observation $y^\ast$, then $I(\theta)$ is the predictive mean. 
In these simple settings, $I(\theta)$ can be
computed analytically, making this a good synthetic example for benchmarking.
We sample parameter values $\theta_{1:T}$ from a uniform distribution $ \Qb = \operatorname{Unif}(\Theta)$ where $\Theta = (1, 3)^d$, and for each such parameter $\theta_t$, we obtain $N$ observations $x_{1:N}^t$ from $\Pb_{\theta_t}$.  
In total, we have $N \times T$ samples.

For conditional Bayesian quadrature (CBQ), we need to carefully choose two kernels $k_\Theta$ and $k_\calX$. Firstly, we choose the kernel $k_\calX$ to be an isotropic Gaussian kernel: $k(x, x') = A_\calX \exp(-\frac{1}{2 l_\calX^2} (x - x')^\top(x - x'))$ for the purpose that the Gaussian kernel mean embedding has a closed form under the Gaussian posterior $\Pb_\theta$:
\begin{align}\label{appeq:E14}
    \mu_\theta(x) = A_\calX {\left| Id_d + l_\calX^{-2} \tilde{\Sigma} \right|}^{-1/2} \exp \left(-\frac{1}{2} (x - \tilde{m})^\top (\tilde{\Sigma} + l_\calX^2 \Id_d)^{-1} (x - \tilde{m})\right)
\end{align}
In addition, the integral of the kernel mean embedding $\mu_\theta$ (known as the initial error) also has a closed form
$\int_{\calX} \mu_\theta(x) \Pb_\theta(dx) = A_\calX l_\calX / \sqrt{| l_\calX^2 \Id_d + 2 \tilde{\Sigma}|}$.

This leaves us with a choice for $k_\Theta$. 
In this synthetic setting, we know that $I(\theta)$ is infinitely times differentiable, but we opt for Mat\'ern-3/2 kernel $k_\Theta(\theta, \theta') = A_\Theta (1+\sqrt{3} |\theta - \theta'|/l_\Theta) \exp (-\sqrt{3} |\theta - \theta'|/l_\Theta)$ to encode a more conservative prior information on the smoothness of $I(\theta)$.

\begin{figure}[t]
    \centering
    \begin{minipage}{1.0\textwidth}
    \centering
    \includegraphics[width=250pt]{figures/legend.pdf}
    \vspace{-5pt}
    \end{minipage}
    
    \centering
    % First plot
    \begin{subfigure}{0.33\textwidth}
        \centering
        \hspace{-10pt}
        \includegraphics[width=\textwidth]{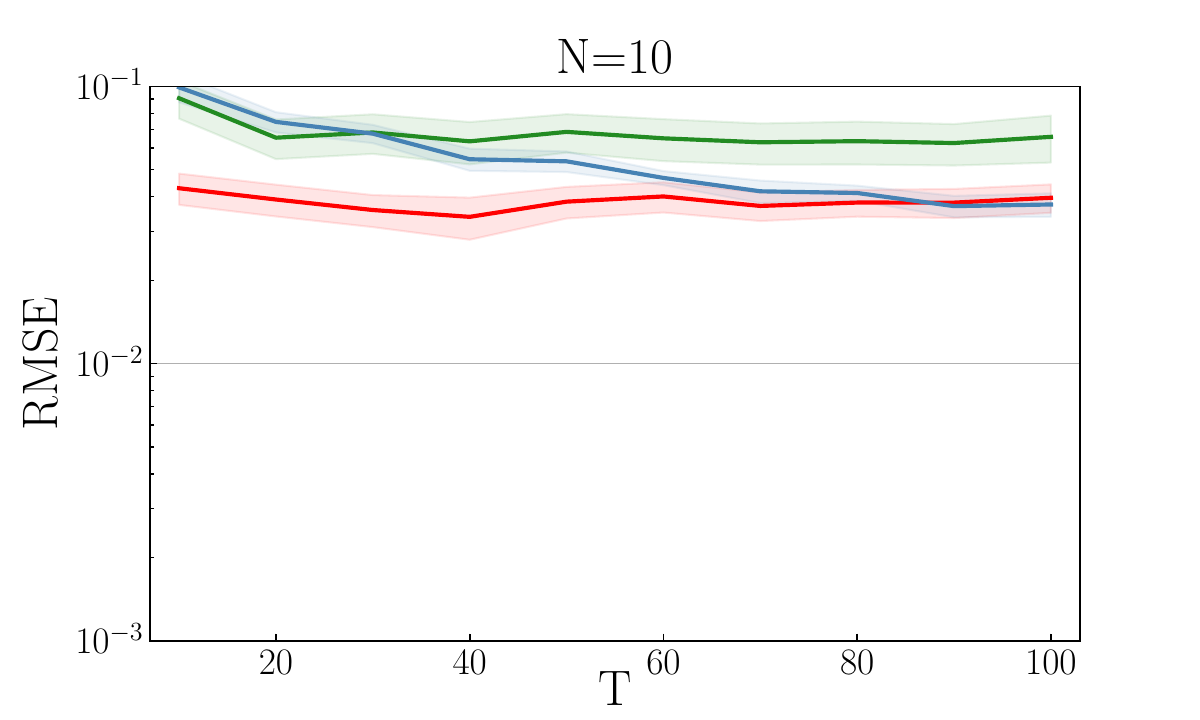}
        \caption{RMSE with fixed $N=10$}
    \end{subfigure}%
    \hfill % Add horizontal space between the subfigures
    % Second plot
    \begin{subfigure}{0.33\textwidth}
        \centering
        \hspace{-10pt}
        \includegraphics[width=\textwidth]{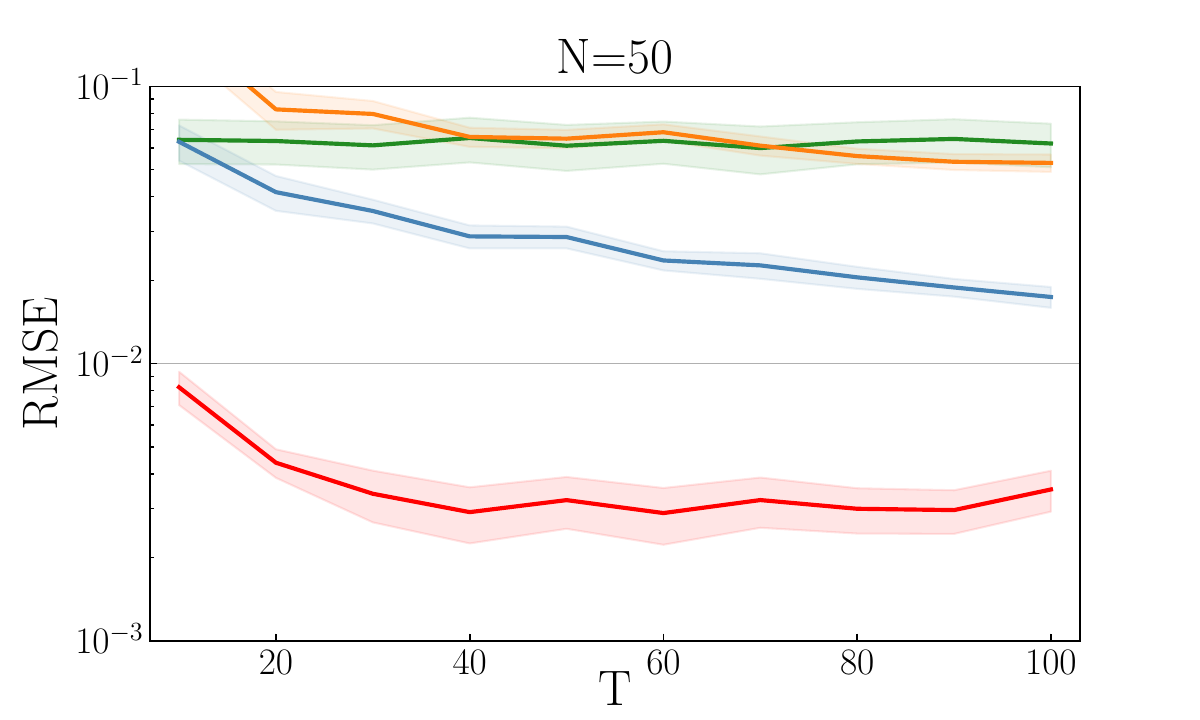}
        \caption{RMSE with fixed $N=50$.}
    \end{subfigure}%
    \hfill % Add horizontal space between the subfigures
    % Third plot
    \begin{subfigure}{0.33\textwidth}
        \centering
        \hspace{-10pt}
        \includegraphics[width=\textwidth]{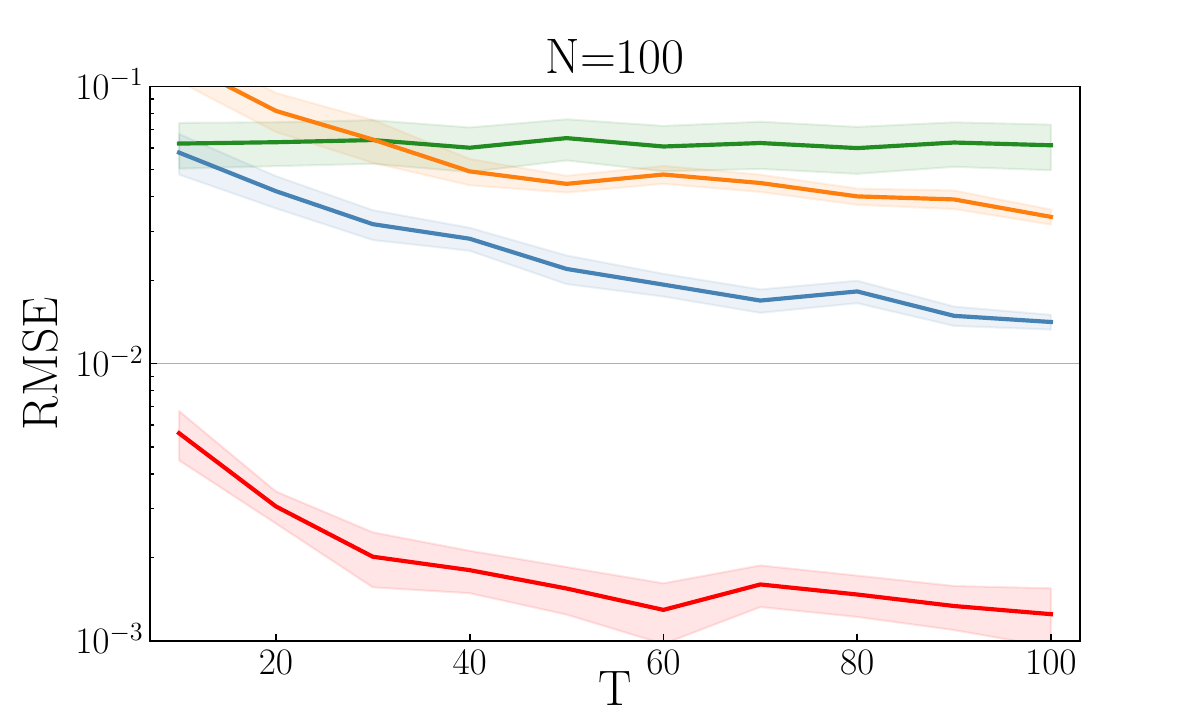}
        \caption{RMSE with fixed $N=100$.}
    \end{subfigure}
        \centering
    % First plot
    \begin{subfigure}{0.33\textwidth}
        \centering
        \hspace{-10pt}
        \includegraphics[width=\textwidth]{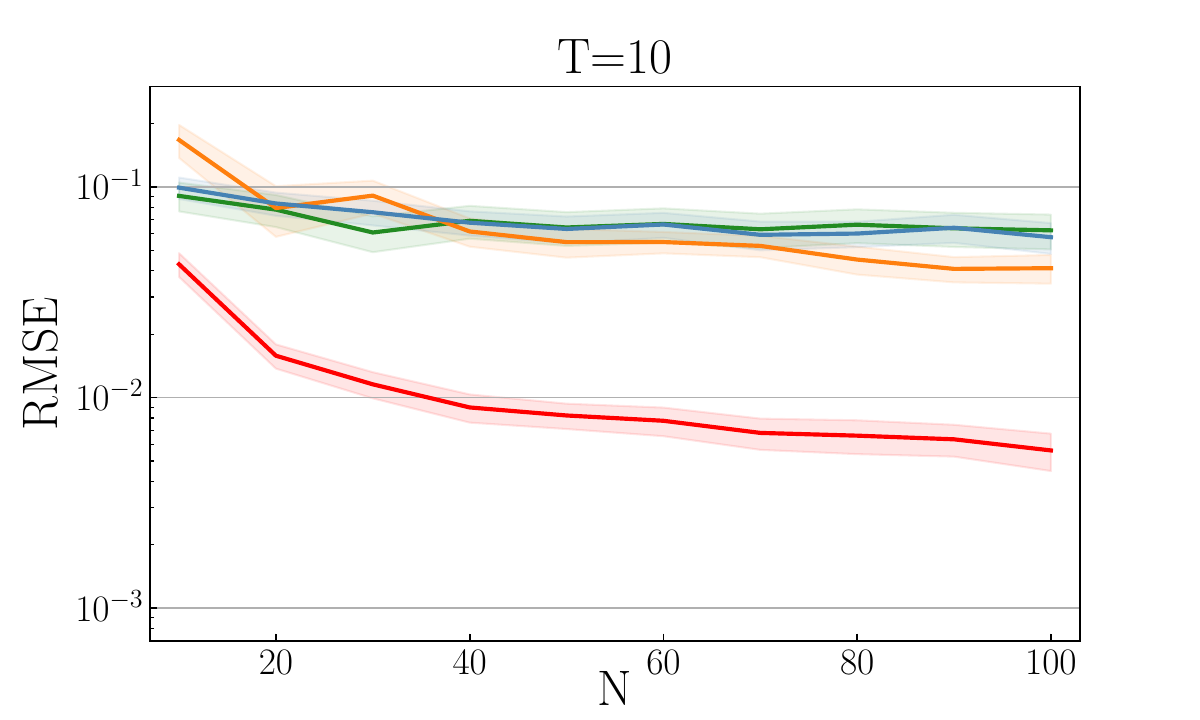}
        \caption{RMSE with fixed $T=10$}
    \end{subfigure}%
    \hfill % Add horizontal space between the subfigures
    % Second plot
    \begin{subfigure}{0.33\textwidth}
        \centering
        \hspace{-10pt}
        \includegraphics[width=\textwidth]{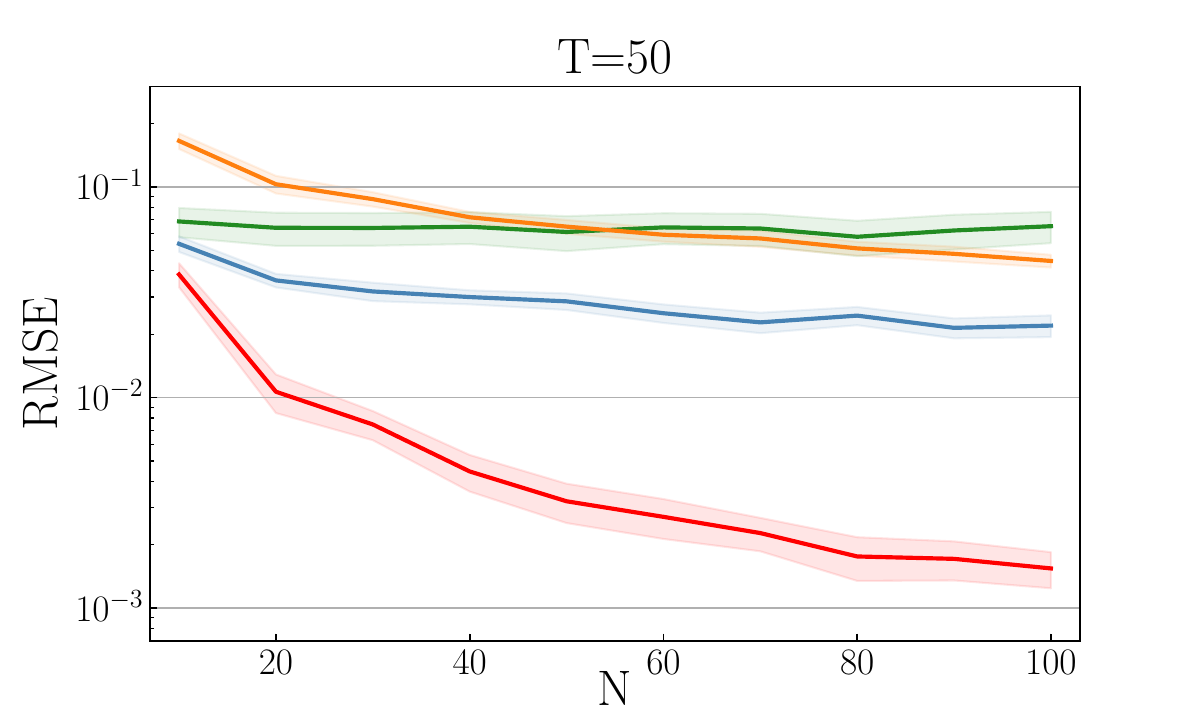}
        \caption{RMSE with fixed $T=50$.}
    \end{subfigure}%
    \hfill % Add horizontal space between the subfigures
    % Third plot
    \begin{subfigure}{0.33\textwidth}
        \centering
        \hspace{-10pt}
        \includegraphics[width=\textwidth]{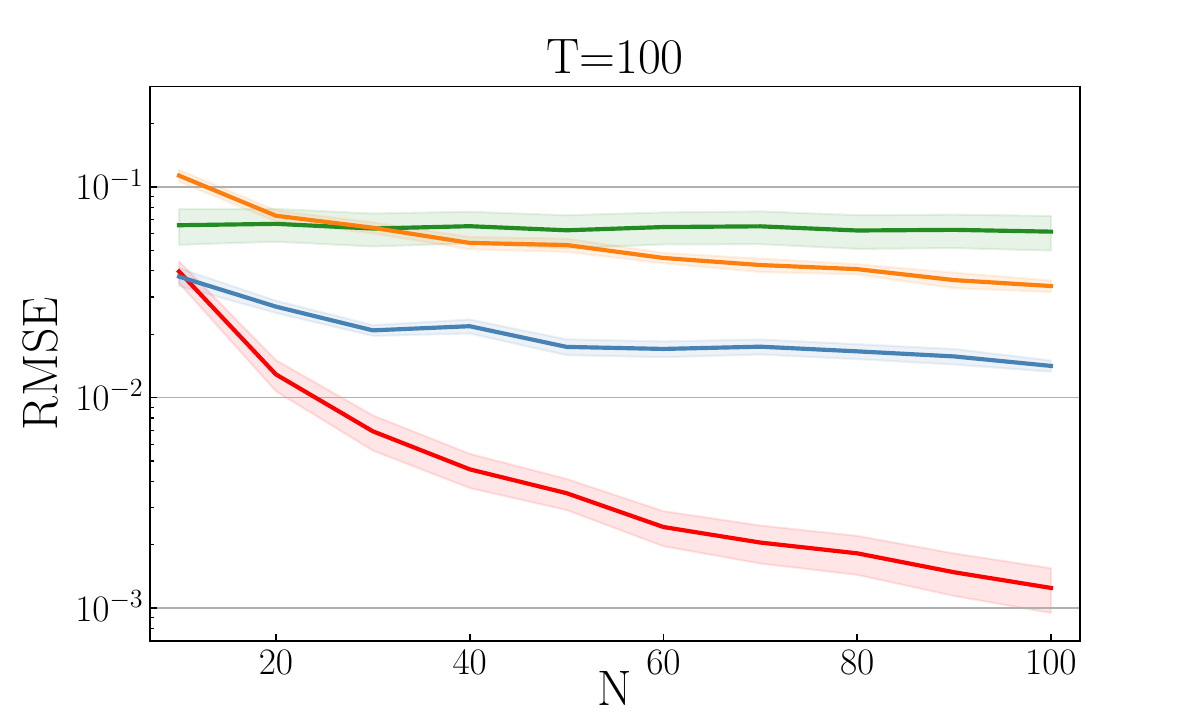}
        \caption{RMSE with fixed $T=100$.}
    \end{subfigure}
    \caption{\emph{Bayesian sensitivity analysis for linear models}. \textbf{First Row:} Dimension $d=2$ with fixed $N=10, 50, 100$ and increasing $T$. 
    \textbf{Second Row:} Dimension $d=2$ with fixed $T=10, 50, 100$ and increasing $N$. The intergral is $f(x) = x^\top y^\ast$. } \label{appfig:bayes_sensitivity_2}
\end{figure}

\subsubsection{Assumptions from \Cref{thm:convergence}} 
We would like to check whether the assumptions made in \Cref{thm:convergence} hold in this experiment.
\begin{itemize}
    \item A1: 
    Although $\calX=\R$ is not a compact domain, $\Pb_\theta$ is a Gaussian distribution so the probability mass outside a large compact subset of $\calX$ decays exponentially. $\Theta = \left( 1, 3 \right)^d$ is a compact domain. A1 is therefore approximately satisfied.
    \item A2: A2 is satisfied due to the sampling mechanism of $\theta_{1:T}$ and $\{x_{1:N}^t\}_{t=1}^T$.
    \item A3: $\Qb$ is a uniform distribution so its density $q$ is constant and hence upper bounded and strictly positive. $\Pb_\theta$ is a Gaussian distribution so its density $p_\theta$ is strictly positive on a compact and large domain with finite second moment. A3 is approximately satisfied.
    \item A4: Both $f(x)$ and $I(\theta)$ are infinitely times differentiable, so $s_I=s_f = \infty$. 
    Although $k_\calX$ is Gaussian kernel which does not satisfy the assumption of \Cref{thm:convergence}, we have ablation study in \Cref{appendix:ablation} showing similar performance when $k_\calX$ is Mat\'ern-3/2 kernel so $s_\calX = \frac{3}{2} + \frac{d}{2}$, and $k_\Theta$ is Mat\'ern-3/2 kernel so $s_\Theta = \frac{3}{2} + \frac{d}{2}$, where $d$ is the dimension. A4 is satisfied.
    \item A5: $\lambda_\calX$ is picked to be $0$ and $\lambda_\Theta$ is found via grid search among $\{0.01, 0.1, 1.0\}$. A5 is satisfied.
\end{itemize}
 
\subsubsection{Additional Experimental Results}
We now provide additional experimental results for Bayesian sensitivity analysis in linear models. 
\Cref{appfig:bayes_sensitivity_1} provides the result when the integrand is chosen to be $f(x)=x^\top x$ so $I(\theta)$ represents the posterior second moment, and \Cref{appfig:bayes_sensitivity_2} provides the result when the integrand is chosen to be $f(x)=x^\top y^\ast$ so $I(\theta)$ represents the predictive mean.
We can see that CBQ has demonstrated consistent smaller RMSE for both tasks under the same number of samples and faster convergence rate compared to all other baseline methods. The conclusions that we draw from the main text also hold for different values of $N$ and $T$.
By comparing the performance of CBQ and KLSMC, where the second stage of both methods are identical, and the main difference lies in the first stage, we believe that CBQ shows better performances mainly due to using Bayesian quadrature instead of Monte Carlo in the first stage.
Also by comparing the first and second row in both \Cref{appfig:bayes_sensitivity_1} and \Cref{appfig:bayes_sensitivity_2}, we can confirm the theory we proved in \Cref{appendix:convergence_rate} that CBQ has a faster convergence rate in $N$ than in $T$. 

In general, CBQ is more computationally expensive than baselines (KLSMC, LSMC and IS), so in this simple setting it is more efficient to spend more budget on obtaining more samples. 
Nonetheless, in scenarios where the expense of sample collection constitutes a significant fraction of the computational budget, or when the evaluation of the integrand proves to be highly costly, it becomes more cost-effective to spend a larger share of the budget towards CBQ. For example, sampling can become expensive easily when the prior and likelihood are not conjugate, so Markov chain Monte Carlo methods are needed to sample from unnormalized posterior. 
Also, we show in the next section \Cref{appendix:sir} a real world example when sampling is particularly costly and hence using CBQ is overall more efficient.

\subsection{Bayesian Sensitivity Analysis for Susceptible-Infectious-Recovered (SIR) Model }\label{appendix:sir}
\begin{figure}[t]
    \begin{minipage}{\textwidth}
    \centering
    \includegraphics[width=380pt]{figures/legend_finance.pdf}
    \end{minipage}
    
    \centering
    % First plot
    \begin{subfigure}{0.33\textwidth}
        \centering
        \hspace{-10pt}
        \includegraphics[width=\textwidth]{figures/SIR_15.pdf}
        \caption{RMSE with fixed $T=15$}
    \end{subfigure}%
    \hfill % Add horizontal space between the subfigures
    % Second plot
    \begin{subfigure}{0.33\textwidth}
        \centering
        \hspace{-10pt}
        \includegraphics[width=\textwidth]{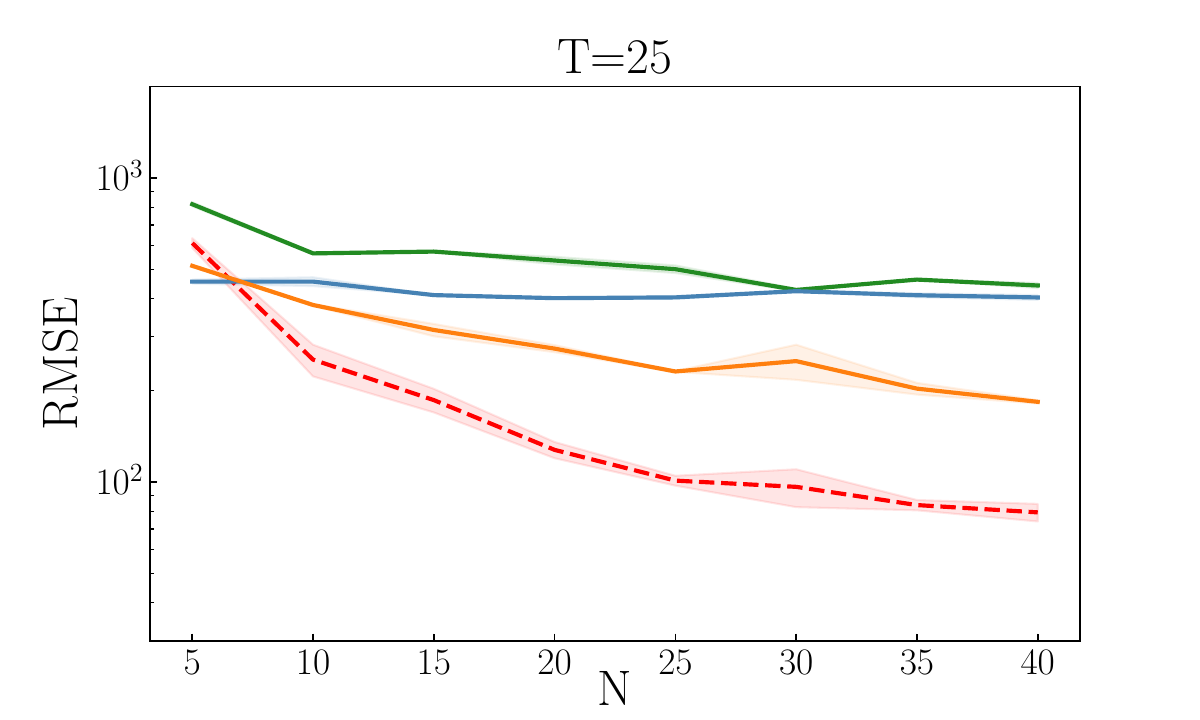}
        \caption{RMSE with fixed $T=25$.}
    \end{subfigure}%
    \hfill % Add horizontal space between the subfigures
    % Third plot
    \begin{subfigure}{0.33\textwidth}
        \centering
        \hspace{-10pt}
        \includegraphics[width=\textwidth]{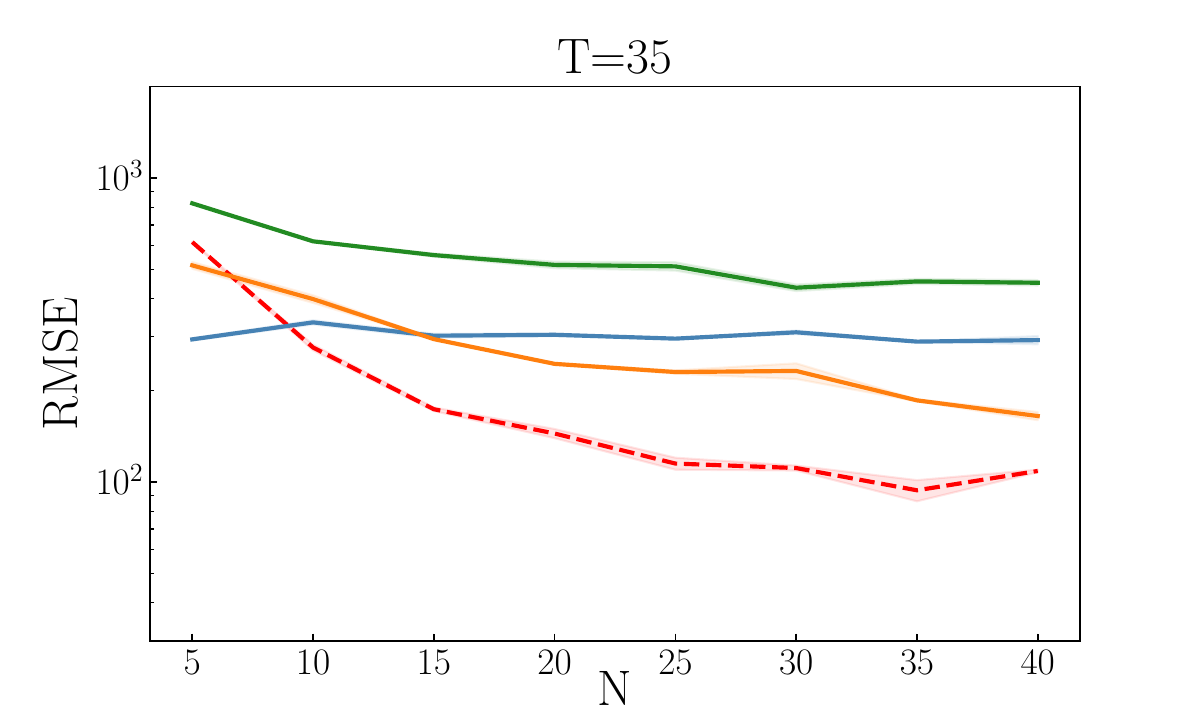}
        \caption{RMSE with fixed $T=35$.}
    \end{subfigure}
    \caption{\emph{Bayesian sensitivity analysis for SIR model.} $T=15, 25, 35$ and increasing $N$.}\label{appfig:sir}
\end{figure}
\subsubsection{Experimental Setting}
The SIR model is commonly used to simulate the dynamics of infectious diseases through a population~\cite{kermack1927sir}. 
It divides the population into three sections.
Susceptibles (S) represents people who are not infected but can be infected after getting contact with an infectious individual.
Infectious (I) represents people who are currently infected and can infect susceptible individuals.
Recovered (R) represents individuals who have been infected and then removed from the disease, either by recovering or dying. The dynamics are governed by a system of ordinary differential equations (ODE) as below.
\begin{align*}
    \begin{aligned}
\frac{\mathrm{d} S}{\mathrm{~d} r} &= -x S I, \quad
\frac{\mathrm{d} I}{\mathrm{~d} r} &= x S I-\gamma I, \quad
\frac{\mathrm{d} R}{\mathrm{~d} r} &= \gamma I
\end{aligned}
\end{align*}
with $x$ being the infection rate, $\gamma$ being the recovery rate and $r$ is the time. The solution to the SIR model would be a vector of $\left(N_I^r, N_S^r, N_R^r \right)$ representing the number of infectious, susceptibles and recovered at day $r$.

In this experiment, we assume that the recovery rate $\gamma$ is fixed and we place a Gamma prior distribution on $x$; i.e. $\Pb_\theta = \operatorname{Gamma}(\theta, \xi)$ where $\theta$ represents the initial belief of the infection rate deduced from the study of the virus in the laboratory at the beginning of the outbreak, and $\xi$ represents the amount of uncertainty on the initial belief. 
We fix the parameter $\xi=10$, the total population is set to be $10^6$ and the recovery rate $\gamma = 0.05$. 
The target of interest is the expected peak number of infected individuals under the prior distribution on $x$: 
\begin{align*}
    I(\theta) = \E_{x}\left[\max_r N_I^r(x) \mid \theta \right] = \int_{\calX} \max_r N^r_I(x) \Pb_\theta(dx)
\end{align*}
with the integrand $f(x) = \max_r N_I^r(x)$. We are interested in the sensitivity analysis of the shape parameter $\theta$ to the final estimate of the expected peak number of infected individuals.
The initial belief of the infection rate $\theta_{1:T}$ are sampled from the uniform distribution $\Qb = \operatorname{Unif}\left(2,9\right)$ and then $N$ number of $x^t_{1:N}$ are sampled from $\Pb_{\theta_t} = \operatorname{Gamma}(\theta_t, \xi)$. 
In this setting, sampling $x$ is very expensive as it necessarily involves solving the system of SIR ODEs, which can be very slow as the discretization step gets finer.
In the middle panel of \Cref{fig:finance_sir}, we have shown that obtaining one sample from SIR ODEs under discretization time step $\tau = 0.1$ takes around $3.0$s, whereas running the whole CBQ algorithm takes $1.0$s, not to mention that sampling from SIR ODEs need to be repeated $N \times T$ times. Therefore, using CBQ is ultimately more efficient overall within the same period of time.

For CBQ, we need to carefully choose two kernels $k_\Theta$ and $k_\calX$.
First we choose $k_\calX$, we use Mat\'ern-3/2 as the base kernel and then apply a Langevin Stein operator to both arguments of the base kernel to obtain $k_\calX$. 
The reason we use a Langevin Stein kernel is that Stein kernel gives an RKHS which is a subset on the Sobolev space with one order less smoothness than the base kernel, and since the smoothness of the integrand $f(x) = \max_r N_I^r(x)$ is unknown, using a Stein kernel enforces weaker prior information than Mat\'ern-3/2.
Furthermore, the kernel mean embedding of a Stein kernel $\mu(x)$ is a constant $c$ by construction as per the discussion in \Cref{appendix:practical_considerations}. 
The initial error is also a constant $c$ by construction.
Then we choose $k_\Theta$. Since $I(\theta)$ represents the peak number of infections so $I(\theta)$ is expected to be smooth and continuous, and hence we choose $k_\Theta$ as Mat\'ern-3/2 kernel. 
All hyperparameters in $k_\calX$ and $k_\Theta$ are selected according to \Cref{appendix:hyperparameter_selection}.
We use a MC estimator with $5000$ samples as the pseudo ground truth and evaluate the RMSE across all methods.

\subsubsection{Assumptions from \Cref{thm:convergence}} 
We would like to check whether the assumptions made in \Cref{thm:convergence} hold in this experiment.
\begin{itemize}
    \item A1: Although $\calX=\R^+$ is not a compact domain,$\Pb_\theta$ is a Gamma distribution so the probability mass outside a large compact subset of $\calX$ around the origin decays exponentially. $\Theta = \left(2, 9 \right)^d$ is a compact domain. A1 is approximately satisfied.
    \item A2: A2 is satisfied due to the sampling mechanism of $\theta_{1:T}$ and $\{x_{1:N}^t\}_{t=1}^T$.
    \item A3: $\Qb$ is a uniform distribution so its density $q$ is constant and hence upper bounded and strictly positive. $\Pb_\theta$ is a Gamma distribution so its density $p_\theta$ is strictly positive within a large compact subset of $\calX$ and has finite second moment. A3 is approximately satisfied.
    \item A4: $f(x) = \max_r N_I^r(x)$ is the maximum number of infections so $f(x)$ is not necessarily smooth. $I(\theta)$ represents the peak number of infections with varying initial estimate of the infection rate, so $I(\theta)$ is smooth and continuous with $s_I \leq 1$. 
    $k_\calX$ is Stein kernel with Matern-3/2 kernel as the base, so the corresponding RKHS will have functions which are rough (i.e. of smoothness $1/2$) but is only a subset of a Sobolev space. In addition, $k_\Theta$ is Matern-3/2 kernel so $s_\Theta = \frac{3}{2} + \frac{1}{2} = 2$. It is therefore unclear if A4 is satisfied.
    \item A5: $\lambda_\calX$ is picked to be $0$ and $\lambda_\Theta$ is found via grid search among $\{0.01, 0.1, 1.0\}$. A5 is satisfied.
\end{itemize}

\subsubsection{Additional Experimental Results}
We report more results in \Cref{appfig:sir} with fixed $T=15, 25, 35$ and increasing $N$, to showcase that CBQ consistently exhibits smaller RMSE than baseline methods. The conclusions that we draw from the main text also hold for different values of N and T for this experiment.

%%%%%%%%%%%%%%%%%%%%%%%%%%%%%%%%%%%%%%%%%%%%%%%%%%%%%%%%%%%%%%%%%%%%%%%%%%%%%%

\subsection{Option Pricing in Mathematical Finance}\label{appendix:black_scholes}

\begin{figure}[t]
    \begin{minipage}{\textwidth}
    \centering
    \includegraphics[width=380pt]{figures/legend_finance.pdf}
    \end{minipage}
    
    \centering
    % First plot
    \begin{subfigure}{0.33\textwidth}
        \centering
        \hspace{-10pt}
        \includegraphics[width=\textwidth]{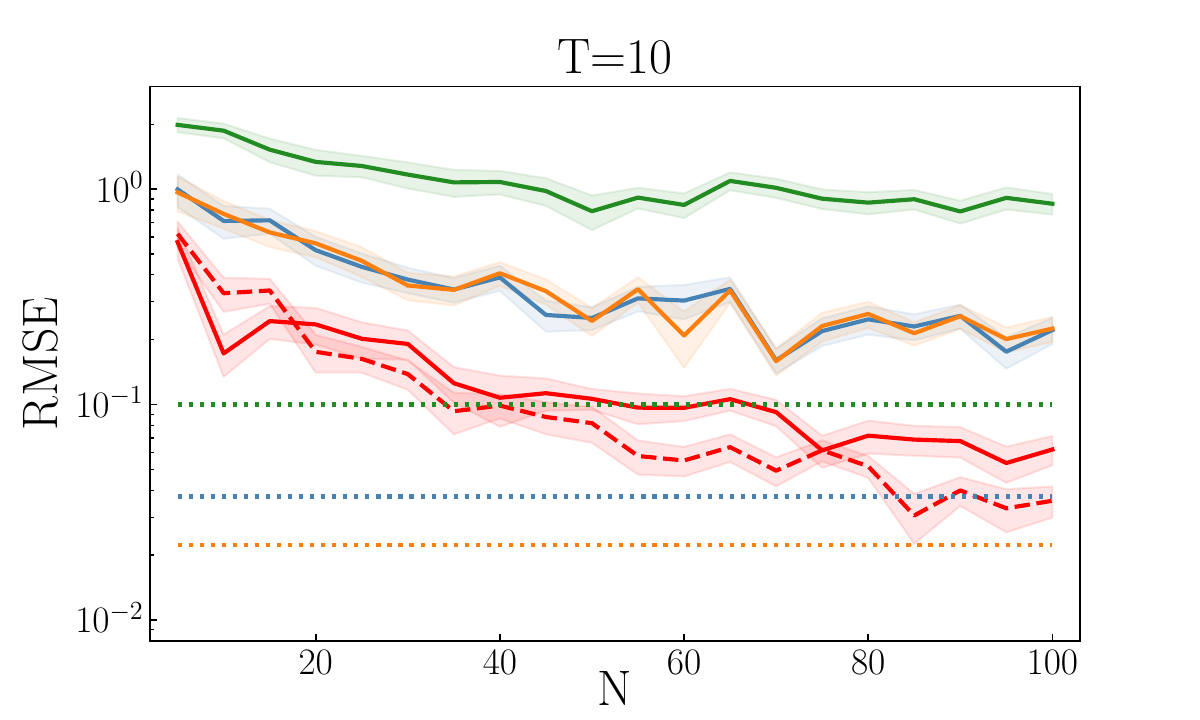}
        \caption{RMSE with fixed $T=10$}
    \end{subfigure}%
    \hfill % Add horizontal space between the subfigures
    % Second plot
    \begin{subfigure}{0.33\textwidth}
        \centering
        \hspace{-10pt}
        \includegraphics[width=\textwidth]{figures/finance_T_20.pdf}
        \caption{RMSE with fixed $T=20$.}
    \end{subfigure}%
    \hfill % Add horizontal space between the subfigures
    % Third plot
    \begin{subfigure}{0.33\textwidth}
        \centering
        \hspace{-10pt}
        \includegraphics[width=\textwidth]{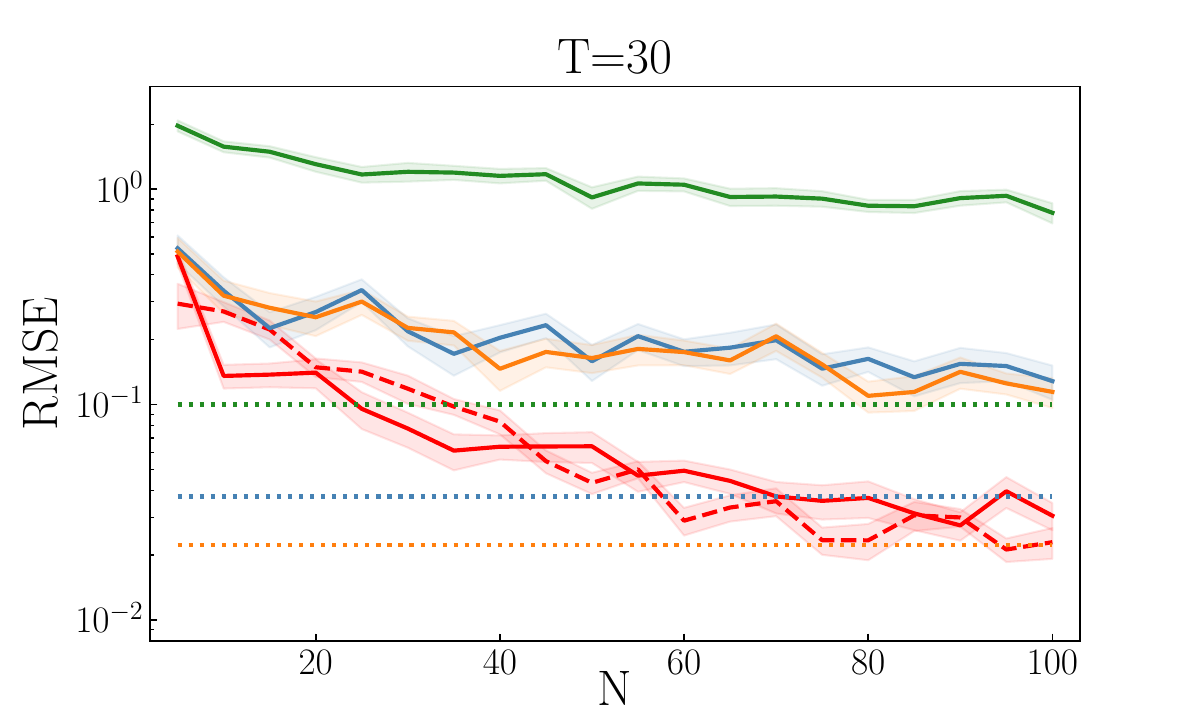}
        \caption{RMSE with fixed $T=30$.}
    \end{subfigure}
    \caption{\emph{Option pricing in mathematical finance.} $T=10, 20, 30$ and increasing $N$.}\label{appfig:finance}
\end{figure}

\subsubsection{Experimental Setting}
In this experiment, we consider specifically an asset whose price $S({\tau})$ at time $\tau$ follows the Black-Scholes formula $S(\tau) = S_0 \exp \left(\sigma W(\tau) - \sigma^2 \tau/2 \right)$ for $\tau \geq 0$, where $\sigma$ is the underlying volatility, $S_0$ is the initial price and $W$ is the standard Brownian motion.
The financial derivative we are interested in is a butterfly call option whose payoff at time $\tau$ can be expressed as $\psi(S({\tau}))=\max (S(\tau)-K_1, 0) + \max (S(\tau)-K_2, 0) - 2\max (S(\tau) - (K_1+K_2)/2, 0)$.

In addition to the expected payoff, insurance companies are interested in computing the expected loss of their portfolios if a shock would occur in the economy.
We follow the setting in \cite{alfonsi2021multilevel, alfonsi2022many} assuming that a shock occur at time $\eta$, at which time the option price is $S(\eta)=\theta$, and this shock multiplies the option price by $1 + s$. The option price at maturity time $\zeta$ is denoted as $S(\zeta) = x$. The expected loss caused by the shock can be expressed as 
\begin{align*}
    \mathcal{L} = \E [\max (I(\theta), 0 )], \text{   } I(\theta) = \int_0^\infty \psi(x)-\psi \left((1 + s) x \right) \Pb_\theta(dx)
\end{align*}
So the integrand is $f(x) = \psi(x)-\psi((1+s)x)$.

Following the setting in \cite{alfonsi2021multilevel, alfonsi2022many}, we consider the initial price $S_0 = 100$, the volatility $\sigma = 0.3$, the strikes $K_1 = 50, K_2 = 150$, the option maturity $\zeta=2$ and the shock happens at $\eta=1$ with strength $s = 0.2$. 
The option price at which the shock occurs are $\theta_{1:T}$ sampled from the log normal distribution deduced from the Black-Scholes formula $\theta_{1:T} \sim \Qb = \operatorname{Lognormal}( \log S_0 - \frac{\sigma^2}{2} \eta, \sigma^2 \eta)$. 
Then $x^t_{1:N}$ are sampled from another log normal distribution also deduced from the Black-Scholes formula $x^t_{1:N} \sim \Pb_{\theta_t} = \operatorname{Lognormal}( \log \theta_t - \frac{\sigma^2}{2} (\zeta - \eta), \sigma^2 (\zeta - \eta))$. 

For CBQ, we need to carefully choose two kernels $k_\calX$ and $k_\Theta$. First we choose the kernel $k_\calX$ to be a log-Gaussian kernel for the purpose that the log-Gaussian kernel mean embedding has a closed form under log-normal distribution $\Pb_\theta = \operatorname{Lognormal}(\bar{m}, \bar{\sigma}^2)$ with $\bar{m} = \log \theta - \frac{\sigma^2}{2}(\zeta - \eta)$ and  $\bar{\sigma}^2 = \sigma^2 (\zeta - \eta)$. 
The log Gaussian kernel is defined as $k_\calX(x, x') = A_\calX \exp(-\frac{1}{2 l_\calX^2} (\log x - \log x')^2)$
and the kernel mean embedding has the form
\begin{align*}
    \mu_\theta(x) = \frac{A_\calX}{\sqrt{1 + \frac{\bar{\sigma}^2}{l_\calX^2}}} \left. \exp \left(-\frac{\bar{m}^2 + (\log x)^2 }{2(\bar{\sigma}^2 + l_\calX^2)}\right) x^{\frac{\bar{m}}{\bar{\sigma}^2 + l_\calX^2}}  \right.
\end{align*}
The initial error, which is the integral of kernel mean $\mu_\theta(x)$ does not have a closed form expression, so we use the empirical average as an approximation. Then, we choose the kernel $k_\Theta$ to be a Mat\'ern-3/2 kernel.

For this experiment, we also implement CBQ with Langevin Stein reproducing kernel. We use Mat\'ern-3/2 as the base kernel and then apply the Langevin Stein operator to both arguments of the base kernel to obtain $k_\calX$. 
The reason we use a Stein kernel is that Stein kernels have an RKHS whose functions have one order less smoothness than the base kernel, and since the integrand has very low smoothness (due to the maximum function), we do not want to use an overly smooth kernel. 
The kernel mean embedding of a Stein kernel is a
constant $c$ by construction as per the discussion in \Cref{appendix:practical_considerations}.
The kernel $k_\Theta$ is selected as Mat\'ern-3/2 kernel.
All hyperparameters in $k_\calX$ and $k_\Theta$ for CBQ and hyperparameters for baseline methods are selected according to \Cref{appendix:hyperparameter_selection}.

\subsubsection{Assumptions from \Cref{thm:convergence}} 
We would like to check whether the assumptions made in \Cref{thm:convergence} hold in this experiment.
\begin{itemize}
    \item A1: Although $\calX=\R^+$ is not a compact domain, $\Pb_\theta$ is a lognormal distribution so the probability mass outside a large compact subset of $\calX$ decays super exponentially. A similar argument can be made for $\Theta$ as well. A1 is therefore approximately satisfied.
    \item A2: A2 is satisfied due to the sampling mechanism of $\theta_{1:T}$ and $\{x_{1:N}^t\}_{t=1}^T$.
    \item A3: $\Qb$ is a lognormal distribution so its density $q$ is upper bounded and strictly positive within a large compact subset of $\Theta$. $\Pb_\theta$ is also a lognormal distribution so its density $p_\theta$ is strictly positive within a large compact subset of $\calX$ and has finite second moment. A3 is approximately satisfied.
    \item A4: $f(x)$ is a combination of piecewise linear functions so $s_f = 1$ and $I(\theta)$ is infinitely times differentiable so $s_f = \infty$. 
    When $k_\calX$ is Stein kernel with Matern-3/2 kernel as the base, the functions in the corresponding RKHS have smoothness $1/2$, whereas when $k_\calX$ is the log Gaussian kernel, the functions are infinitely differentiable. Neither of these choices satisfy the assumption, although Stein kernel contain many (but not necessarily all) function of smoothness $1/2$. $k_\Theta$ is Matern-3/2 kernel so $s_\Theta = \frac{3}{2} + \frac{1}{2} = 2$. It is therefore unclear if A4 is satisfied.
    \item A5: $\lambda_\calX$ is picked to be $0$ and $\lambda_\Theta$ is found via grid search among $\{0.01, 0.1, 1.0\}$. A5 is satisfied.
\end{itemize}

\subsubsection{More Experimental Results}
We report more results in \Cref{appfig:finance} with fixed $T=10, 20, 30$ and increasing $N$, to showcase that CBQ consistently exhibits smaller RMSE than baseline methods. The conclusions that we draw from the main text also
hold for different values of $N$ and $T$ for this experiment.
The performance of CBQ is similar between $k_\calX$ being Stein kernel and $k_\calX$ being log Gaussian kernel. It would be interesting to further investigate the performance of CBQ in estimating the future price of other financial derivatives, and we leave it for future work.

%%%%%%%%%%%%%%%%%%%%%%%%%%%%%%%%%%%%%%%%%%%%%%%%%%%%%%%%%%%%%%%%%%%%%%%%%%%%%%%%%%%%%%%%%%%%%

\subsection{Uncertainty Decision Making in Health Economics}\label{appendix:decision}
\subsubsection{Experimental Settings}
In the medical world, it is important to compare the cost and the relative advantages of conducting extra medical experiments. 
% In the area of oil and gas reservoir, an cost analysis is necessary before deciding whether to drill additional wells.
The expected value of partial perfect information (EVPPI) quantifies the expected gain from conducting extra experiments to obtain precise knowledge of some unknown variables \citep{brennan2007calculating}:
\begin{align*}
    \text{EVPPI} = \E \Bigl[\max_c I_c(\theta) \Bigr] - \max_c \E \Bigl[I_c(\theta) \Bigr], \text{   } I_c(\theta) = \int_{\calX} f_c(x, \theta) \Pb_\theta(dx)
\end{align*}
where $c \in \mathcal{C}$ is a set of potential treatments and $f_c$ measures the potential outcome of treatment $c$. Our method is applicable for estimating the conditional expectation $I_c(\theta)$ of the first term. 

We adopt the same experimental setup as delineated in \cite{Giles2019}, wherein $x$ and $\theta$ have a joint 19-dimensional Gaussian distribution, meaning that $\Pb_\theta$ is a Gaussian distribution. 
The specific meanings of all $x$ and $\theta$ are outlined in \Cref{tab:mytable}.
All these variables are independent except that $\theta_1, \theta_2, x_6, x_{14}$ are pairwise correlated with a correlation coefficient $0.6$.
The observations $\theta_{1:T}$ are sampled from the marginal Gaussian distribution $\Qb$ and then $N$ observations of $x^t_{1:N}$ are sampled from $\Pb_{\theta_t}$.

We are interested in a binary decision-making problem ($\calC = \{1, 2\}$) with $f_1(x, \theta)=10^4 (\theta_1 x_5 x_6 + x_7 x_8 x_{9})-(x_1 + x_2 x_3 x_4)$ and $f_2(x, \theta) = 10^4 (\theta_2 x_{13} x_{14} + x_{15} x_{16} x_{17})-(x_{10} + x_{11} x_{12} x_4)$. 
In computing EVPPI, we estimate $I_c(\theta)$ with CBQ and baselines, and then use standard MC for the rest of the expectations.
We draw $10^6$ samples from the joint distribution to generate a pseudo ground truth, and evaluate the RMSE across different methods. 
Note that IS is no longer applicable here because $f_c$ now depends on both $x$ and $\theta$, so we only comparing CBQ against KLSMC and LSMC.

For CBQ, we need to carefully choose two kernels. First, we take $k_\calX$ to be a Mat\'ern-3/2 to ensure that the kernel mean embedding under a Gaussian distribution $\Pb_\theta = \calN(\tilde{\mu}, \tilde{\Sigma})$ has a closed form if we use the 'inverse transform trick' as outlined in \Cref{appendix:practical_considerations}. 
Specifically speaking, we initially sample $u$ from $\calN(0, \Id_d)$, then calculate $x = \tilde{m} + L^\top u$ where $L$ is the lower triangular matrix derived from the Cholesky decomposition of the covariance matrix $\tilde{\Sigma}$. 
The integral now becomes
\begin{align}\label{appeq:transform}
    I_c(\theta) = \int_{\R^d} f(x)\calN(x; \tilde{m},\tilde{\Sigma}) dx = \int_{\R^d} f(\tilde{m} + L^\top u) \calN(u; 0, \Id_d) du
\end{align}
The closed form expression of kernel mean embedding for a Mat\'ern-3/2 kernel and isotropic Gaussian can be found in the Appendix S.3 of \cite{ming2021linked}.
Then we pick $k_\Theta$. 
We know there is a high chance that $I_c(\theta)$ is infinitely times differentiable, but we opt for Mat\'ern-3/2 kernel to encode a more conservative prior information on the smoothness of $I_c(\theta)$ because we do not have a closed form of it.
All hyperparameters in $k_\calX$ and $k_\Theta$ are selected according to \Cref{appendix:hyperparameter_selection}.

\begin{figure}[t]
    \begin{minipage}{\textwidth}
    \centering
    \includegraphics[width=250pt]{figures/legend.pdf}
    \vspace{-10pt}
    \end{minipage}
    
    \centering
    % First plot
    \begin{subfigure}{0.33\textwidth}
        \centering
        \hspace{-10pt}
        \includegraphics[width=\textwidth]{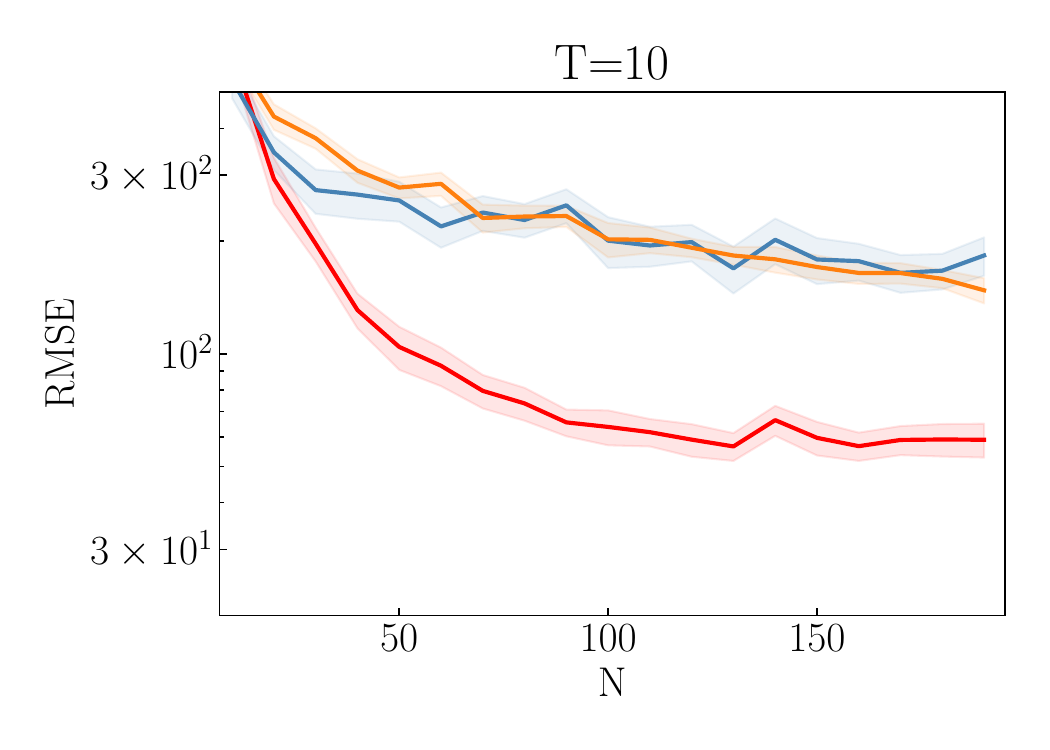}
        \caption{RMSE with fixed $T=10$}
    \end{subfigure}%
    \hfill % Add horizontal space between the subfigures
    % Second plot
    \begin{subfigure}{0.33\textwidth}
        \centering
        \hspace{-10pt}
        \includegraphics[width=\textwidth]{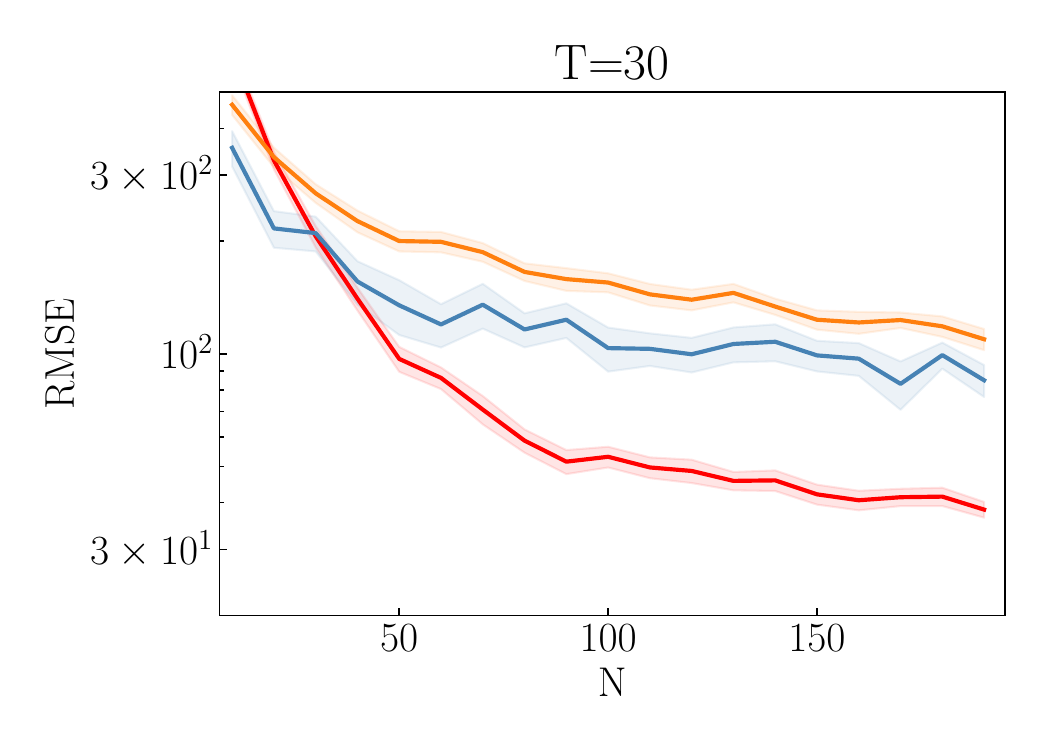}
        \caption{RMSE with fixed $T=30$.}
    \end{subfigure}%
    \hfill % Add horizontal space between the subfigures
    % Third plot
    \begin{subfigure}{0.33\textwidth}
        \centering
        \hspace{-10pt}
        \includegraphics[width=\textwidth]{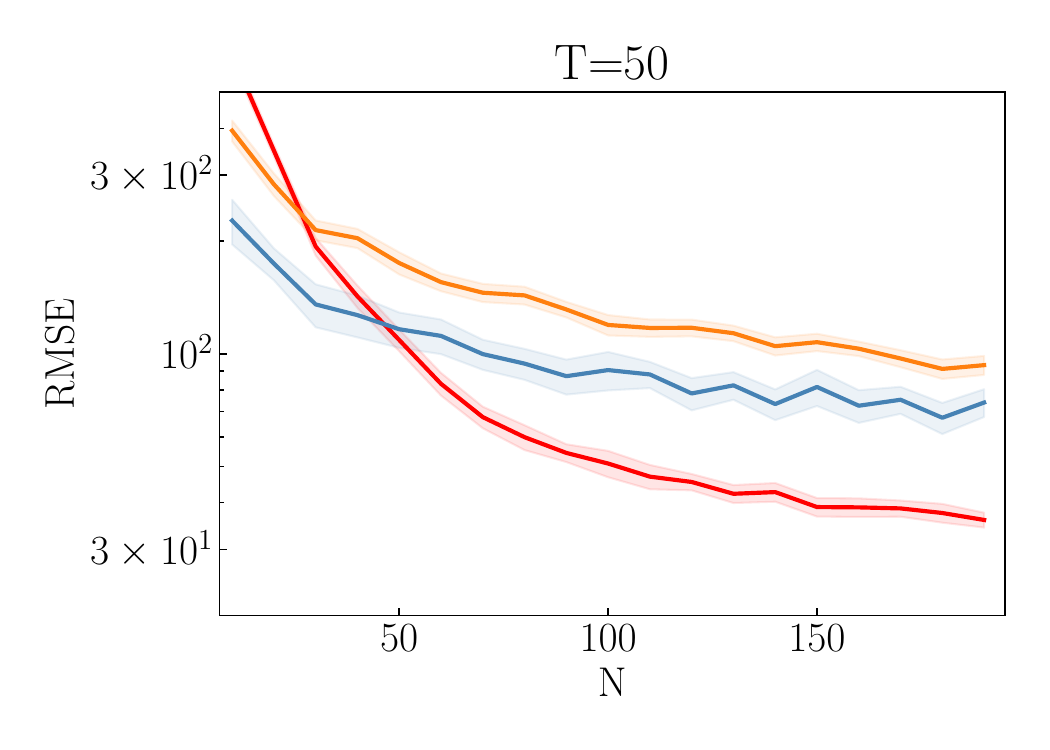}
        \caption{RMSE with fixed $T=50$.}
    \end{subfigure}
    \caption{\emph{Uncertainty decision making in health economics.} $T=10, 30, 50$ and increasing $N$. }\label{appfig:decision}
\end{figure}

\begin{table}[t]
\centering
\begin{tabular}{
>{\centering\arraybackslash}p{1.5cm}
>{\centering\arraybackslash}p{1cm}
>{\centering\arraybackslash}p{1cm}
>{\centering\arraybackslash}p{5cm}}
\toprule
Variables & Mean & Std & Meaning \\
\midrule
$X_1$ & 1000 & 1.0 & Cost of treatment \\
$X_2$ & 0.1 & 0.02 & Probability of admissions \\
$X_3$ & 5.2 & 1.0 & Days of hospital \\
$X_4$ & 400 & 200 & Cost per day \\
$X_5$ & 0.3 & 0.1 & Utility change if response \\
$X_6$ & 3.0 & 0.5 & Duration of response \\
$X_7$ & 0.25 & 0.1 & Probability of side effects \\
$X_8$ & -0.1 & 0.02 & Change in utility if side effect \\
$X_{9}$ & 0.5 & 0.2 & Duration of side effects \\
$X_{10}$ & 1500 & 1.0 & Cost of treatment \\
$X_{11}$ & 0.08 & 0.02 & Probability of admissions \\
$X_{12}$ & 6.1 & 1.0 & Days of hospital \\
$X_{13}$ & 0.3 & 0.05 & Utility change if response \\
$X_{14}$ & 3.0 & 1.0 & Duration of response \\
$X_{15}$ & 0.2 & 0.05 & Probability of side effects \\
$X_{16}$ & -0.1 & 0.02 & Change in utility if side effect \\
$X_{17}$ & 0.5 & 0.2 & Duration of side effects \\
$\theta_1$ & 0.7 & 0.1 & Probability of responding \\
$\theta_2$ & 0.8 & 0.1 & Probability of responding \\
\bottomrule

\end{tabular}
\vspace{5pt}
\caption{Variables in the health economics experiment.}
\label{tab:mytable}
\end{table}

\subsubsection{Assumptions from \Cref{thm:convergence}} 
We would like to check whether the assumptions made in \Cref{thm:convergence} hold in this experiment.
\begin{itemize}
    \item A1: 
    Although $\calX=\R$ is not a compact domain, but $\Pb_\theta$ is a Gaussian distribution so the probability mass outside a large compact subset of $\calX$ decays exponentially. Similarly, $\Theta = \R$ is not a compact domain, but $\Qb$ is a Gaussian distribution so the probability mass outside a large compact subset of $\Theta$ decays exponentially. A1 is approximately satisfied.
    \item A2: A2 is satisfied due to the sampling mechanism of $\theta_{1:T}$ and $\{x_{1:N}^t\}_{t=1}^T$.
    \item A3: $\Qb$ is also a Gaussian distribution so its density $q$ is upper bounded and strictly positive on a compact and large domain. $\Pb_\theta$ is a Gaussian distribution so its density $p_\theta$ is strictly positive on a compact and large domain with finite second moment. A3 is approximately satisfied.
    \item A4: Both the integrand $f$ and the conditional expectation $I_c(\theta)$ are infinitely times differentiable, so $s_f = s_I = \infty$. On the other hand, due to the choice of Mat\'ern-3/2 kernels, $s_{\Theta}=3/2+1/2=2$ and $s_{\calX}=3/2+9/2=6$. A4 is therefore satisfied.
    \item A5: $\lambda_\calX$ is picked to be $0$ and $\lambda_\Theta$ is found via grid search among $\{0.01, 0.1, 1.0\}$. A5 is satisfied.
\end{itemize}

\subsubsection{Additional Experimental Results}
We report more results in \Cref{appfig:decision} with fixed $T = 10, 30, 50$ and increasing N, to showcase that CBQ consistently exhibits smaller RMSE than baseline methods.
The conclusions that we draw from the main text also hold for different values of $N$ and $T$ for this experiment.

\begin{figure}[t]\label{appfig:mobq}
    \centering
    \begin{minipage}{1.0\textwidth}
    \centering
    \includegraphics[width=350pt]{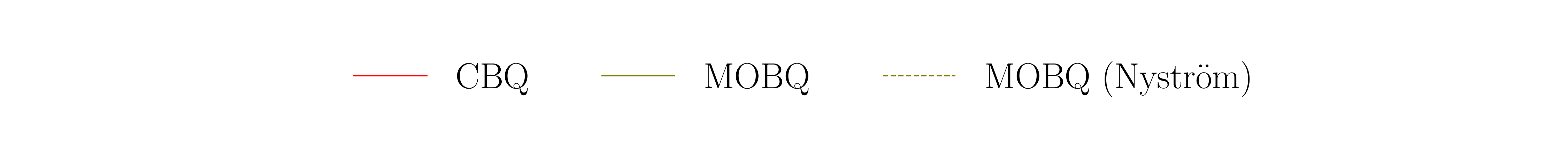}
    \end{minipage}
    \vspace{-10pt}
    
    \begin{subfigure}{0.32\textwidth}
        \centering
        \includegraphics[width=\linewidth]{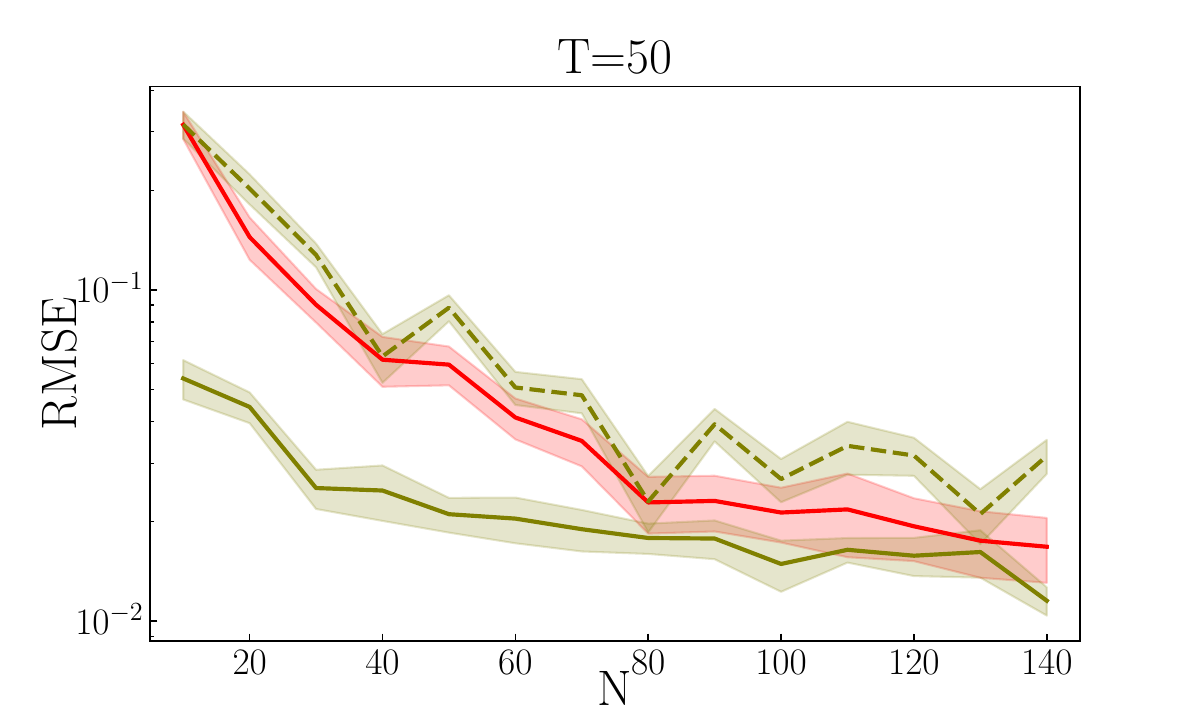}
    \end{subfigure}
    \hfill
    \begin{subfigure}{0.32\textwidth}
        \centering
        \includegraphics[width=\linewidth]{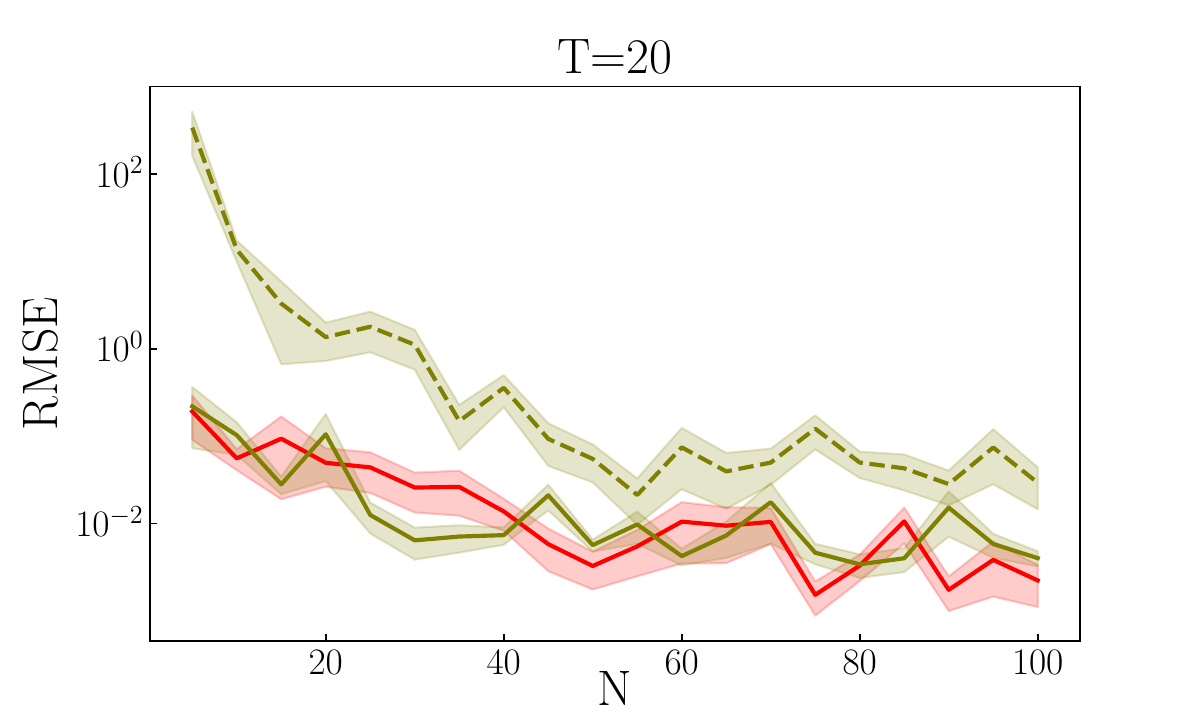}
    \end{subfigure}
    \hfill
    \begin{subfigure}{0.32\textwidth}
        \centering
        \includegraphics[width=\linewidth]{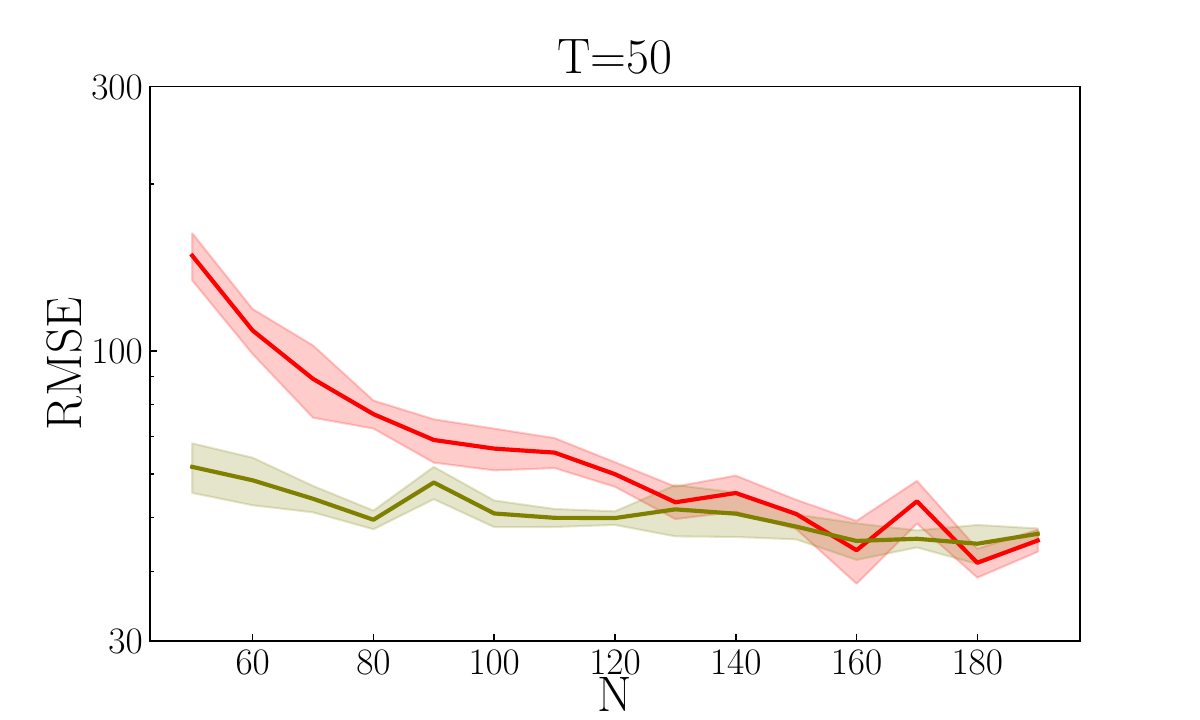}
    \end{subfigure}
    \\
    \begin{subfigure}{0.32\textwidth}
        \centering
        \includegraphics[width=\linewidth]{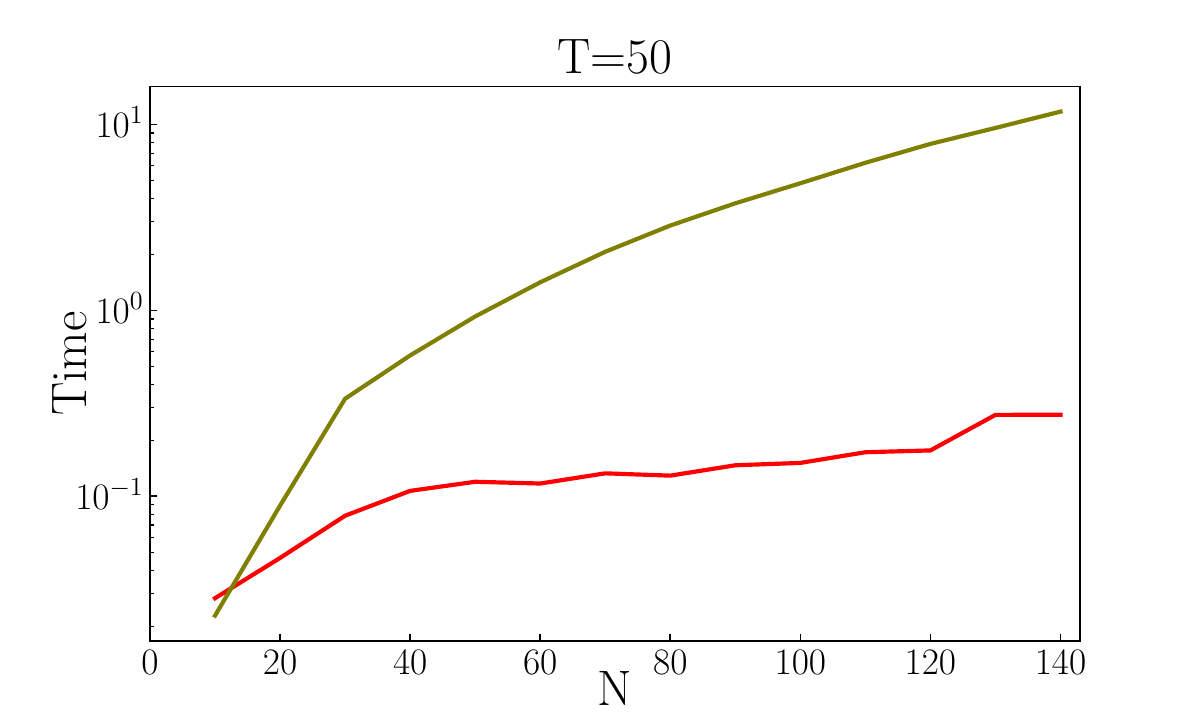}
        \caption{Bayesian sensitivity analysis for linear models.}
        \label{appfig:mobq_bayes_sensitivity}
    \end{subfigure}
    \hfill
    \begin{subfigure}{0.32\textwidth}
        \centering
        \includegraphics[width=\linewidth]{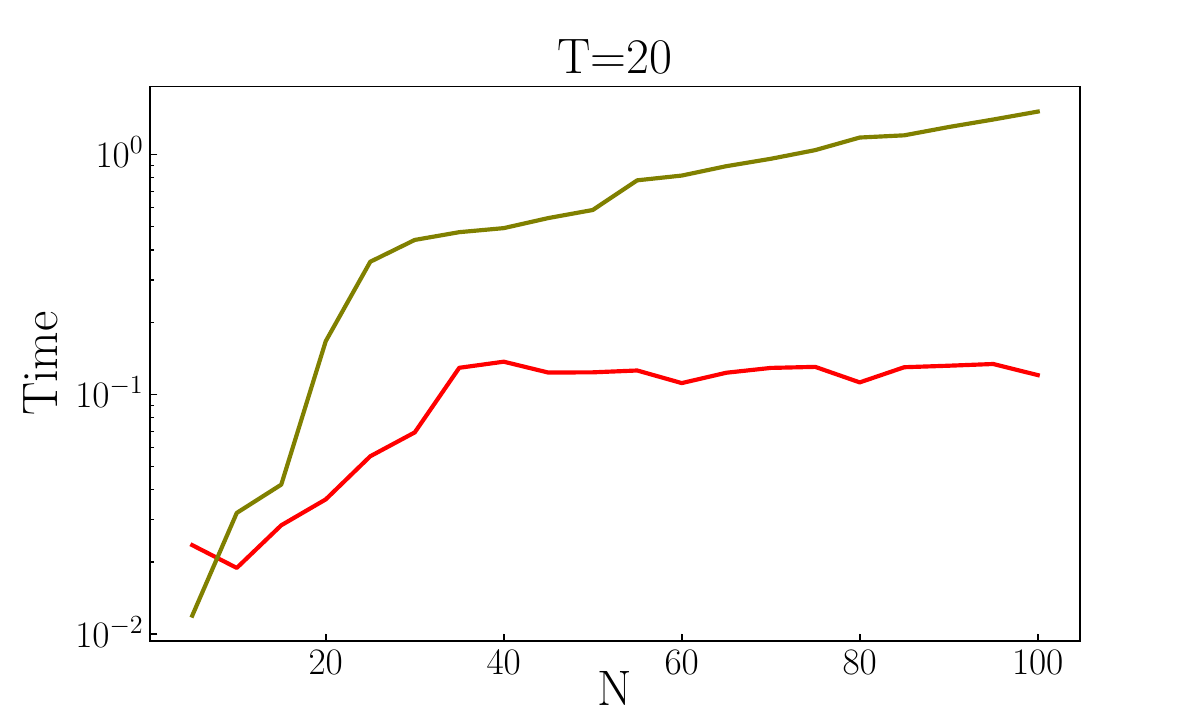}
        \caption{Option pricing in mathematical finance.}
        \label{appfig:mobq_finance}
    \end{subfigure}
    \hfill
        \begin{subfigure}{0.32\textwidth}
        \centering
        \includegraphics[width=\linewidth]{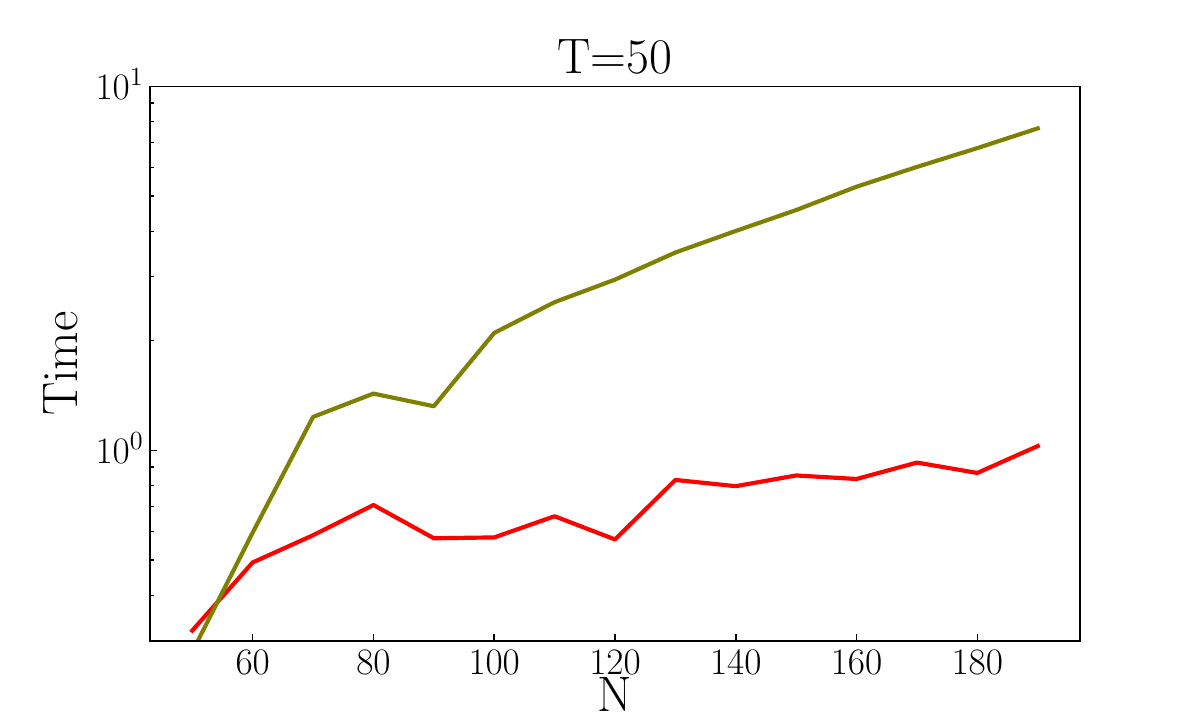}
        \caption{Uncertainty decision making in health economics.}
        \label{appfig:mobq_decision}
    \end{subfigure}
    \hfill
    \caption{Comparison of CBQ and MOBQ in terms of RMSE (first row) and computational time (second row). \textbf{Left (a):} Bayesian sensitivity analysis for linear models. \textbf{Middle (b):} Option pricing in mathematical finance. \textbf{Right (c):} Uncertainty decision making in health economics.}
\end{figure}

\subsection{Comparison of Conditional Bayesian Quadrature and Multi-Output Bayesian Quadrature}\label{appendix:cbq_mobq}

In Section \ref{sec:cbq} in the main text, we mentioned a comparison of CBQ and multi-output Bayesian quadrature~\cite{xi2018bayesian} (MOBQ) in terms of their computational complexity. 
For $T$ parameter values $\theta_1, \cdots, \theta_T$ and $N$ samples from each probability distribution $\mathbb{P}_{\theta_1}, \ldots, \mathbb{P}_{\theta_T}$, the computational cost is $\calO(TN^3 + T^3)$ for CBQ and $\calO(N^3T^3)$ for MOBQ. 
We now give a more thorough comparison of CBQ and MOBQ in this section. 

When the integrand $f$ only depends on $x$ (Bayesian sensitivity analysis for linear models, option pricing in mathematical finance), MOBQ only requires one kernel $k_\calX$. 
\begin{align*}
    I_{\mathrm{MOBQ}}(\theta^\ast) = \left(\int_\calX k_\calX(x, x_{1:NT}) \Pb_{\theta^\ast}(dx) \right) \Big(k_\calX(x_{1:NT}, x_{1:NT}) + \lambda_\calX \Id_{NT} \Big)^{-1} f(x_{1:NT})
\end{align*}
where $x_{1:NT} \in \R^{NT}$ is a concatenation of $x_{1:N}^1, \cdots, x_{1:N}^T$.
When the integrand $f$ depends on both $x$ and $\theta$ (uncertainty decision making in health economics), MOBQ requires two kernels $k_\calX$ and $k_\Theta$.
\begin{align*}
\begin{aligned}
    I_{\mathrm{MOBQ}}(\theta^\ast) &= \Big( \int_\calX k_\calX(x, x_{1:NT})    \odot k_\Theta(\theta^\ast, \theta_{1:NT}) \Pb_{\theta^\ast}(dx)  \Big) \\ &
    \Big(k_\calX(x_{1:NT}, x_{1:NT}) \odot k_\Theta(\theta_{1:NT}, \theta_{1:NT})  + \lambda_\calX \Id_{NT} \Big)^{-1} f(x_{1:NT})
\end{aligned}
\end{align*}
where $\odot$ denotes element-wise product, and $\theta_{1:NT} = \left[\theta_1, \cdots, \theta_1, \cdots, \theta_T, \cdots, \theta_T \right] \in \R^{NT}$.
From the above two equations, we can see that the computation cost of $\calO(N^3T^3)$ mainly comes from the inversion of a $NT \times NT$ kernel matrix.
All the MOBQ hyperparameters in $k_\calX$ and $k_\Theta$ are selected by empirical Bayes in the same way as CBQ outlined in \Cref{appendix:hyperparameter_selection}.
It's crucial to note that the MOBQ computational cost is significantly higher for Stein reproducing kernel during hyperparameter selection (an approach analogous to the ``vector-valued control variates'' of \cite{Sun2021}), as evaluating the log marginal likelihood at every iteration would require the inversion of a $NT \times NT$ matrix.
Therefore, we do not include the experiment of Bayesian sensitivity analysis for the SIR model in this section.
All the hyperparameters for CBQ are reused as in \Cref{appendix:experiments}.

For Bayesian sensitivity analysis in linear models, the integrand is $f(x) = x^\top x$, the dimension is fixed $d=2$ and $T=50$.
In \Cref{appfig:mobq_bayes_sensitivity}, we can see that MOBQ indeed achieves lower RMSE at the beginning, but CBQ catches up when $N$ grows higher.
For option pricing in mathematical finance, we only compare MOBQ and CBQ when $k_\calX$ is the log Gaussian kernel and $T=20$.
For uncertainty decision making in health economics, we compare MOBQ and CBQ when $T=50$.
In \Cref{appfig:mobq_finance} and \Cref{appfig:mobq_decision}, we can see that CBQ and MOBQ achieves similar performances in terms of RMSE.
Additionally, in the second row of \Cref{appfig:mobq}, we compare the computational cost of MOBQ and CBQ, where we can see that the computational time of MOBQ is much larger than CBQ as $N$ grows across all settings, due to the complexity of $\calO(N^3T^3)$ for MOBQ.

Additionally, as the main computational bottleneck of MOBQ is the inversion of the kernel matrix, so it would be interesting to see if MOBQ combined with scalable GP methods can reduce the computational time while still preserving the same level of accuracy. 
The scalable approximation method used here is Nyström approximation~\cite{williams2000using}.
We report the performance of MOBQ (Nyström) in both \Cref{appfig:mobq_bayes_sensitivity} and \Cref{appfig:mobq_finance}, and we can see that MOBQ (Nyström) performs worse than CBQ in terms of RMSE.
The reason of worse performance of MOBQ (Nyström) is that the use of scalable GP methods would introduce an extra layer of approximation that slows down the convergence rate.
Additionally, most scalable GP methods are used in the “regression” setting, while quadrature methods like BQ or CBQ belong to the “interpolation" setting~\cite{kanagawa2018gaussian}, so the quadrature problem will be more sensitive to the approximation error introduced.

\subsection{Quasi Monte Carlo}\label{appendix:QMC}

Quasi Monte Carlo (QMC) is another line of research on improving the precision of approximating intractable integrals. 
While quadrature methods like BQ and CBQ aim at finding a smart way to combine the function values, QMC aims to find samples that can more uniformly cover the integration domain than random sampling~\citep{niu2023discrepancy, hickernell1998generalized, gerber2015sequential}. 
In the development of CBQ, we don't make any assumptions about the sampling of observations; specifically, we don't mandate i.i.d sampling. 
Therefore, it would be interesting to see whether combining quadrature algorithms with QMC could further improve the accuracy for estimating conditional expectation.

For a fair comparison in the experiment of Bayesian sensitivity analysis for linear models, we implement QMC sampling for all methods including CBQ and baseline methods. 
The samples $x_{1:N}^t$ are generated from a Sobol sequence which is a low-discrepancy sequence commonly used in QMC to cover the multidimensional space more uniformly than random sequences.
We follow the technique introduced in randomized QMC~\cite{lemieux2004randomized} to shift the Sobol sequence by a random amount.

It can be observed in \Cref{appfig:qmc} that replacing random sampling with QMC significantly enhances the performance of baseline methods, such as LSMC and KLSMC, while subtly improves the performance of CBQ. The limited degree of improvement seen in CBQ with QMC sampling can be attributed to the fact that CBQ already yields a remarkably low RMSE. Consequently, the margin of improvement offered by QMC sampling is not as evident in CBQ as in the baseline methods. We have only studied the effect of combining QMC and CBQ in the experiment of Bayesian sensitivity analysis in linear models. It would be interesting to see if combining QMC and CBQ would result in higher accuracy in other settings, and we leave it for future work. 

\begin{figure}[t]
    \begin{minipage}{1.0\textwidth}
    \includegraphics[width=250pt]{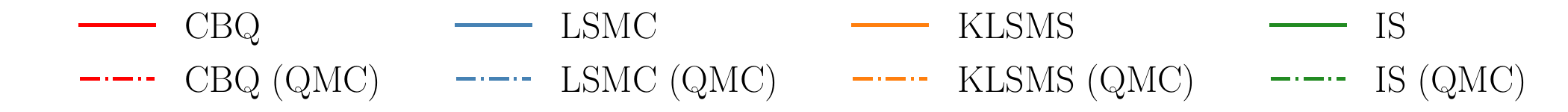}
    \end{minipage}
    
    \begin{subfigure}{0.30\textwidth}
        \centering
        \includegraphics[width=\linewidth]{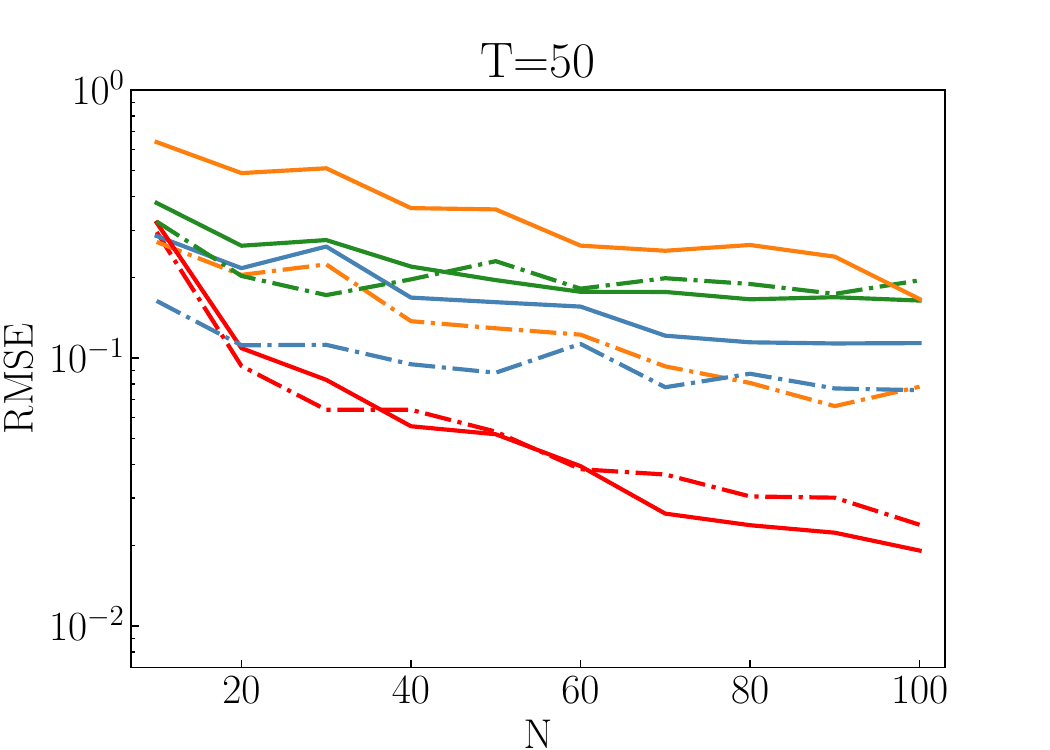}
        \caption{Quasi Monte Carlo}
        \label{appfig:qmc}
    \end{subfigure}
t    \begin{subfigure}{0.30\textwidth}
        \centering
        \includegraphics[width=\linewidth]{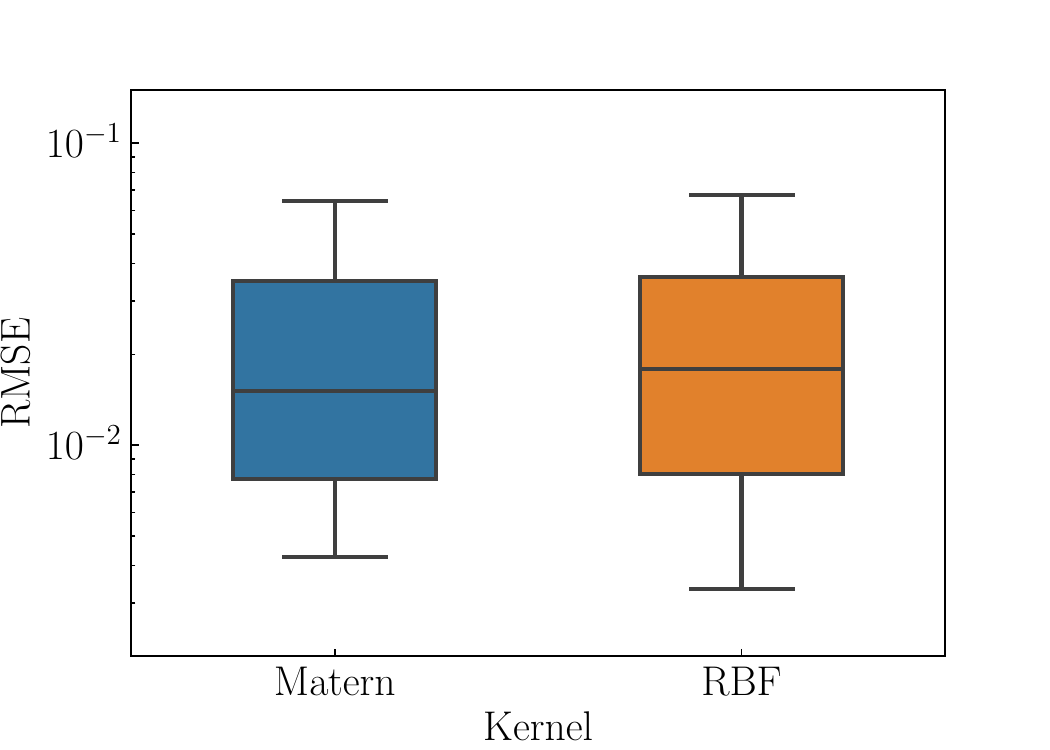}
        \caption{Ablation on kernel $k_\Theta$}
        \label{appfig:ablation_theta}
    \end{subfigure}
    \begin{subfigure}{0.30\textwidth}
        \centering
        \includegraphics[width=\linewidth]{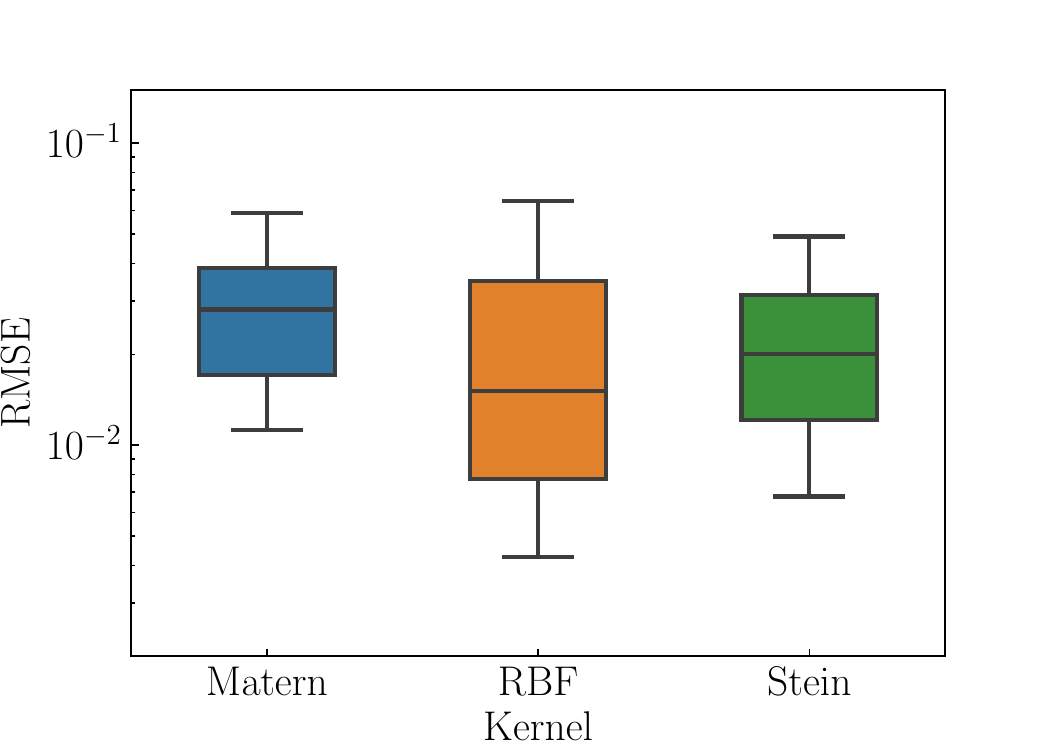}
        \caption{Ablation on kernel $k_\calX$}
        \label{appfig:ablation_x}
    \end{subfigure}
    \caption{\textbf{Left:} Comparison of all methods with standard i.i.d. sampling and Quasi-Monte Carlo samples. \textbf{Middle and Right:} Ablation study for CBQ with different $k_\Theta$ and $k_\calX$ kernels in Bayesian sensitivity analysis for linear models.}
\end{figure}

\subsection{Ablations on kernels}\label{appendix:ablation}

We present an ablation study evaluating the impact of distinct kernel choices $k_\calX$ and $k_\Theta$ within the framework of Bayesian sensitivity analysis in linear models. The integrand is $f(x)=x^\top x$, the dimension $d=2$ and $N=T=50$. 
First, we choose $k_\Theta$ to be Mat\'ern-3/2 kernel and Gaussian kernel. \Cref{appfig:ablation_theta} shows that the performance of CBQ remains consistent across different $k_\Theta$ kernels. 

Subsequently, we opt for Mat\'ern-3/2 kernel, Gaussian kernel and Stein kernel (with Mat\'ern-3/2 as the base kernel) as choices for $k_\calX$. When $k_\calX$ is Gaussian kernel, the formula for kernel mean embedding $\mu_\theta(x)$ is presented in \Cref{appeq:E14}. When $k_\calX$ is Mat\'ern-3/2 kernel, a closed form expression for the kernel mean embedding does not exist for the non-isotropic Gaussian distribution $\calN(\tilde{m}, \tilde{\Sigma})$, but the 'inverse transform trick' can be employed as in \Cref{appeq:transform}. 
When $k_\calX$ is Stein kernel, we choose Mat\'ern-3/2 as the base kernel and then apply Stein operator on both arguments of kernel $k_0$. All hyperparameters are selected according to \Cref{appendix:hyperparameter_selection}.
From \Cref{appfig:ablation_x}, we can see that CBQ performs best when $k_\calX$ is Mat\'ern-3/2 kernel, and we know that $k_\calX$ being Mat\'ern-3/2 kernel satisfies the assumptions of \Cref{thm:convergence}. When $k_\calX$ is Gaussian RBF kernel or Stein kernel, whether the assumptions of \Cref{thm:convergence} still hold is unknown, but in this ablation study, CBQ under both kernels have shown good performances in terms of RMSE.
The ablation study is only implemented in this very simple setting, so we encourage practitioners to be careful in the selection of kernels in real world applications.

\subsection{Calibration}\label{appendix:calibration}
CBQ falls in the area of probabilistic numeric algorithms that can provide finite-sample Bayesian quantification of uncertainty, where the uncertainty arises from having access to only a finite number of function values of the integrand.
Since CBQ is a two-stage hierarchical Gaussian process method in nature, and the final estimate $I_{\textrm{CBQ}}$ is treated as Gaussian distributed, so the standard deviation $\sigma^2_{\textrm{CBQ}}$ is a measure of uncertainty~\cite{kendall2017uncertainties}.
The calibration plots in \Cref{appfig:calibration} are obtained by altering the width of the credible interval and then computes the percentage of times a credible interval contains the true value $I(\theta)$ under repetitions of the experiment.
The black diagonal line represents the ideal case, with any curve lying above the black line indicating underconfidence and any curve lying below indicating overconfidence.
It is generally regarded more preferable to be underconfident than overconfident. 

In \Cref{appfig:calibration_bayes_sensitivity}, we show the calibration of the CBQ posterior for the integrand $f(x)=x^\top x$ when dimension $d=2$. 
We observe that when the number of samples is as small as $10$, CBQ is overconfident, which can be explained by the poor performance of using empirical Bayes to select hyperparameters in the small sample regime. 
On the other hand, when $N$ and $T$ increase, CBQ becomes underconfident, meaning that our posterior variance is more inflated than needed from a frequentist viewpoint.
The calibration plots for other experiments are all demonstrated in \Cref{appfig:calibration}, and the conclusions are consistent across different experiments.

\begin{figure}[t]
\centering
    \begin{subfigure}{0.23\textwidth}
        \centering
        \includegraphics[width=\linewidth]{figures/calibration_bayes.pdf}
        \caption{Calibration}
    \label{appfig:calibration_bayes_sensitivity}
    \end{subfigure}
    \begin{subfigure}{0.23\textwidth}
        \centering
        \includegraphics[width=\linewidth]{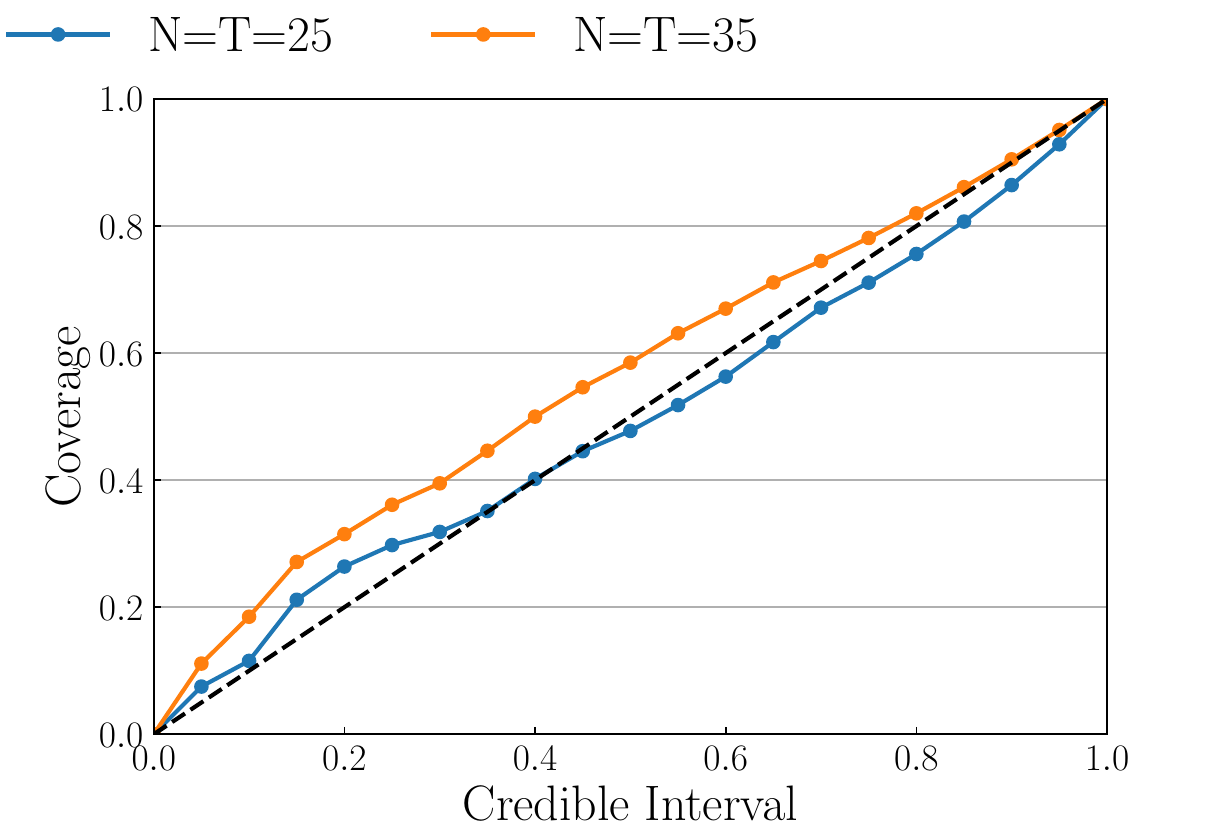}
        \caption{Calibration}
        \label{appfig:calibration_finance}
    \end{subfigure}
    % \hfill
    \begin{subfigure}{0.23\textwidth}
        \centering
        \includegraphics[width=\linewidth]{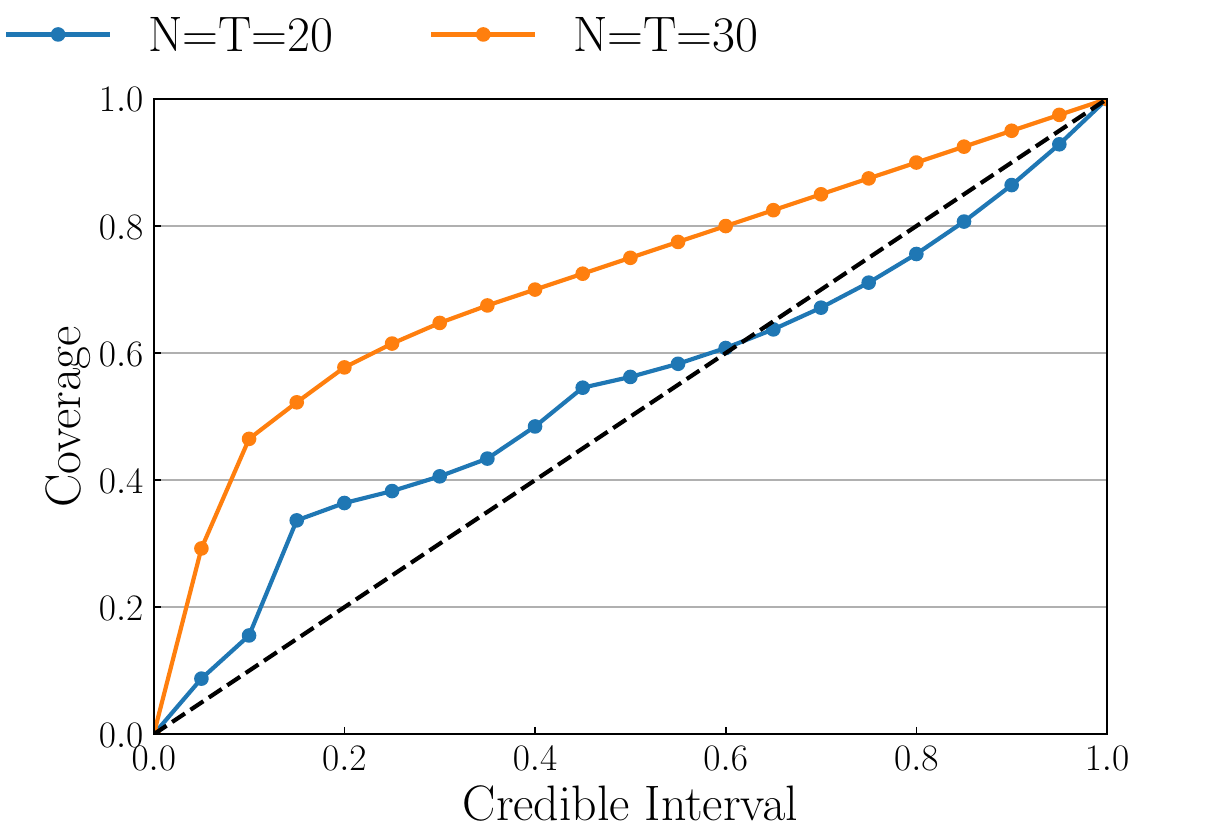}
        \caption{Calibration}
        \label{appfig:calibration_sir}
    \end{subfigure}
    \begin{subfigure}{0.23\textwidth}
        \centering
        \includegraphics[width=\linewidth]{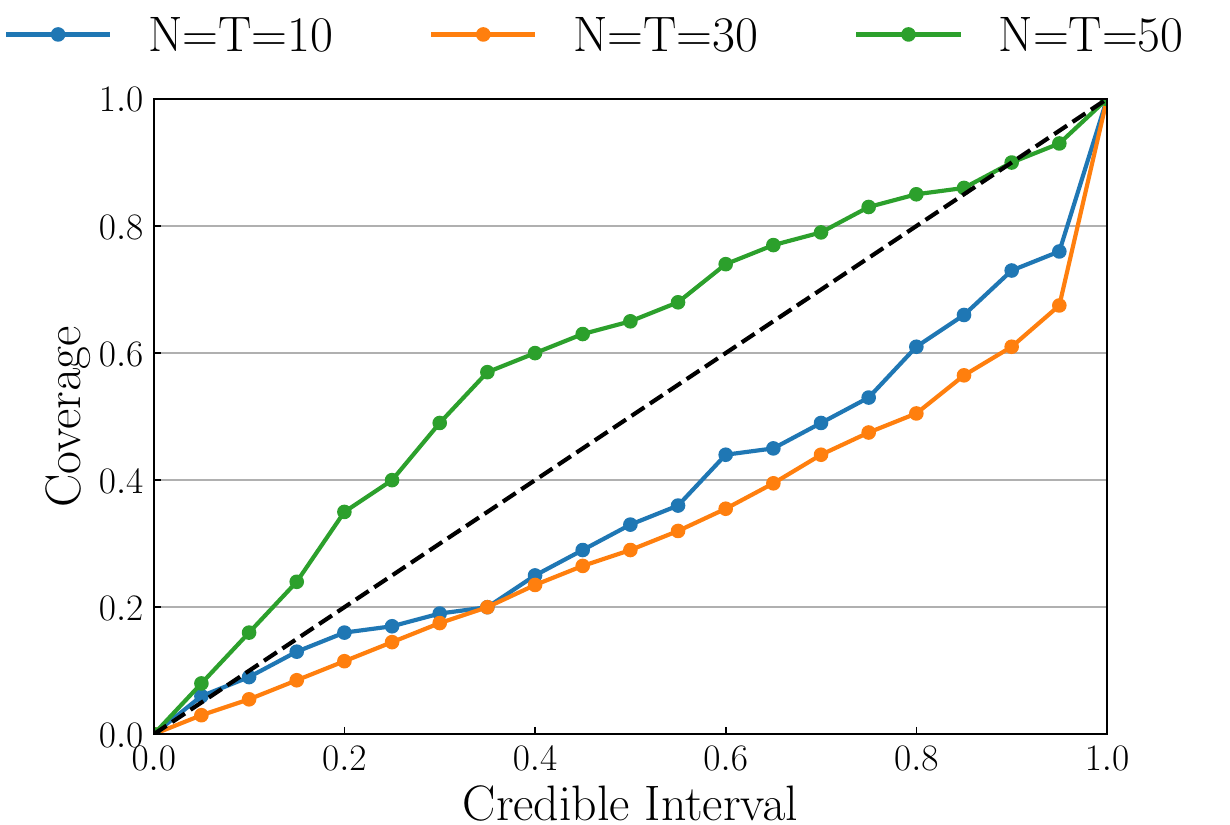}
        \caption{Calibration}
        \label{appfig:calibration_decision}
    \end{subfigure}
    \caption{Calibration plots. \textbf{Top Left:}  Bayesian sensitivity analysis in linear models. \textbf{Top Right:} Bayesian sensitivity analysis for SIR model. \textbf{Bottom Left:} Option pricing in mathematical finance. \textbf{Bottom Right:} Uncertainty decision making in health economics.}
    \label{appfig:calibration}
\end{figure}

%%%%%%%%%%%%%%%%%%%%%%%%%%%%%%%%%%%%%%%%%%%%%%%%%%%%%%%%%%%%%%%%%%%%%%%%%%%%%%%%%%%%%%%%%%

\end{appendices}
\end{document}